\newcommand{\tx}{\tilde{x}}
\newcommand{\tc}{\tilde{c}}
\newcommand{\tepsilon}{\tilde{\epsilon}}
\newcommand{\tdelta}{\tilde{\delta}}
\newcommand{\tX}{\tilde{X}}
\newcommand{\tf}{\tilde{f}}
\newcommand{\tD}{\widetilde{D}}
\newcommand{\tb}{\tilde{b}}
\newcommand{\td}{\tilde{d}}
\renewcommand{\tf}{\tilde{f}}
\newcommand{\cN}{\mathcal{N}}
\newcommand{\cF}{\mathcal{F}}
\newcommand{\cO}{\mathcal{O}}
\newcommand{\EE}{\mathbb{E}}
\newcommand{\RR}{\mathbb{R}}
\newcommand{\SSS}{\mathbb{S}}
\newcommand{\ind}[1]{\mathbb{I}_{\{#1\}}}
\newcommand{\dvl}{\nabla_{V_{\ell}}}
\newcommand{\dwl}{\nabla_{W_{\ell}}}
\newcommand{\diag}{{\rm diag}}
\newcommand{\mattwo}[4]{\left[\begin{matrix}#1 & #2 \\ #3 & #4\end{matrix}\right]}
\newcommand{\inner}[2]{\langle #1, #2 \rangle}
\newcommand{\biginner}[2]{\big\langle #1, #2 \big\rangle}
\DeclareMathOperator{\sign}{sign}
\DeclareMathOperator{\Tr}{Tr}
\newtheorem{definition}{Definition}
\newtheorem{prop}{Proposition}
\newtheorem{lem}{Lemma}
\newtheorem{theorem}{Theorem}
\newtheorem{rmk}{Remark}
\title{Why Do Deep Residual Networks Generalize Better than Deep Feedforward Networks? --- A Neural Tangent Kernel Perspective}
\author{%
  Kaixuan Huang\thanks{Equal contribution.}\\
  Peking University\\
  \texttt{hackyhuang@pku.edu.cn} \\
  \And 
  Yuqing Wang\footnotemark[1] \\
  Georgia Institute of Technology\\
  \texttt{ywang3398@gatech.edu  } \\ 
  \AND
  Molei Tao \\
  Georgia Institute of Technology\\
  \texttt{mtao@gatech.edu  } \\
  \And
  Tuo Zhao \\
  Georgia Institute of Technology\\
  \texttt{tourzhao@gatech.edu  } \\
}
\begin{document}

\maketitle

\begin{abstract}
Deep residual networks (ResNets) have demonstrated better generalization performance than deep feedforward networks (FFNets). However, the theory behind such a phenomenon is still largely unknown. This paper studies this fundamental problem in deep learning from a so-called ``neural tangent kernel'' perspective. Specifically, we first show that under proper conditions, as the width goes to infinity, training deep ResNets can be viewed as learning reproducing kernel functions with some kernel function. We then compare the kernel of deep ResNets with that of deep FFNets and discover that the class of functions induced by the kernel of FFNets is asymptotically not learnable, as the depth goes to infinity. In contrast, the class of functions induced by the kernel of ResNets does not exhibit such degeneracy. Our discovery partially justifies the advantages of deep ResNets over deep FFNets in generalization abilities. Numerical results are provided to support our claim.
\end{abstract}


\vspace{-0.2in}
\section{Introduction}
\vspace{-0.1in}


Deep Neural Networks (DNNs) have made significant progress in a variety of real-world applications, such as computer vision \citep{krizhevsky2012imagenet, goodfellow2014generative, Long_2015_CVPR}, speech recognition, natural language processing \citep{graves2013speech, bahdanau2014neural, young2018recent}, recommendation systems, etc. Among various network architectures, Residual Networks (ResNets, \citep{he2016deep}) are undoubtedly a breakthrough. Residual Networks are equipped with residual connections, which skip layers in the forward step. Similar ideas based on gating mechanisms are also adopted in Highway Networks \citep{srivastava2015training}, and further inspire many follow-up works such as Densely Connected Networks \citep{huang2017densely}.

Compared with conventional Feedforward Networks (FFNets), residual networks demonstrate surprising generalization abilities. Existing literature rarely considers deep feedforward networks with more than 30 layers. This is because many experimental results have suggested that very deep feedforward networks yield worse generalization performance than their shallow counterparts \citep{he2016deep}. In contrast, we can train residual networks with hundreds of layers, and achieve better generalization performance than that of feedforward networks. For example, ResNet-152 \citep{he2016deep}, achieving a $19.38\%$ top-1 error on the ImageNet data set, consists of 152 layers; ResNet-1001 \citep{he2016identity}, achieving a $4.92\%$ error on the CIFAR-10 data set, consists of 1000 layers. 

Despite the great success and popularity of the residual networks, the reason why they generalize so well is still largely unknown. There have been several lines of research attempting to demystify this phenomenon. One line of research focuses on empirical studies of residual networks, and provides intriguing observations. For example, \cite{veit2016residual} show that residual networks behave like an ensemble of weakly dependent networks of much smaller sizes, and meanwhile, they also show that the gradient vanishing issue is also significantly mitigated due to these smaller networks. \cite{balduzzi2017shattered} further provide a more refined elaboration on the gradient vanishing issue. They demonstrate that the gradient magnitude in residual networks only shows sublinear decay (with respect to the layer), which is much slower than the exponential decay of gradient magnitude in feedforward neural networks. \cite{li2018visualizing} propose a visualization approach for analyzing the landscape of neural networks, and further demonstrate that residual networks have smoother optimization landscape due to the skip-layer connections. 


Another line of research focuses on theoretical investigations of residual networks under {\it simplified network architectures}. A commonly adopted structure, which is a reformulation of FFNets, is 
\begin{equation}
    x_\ell = \phi (x_{\ell-1} + \alpha W_\ell x_{\ell-1}), \label{eq:res:type1}
\end{equation}
where $\ell$ is the number of layers and the skip-connection only bypasses the weight matrix $W_\ell$ at each layer \citep{allen2018convergence, zhang2019convergence, hardt2016identity,li2017convergence, liu2019towards}. Specifically, \cite{hardt2016identity} study the optimization landscape with linear activation; \cite{li2017convergence} study using Stochastic Gradient Descent (SGD) to train a two-layer ResNet. \cite{liu2019towards} study using Gradient Descent (GD) to train a two-layer non-overlapping residual network. \cite{allen2018convergence, zhang2019convergence} both take the perturbation analysis approach to show convergence of such ResNets. A more realistic structure is 
\begin{equation}
    x_\ell = x_{\ell-1} + \phi (\alpha W_\ell x_{\ell-1}), \label{eq:res:type2}
\end{equation}
where the skip-connection bypasses the activation function \citep{du2018gradient_2, frei2019algorithm}. \cite{frei2019algorithm} only consider separable setting and take the perturbation analysis to show the convergence and generalization property of such ResNet. These results, however, are only loosely related to the generalization abilities of residual networks, and often considered to be overoptimistic, due to the oversimplified assumptions.

Some more recent works provide a new theoretical framework for analyzing {\it overparameterized} neural networks \citep{jacot2018neural,arora2019exact, arora2019fine, allen2019can, allen2018convergence, allen2018learning, li2018learning, zou2018stochastic}. They focus on connecting two- or three-layer overparameterized (sufficiently wide) neural networks to {\it reproducing kernel Hilbert spaces}. Specifically, they show that under proper conditions, the weight matrices of a well trained overparameterized neural network (achieving any given small training error) are actually very close to their initialization. Accordingly, the training process can be described as searching within some class of reproducing kernel functions, where the associated kernel is called the ``{\it neural tangent kernel}'' ({\it NTK}, \citep{jacot2018neural}) and only depends on the initialization of the weights. Accordingly, the generalization properties of the overparameterized neural network are equivalent to those of the associated NTK function class. Based on such a framework, \cite{du2018gradient_2} derived the NTK of the ResNet \eqref{eq:res:type2} when only the last layer is trained, and proved the convergence of such ResNet. However, they did not provide an explicit formula for the NTK when all layers are trained, which is required for characterizing the generalization property of ResNets.


To better understand the generalization abilities of deep feedforward and residual networks, we propose to investigate the NTKs associated with these networks when all but the last layers are trained, and consider the case when both widths and depths go to infinity\footnote{More precisely, our analysis considers the regime, where the widths go to infinity first, and then the depths go to infinity. See more details in Section~\ref{sec:limitingntk}.}.
For the structure of ResNets, we adopt \eqref{eq:res:type2} only with a slight modification, since it captures the essence of the skip-connection; see Section~\ref{sec:back}
\begin{align}
 x_\ell = x_{\ell-1} + \alpha \sqrt{\frac{1}{m}} V_\ell \sigma_0 \Big( \sqrt{\frac{2}{m}} W_\ell x_{\ell-1} \Big).
\end{align}
Specifically, we prove that similar to what has been shown for feedforward networks \citep{jacot2018neural}, as the width of deep residual networks increases to infinity, training residual networks can also be viewed as learning reproducing kernel functions with some NTK. However, such an NTK associated with the residual networks exhibits a very different behavior from that of feedforward networks.

To demonstrate such a difference, we further consider the regime, where the depths of both feedforward and residual networks are allowed to increase to infinity. Accordingly, both NTKs associated with deep feedforward and residual networks converge to their limiting forms sublinearly (in terms of the depth). For notational simplicity, we refer to the limiting form of the NTKs as the limiting NTK. Besides asymptotic analysis, we also provide nonasymptotic bounds, which demonstrate equivalence between limiting NTKs and neural networks with sufficient depth and width.

When comparing their limiting NTKs, we find that the class of functions induced by the limiting NTKs associated with deep feedforward networks is essentially not learnable. Such a class of functions is sufficient to overfit training data. Given any finite sample size, however, the learned function cannot generalize. In contrast, the class of functions induced by the limiting NTKs associated with deep residual networks does not exhibit such degeneracy. Our discovery partially justifies the advantages of deep residual networks over deep feedforward networks in terms of generalization abilities. Numerical results are provided to support our claim. 

Our work is closely related to \cite{daniely2016toward}. They also investigate the so-called ``Gaussian Process'' kernel induced by feedforward networks under the regime where the depth is allowed to increase to infinity. However, their studied neural networks are essentially some specific implementations of the reproducing kernels using random features, since the training process only updates the last layer of the neural networks, and keeps other layers unchanged. In contrast, we assume the training process updates all layers except for the last layer.

\noindent {\bf Notations}: We use $\sigma_0(z)=\max(0,z)$ to denote the ReLU activation function in neural networks. We use $\sigma(z)$ to denote the normalized ReLU function $\sigma(z) = \sqrt{2}\max(0,z)$. The derivative
\footnote{Although the ReLU function $\sigma_0$ is not differentiable at $0$, we call $\sigma_0'$ derivative for notational convenience.} 
of ReLU function (step function) is $\sigma_0'(z) = \ind{z\geq 0}$. Then $\sigma'(z) = \sqrt{2}\ind{z\geq 0}$ is the normalized step function. We use $D$ to denote the input dimension and $\SSS^{D-1}$ to denote the unit sphere in $\RR^D$. We use $m$ to denote the network width (the number of neurons at each layer) and $L$ to denote the depth. Let $\mathcal{M}^2_+$ be the set of all $2\times 2$ positive semi-definite matrices. We use $\mathcal{F}$ to denote the set of all symmetric and positive semi-definite functions from $\RR^{D}\times\RR^{D}$ to $\RR$. We use $\|\cdot\|_{\max}$ to denote the entry-wise $\ell_\infty$ norm for matrices and use $\|\cdot\|$ to denote the $\ell_2$ norm for vectors and the spectral norm for matrices. We use $\mathrm{diag}(\cdot)$ to denote the diagonal matrix. We use $I_n$ to denote the $n\times n$ identity matrix. We use $x$ and $\tx$ to denote a pair of inputs. We use $x_\ell$ and $\tx_\ell$ to denote the output of the $\ell$-th layer of a network for the input $x$ and $\tx$, respectively. We use $f$ and $\tf$ to denote the final output of the network for $x$ and $\tx$, respectively. We use $\nabla_\theta f = \nabla_\theta f_\theta(x)$ to denote the derivative of parametrized model $f_\theta$ w.r.t. $\theta$ at the input $x$, and  $\nabla_\theta \tf$ to denote the counterpart at the input $\tx$. 

\vspace{-0.15in}
\section{Background}\label{sec:back}
\vspace{-0.1in}

For self-containedness, we first briefly review feedforward networks, residual networks and dual kernels associated with neural networks.


\vspace{-0.05in}

\noindent {\bf Feedforward Networks}. We define an $L$-layer feedforward network (FFNet) $f(x)$ with ReLU activation in a recursive manner,
\vskip -0.2in
\begin{align}
\label{eq:ffnet_def}
   x_0 = x; \ 
    x_\ell = \sqrt{\frac{2}{m}}\sigma_0(W_\ell x_{\ell-1}),\   \ell=1,\cdots,L;\ 
    f(x) = v^\top x_L,
\end{align}
\vskip -0.05in
where $W_1 \in \RR^{m \times D}$ and $W_2,\cdots,W_L \in \RR^{m\times m}$ are weight matrices, and $v \in \RR^{m}$ is the output weight vector. For simplicity, we only consider feedforward networks with scalar outputs.

\vspace{-0.05in}

\noindent {\bf Residual Networks}. We define an $L$-layer residual network (ResNet) $f(x)$ in a recursive manner, 
\vskip -0.2in
\begin{align}
\label{eq:resnet_def}
   x_0 = \sqrt{\frac{1}{m}} Ax;\ 
    x_\ell = x_{\ell-1} + \alpha \sqrt{\frac{1}{m}} V_\ell \sigma_0 \Big( \sqrt{\frac{2}{m}} W_\ell x_{\ell-1} \Big),~\ell=1,\cdots,L;\ 
    f(x) &= v^\top x_L,
\end{align}
\vskip -0.05in
where $W_\ell, V_\ell \in \RR^{m\times m}$ for $\ell=1,\cdots,L$, $A \in \RR^{m \times D}$, $v \in \RR^{m}$, and $\alpha = L^{-\gamma}$ is the scaling factor of the bottleneck layers. The scaling factor $\alpha$ is necessary for controlling the norm of $x_l$.

The network architecture in \eqref{eq:resnet_def} is similar to the ``pre-activation" shortcuts in \cite{he2016identity}, except that each bottleneck layer only contains one activation - between $W_\ell$ and $V_\ell$. We remove the activation of the input due to some technical issues (See more details in Section~\ref{sec:ntks}).

\vspace{-0.05in}

\noindent {\bf Dual and Normalized Kernels}. The dual kernel technique was first proposed in \cite{NIPS2009_3628} and motivated several follow-up works such as \cite{daniely2016toward, mairal2014convolutional}. Here we adopt the description in \cite{daniely2016toward}.  We use $K$ to denote a kernel function on the input space $\RR^D$, i.e., $K:\RR^D\times \RR^D\rightarrow \RR$. We denote
\vskip -0.1in
$$\Sigma(x,\tx)=\begin{pmatrix}
     K(x,x) & K(x,\tx) \\
        K(\tx,x) & K(\tx,\tx)
\end{pmatrix}~\textrm{and}~N_\rho=\begin{pmatrix}
     1 & \rho \\
        \rho & 1
\end{pmatrix},$$ where $K\in \mathcal{F}$, $\rho\in\RR$. Given an activation function $\phi :\RR \to \RR$, its dual activation function $\hat{\phi}: [-1,1]\to[-1,1]$ is defined to be $
\hat{\phi}(\rho)=\EE_{(X,\tilde{X})\sim\mathcal{N}(0,N_\rho)} \phi(X)\phi(\tilde{X}).
$

We then define the dual kernel as follows.
\begin{definition} We say that $\Gamma_\phi(K): \RR^D\times\RR^D \to \RR$ is the dual kernel of $K$ with respect to the activation $\phi$, if we have
$
    \Gamma_\phi(K) (x,\tx) =  \EE_{(X,\tilde{X})\sim\mathcal{N}(0,\Sigma(x,\tx))} \phi(X)\phi(\tilde{X}).
$

\end{definition} 
Note that $\Gamma_\phi(K)$ is also positive semi-definite. We also define the normalized kernel.
\begin{definition}
We say that a kernel $K \in \cF$ is {\it normalized}, if $K(x,x) = 1$ for all $x \in \RR^D$. For a general kernel $K \in \cF$, we define its normalized kernel by $\overline{K}$ where  
$
    \overline{K} (x,\tx) = \frac{K(x,\tx)}{\sqrt{K(x,x)K(\tx,\tx)}}.
$
\end{definition}

For {\it normalized ReLU function} $\sigma(z) = \sqrt{2} \max(0,z)$, \cite{daniely2016toward} show
$
    \hat{\sigma}(\rho) = \frac{\sqrt{1-\rho^{2}}+\left(\pi-\cos ^{-1}(\rho)\right) \rho}{\pi}.
$
Since $\sigma(z)$ is positive homogeneous, we have
$
\Gamma_\sigma(K)(x,\tx) = \sqrt{K(x,x)K(\tx,\tx)} \  \hat{\sigma}(\overline{K}(x,\tx)).
$
For {\it derivative of normalized ReLU function} $\sigma'(z) = \sqrt{2} \ind{z\geq 0}$, \cite{daniely2016toward} show that 
$
 \widehat{\sigma'}(\rho) = \frac{\pi-\cos ^{-1}(\rho)}{\pi}.
$
Since $\sigma'(z)$ is zeroth-order positive homogeneous, we have
$
\Gamma_{\sigma'}(K)(x,\tx) =  \widehat{\sigma'}(\overline{K} (x,\tx)).
$ For more technical details of the dual kernel, we refer the readers to \cite{daniely2016toward}.


\vspace{-0.15in}
\section{Neural Tangent Kernels of Deep Networks}
\label{sec:ntks}
\vspace{-0.1in}

There are two approaches to connecting neural networks to kernels: one is {\it Gaussian Process Kernel} (GP Kernel); the other is {\it Neural Tangent Kernel} (NTK). GP Kernel corresponds to the regime where the first $L$ layers are fixed after random initialization, and only the last layer is trained. Therefore, the first $L$ layers are essentially random feature mapping \cite{rahimi2008random}. This is inconsistent with the practice, as the first $L$ layers should also be trained. In contrast, NTK corresponds to the regime where the first $L$ layers are also trained. For both GP Kernel and NTK, we consider the case when the width of the neural network goes to infinity. Due to space limit, we only provide some proof sketches for our theory, and all technical details are deferred to the appendix.

\vspace{-0.15in}
\subsection{Feedforward Networks}
\vspace{-0.1in}

We consider the Feedforward Network (FFNet) defined in \eqref{eq:ffnet_def}, where $W_1 \in \RR^{m \times D}$, $W_2,\cdots,W_L \in \RR^{m\times m}$ and $v \in \RR^{m}$ are all initialized as i.i.d. $\cN(0,1)$ variables.\footnote{In general, the weight matrices do not need to be square matrices, nor do they need to be of the same size.} Given such random initialization, the outputs converge to a Gaussian process, as the width goes to infinity \citep{lee2017deep,jacot2018neural}. Accordingly, the GP kernel is defined as follows.

\begin{prop}[\cite{daniely2016toward,jacot2018neural}]\label{GP-Kernel-FFNet}
The GP kernel of the $L$-layer FFNet defined in \eqref{eq:ffnet_def} is
\vskip -0.2in
\begin{align}
\label{eq:feednn_gpk}
    K_0(x,\tx) = x^\top \tx;\  
    K_{\ell}(x,\tx) = \Gamma_{\sigma}(K_{\ell-1})(x,\tx),\ \ell=1,\cdots,L.
    \end{align}
\end{prop}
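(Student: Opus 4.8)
The plan is to prove the recursion by induction on the layer index, tracking the limiting covariance of the pre-activations as the width $m\to\infty$. Write $h_\ell=W_\ell x_{\ell-1}\in\RR^m$ for the pre-activation at layer $\ell$, so that $x_\ell=\sqrt{2/m}\,\sigma_0(h_\ell)$ and $f(x)=v^\top x_L$; for a kernel $K\in\cF$ let $\Sigma_K(x,\tx)$ denote the $2\times 2$ matrix $\Sigma(x,\tx)$ that appears in the definition of the dual kernel. The statement I will carry through the induction is: for every pair of inputs $x,\tx$, the coordinates $(h_\ell[k],\tilde{h}_\ell[k])$ converge jointly over the two inputs, and independently across $k$ in the limit, to a centered bivariate Gaussian with covariance $\Sigma_{K_{\ell-1}}(x,\tx)$. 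Granting this at layer $L$, the proposition follows: since $v$ has i.i.d.\ $\cN(0,1)$ entries independent of $x_L,\tx_L$, we have $\EE\!\left[f(x)f(\tx)\mid x_L,\tx_L\right]=x_L^\top \tx_L=\tfrac{2}{m}\sum_{k}\sigma_0(h_L[k])\sigma_0(\tilde{h}_L[k])$, which by the law of large numbers converges to $\EE_{(X,\tilde{X})\sim\cN(0,\Sigma_{K_{L-1}})}[\sigma(X)\sigma(\tilde{X})]=\Gamma_\sigma(K_{L-1})(x,\tx)=K_L(x,\tx)$, using $\sigma=\sqrt{2}\,\sigma_0$.

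For the base case $\ell=1$, the entries of $W_1$ are i.i.d.\ $\cN(0,1)$ and the inputs are fixed, so $h_1[k]=\sum_j W_1[k,j]x[j]$ is exactly $\cN(0,x^\top x)$, its coordinates are independent, and $\EE[h_1[k]\tilde{h}_1[k]]=x^\top\tx=K_0(x,\tx)$; this is precisely $\cN(0,\Sigma_{K_0}(x,\tx))$. For the inductive step, assume the claim at layer $\ell$. Since $x_\ell,\tx_\ell$ depend only on $W_1,\dots,W_\ell$, the matrix $W_{\ell+1}$ is independent of them, so conditionally on the previous layer each coordinate $h_{\ell+1}[k]=\sum_j W_{\ell+1}[k,j]x_\ell[j]$ is a sum of independent mean-zero terms whose conditional second moments are $x_\ell^\top\tx_\ell$ (for the cross term) and $\|x_\ell\|^2,\|\tx_\ell\|^2$ (for the diagonal terms). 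By the induction hypothesis and the law of large numbers, $x_\ell^\top\tx_\ell=\tfrac{2}{m}\sum_k\sigma_0(h_\ell[k])\sigma_0(\tilde{h}_\ell[k])\to\Gamma_\sigma(K_{\ell-1})(x,\tx)=K_\ell(x,\tx)$, and likewise $\|x_\ell\|^2\to K_\ell(x,x)$, $\|\tx_\ell\|^2\to K_\ell(\tx,\tx)$. A central limit theorem (of Lindeberg type) — whose negligibility condition holds because $\max_j x_\ell[j]^2=O(m^{-1}\log m)$ while $\|x_\ell\|^2$ stays bounded away from $0$ — then gives that $(h_{\ell+1},\tilde{h}_{\ell+1})$ has coordinates converging to i.i.d.\ $\cN(0,\Sigma_{K_\ell}(x,\tx))$, which is the claim at layer $\ell+1$.

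The step that requires the most care is making the interchange of limits legitimate: at finite width $h_\ell$ is only approximately Gaussian, so ``applying the law of large numbers inside the conditional expectation'' really means upgrading convergence in distribution of $(h_\ell,\tilde{h}_\ell)$ to convergence of the averages $\tfrac{2}{m}\sum_k\sigma_0(h_\ell[k])\sigma_0(\tilde{h}_\ell[k])$ to the corresponding Gaussian expectation. This can be done via a uniform-integrability argument: $\sigma_0$ is $1$-Lipschitz, so the second moments of $\sigma_0(h_\ell[k])^2$ remain bounded along the induction (propagated from the exact Gaussianity at layer $1$), which turns the distributional convergence into $L^2$ convergence of the relevant sums. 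Alternatively, one may directly invoke the Gaussian-process limits established in \cite{lee2017deep,jacot2018neural,daniely2016toward}, including their handling of the joint limit over finitely many inputs and of the sequential passage of the widths to infinity. The remaining points — extending from a single pair $(x,\tx)$ to finite tuples of inputs, and observing that every $K_\ell$ is symmetric and positive semi-definite, which is immediate since $\Gamma_\sigma$ maps $\cF$ into $\cF$ — are routine bookkeeping.
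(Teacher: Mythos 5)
The paper does not prove Proposition~\ref{GP-Kernel-FFNet} at all; it is stated as a citation to \cite{daniely2016toward,jacot2018neural}, and the subsequent Remark says explicitly that these results are directly adopted from the literature. So there is no ``paper's proof'' to compare against. What you have written is a correct sketch of the standard NNGP-limit argument: prove by induction that the preactivation coordinates $(h_\ell[k],\tilde h_\ell[k])$ become i.i.d.\ bivariate Gaussians with covariance $\Sigma_{K_{\ell-1}}$, using exact Gaussianity at the first layer, conditional independence of $W_{\ell+1}$ from the $\ell$-th layer output, and a law of large numbers plus CLT to push the claim one layer deeper; the readout $\inner{x_L}{\tx_L}\to K_L$ then gives the GP kernel. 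This is exactly the line of argument in \cite{lee2017deep,jacot2018neural}; it differs from the one in \cite{daniely2016toward}, which proves a finite-width concentration bound (the paper's Theorem~1) by iterating a local-Lipschitz estimate on the dual activation rather than appealing to a distributional limit. You are correctly honest about the two real gaps in a sketch at this level: (i) the interchange of the in-distribution Gaussian limit with the averaging $\tfrac{2}{m}\sum_k\sigma_0(h_\ell[k])\sigma_0(\tilde h_\ell[k])$ needs a uniform-integrability or moment-propagation argument, and (ii) the clean induction gives the \emph{sequential} wide-width limit, whereas the proposition is usually wanted for all widths growing together (handled, e.g., by the exchangeability arguments in the GP-limit literature or by the quantitative bound of Theorem~1). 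With those caveats stated, the proof is sound.
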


\begin{theorem}[\cite{daniely2016toward}]
For the FFNet defined in \eqref{eq:ffnet_def}, there exists an absolute constant $C$, given the width $m\geq C\epsilon^{-2}L^2\log(8L/\delta),$ with probability at least $1-\delta$ over the randomness of the initialization, for input $x,\tx$ on the unit sphere, the inner product of the outputs of the $\ell$-th layer can be approximated by $K_\ell(x,\tx)$, i.e., 
$$
    |\inner{x_\ell}{\tx_\ell} - K_\ell(x,\tx)| \leq \epsilon, \text{ for all } \ell=1,\cdots,L.
$$
\end{theorem}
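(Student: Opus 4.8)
The plan is to argue by induction on the layer index $\ell$, controlling not merely $\inner{x_\ell}{\tx_\ell}$ but the full $2\times 2$ Gram matrix $\widehat\Sigma_\ell := \mattwo{\|x_\ell\|^2}{\inner{x_\ell}{\tx_\ell}}{\inner{\tx_\ell}{x_\ell}}{\|\tx_\ell\|^2}$, and showing it stays within $\epsilon_\ell$ in $\|\cdot\|_{\max}$ of the deterministic matrix $\Sigma_\ell$ whose entries are $K_\ell(x,x),K_\ell(x,\tx),K_\ell(\tx,\tx)$. Since $x,\tx\in\SSS^{D-1}$, the base case $\ell=0$ is exact. Two structural facts keep the recursion from blowing up. First, $\Gamma_\sigma$ preserves diagonals exactly, $\Gamma_\sigma(M)(x,x)=M_{11}$, because $\EE_{Z\sim\cN(0,1)}[\sigma(Z)^2]=1$; in particular $K_\ell(x,x)=K_\ell(\tx,\tx)=1$ for all $\ell$, so the induction really only tracks the drift of the diagonal entries of $\widehat\Sigma_\ell$ away from $1$ and of the off-diagonal away from $K_\ell(x,\tx)$. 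Second, the dual activation $\hat\sigma$ is $1$-Lipschitz on $[-1,1]$ since $\hat\sigma'=\widehat{\sigma'}=(\pi-\cos^{-1}\rho)/\pi\in[0,1]$, and moreover $\widehat{\sigma'}(\rho)+\sqrt{1-\rho^2}/\pi\le 1$ (equivalently $\sin\theta\le\theta$); this is exactly what makes $M\mapsto\Gamma_\sigma(M)$ essentially non-expansive in $\|\cdot\|_{\max}$ whenever $M$ has near-unit diagonal.

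For the inductive step, condition on $W_1,\dots,W_{\ell-1}$ (hence on $x_{\ell-1},\tx_{\ell-1}$) restricted to the good event of step $\ell-1$. The rows of $W_\ell$ are i.i.d.\ $\cN(0,I)$, so the pairs $\big((W_\ell x_{\ell-1})_i,(W_\ell\tx_{\ell-1})_i\big)$, $i=1,\dots,m$, are i.i.d.\ centered Gaussian with covariance $\widehat\Sigma_{\ell-1}$; consequently each entry of $\widehat\Sigma_\ell$ is an average of $m$ i.i.d.\ terms $\tfrac{2}{m}\sigma_0(\cdot)\sigma_0(\cdot)$ whose conditional mean is the corresponding entry of $\Gamma_\sigma(\widehat\Sigma_{\ell-1})$ (using $\sigma_0=\sigma/\sqrt 2$). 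Each such summand is a product of one-sided Gaussians, hence sub-exponential with scale controlled by $\max(\|x_{\ell-1}\|^2,\|\tx_{\ell-1}\|^2)$, which on the good event is at most a constant; Bernstein's inequality together with a union bound over the three entries then gives, with probability at least $1-\delta/(2L)$, that $\|\widehat\Sigma_\ell-\Gamma_\sigma(\widehat\Sigma_{\ell-1})\|_{\max}\le\beta$ with $\beta:=C'\sqrt{\log(8L/\delta)/m}$ (up to a lower-order $\log(8L/\delta)/m$ term; the logarithm carries the $8L$ from this union-bounding). Combining this with the stability bound $\|\Gamma_\sigma(\widehat\Sigma_{\ell-1})-\Gamma_\sigma(\Sigma_{\ell-1})\|_{\max}\le(1+O(\epsilon_{\ell-1}^2))\,\epsilon_{\ell-1}$ from the second structural fact (a mean-value estimate on $(a,b,c)\mapsto\sqrt{ab}\,\hat\sigma(c/\sqrt{ab})$ along the segment joining $\widehat\Sigma_{\ell-1}$ to $\Sigma_{\ell-1}$ inside the PSD cone), and recalling $\Gamma_\sigma(\Sigma_{\ell-1})=\Sigma_\ell$ by Proposition~\ref{GP-Kernel-FFNet}, we obtain the recursion $\epsilon_\ell\le(1+O(\epsilon_{\ell-1}^2))\,\epsilon_{\ell-1}+\beta$. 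A single union bound over $\ell=1,\dots,L$ makes all per-layer events hold with probability at least $1-\delta$.

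Unrolling the recursion — and verifying self-consistency, so that the diagonal entries stay in, say, $[1/2,3/2]$ and the sub-exponential scales stay bounded — yields $\epsilon_L=O(L\beta)\le\epsilon$ once $m\ge C\epsilon^{-2}L^2\log(8L/\delta)$ for a suitable absolute constant $C$; the intermediate bounds on $\epsilon_\ell$ automatically keep the induction in the admissible region. I expect the main obstacle to be precisely this stability estimate for $\Gamma_\sigma$: one must check that its effective Lipschitz constant with respect to $\|\cdot\|_{\max}$ is $1+o(1)$ rather than some fixed constant above $1$ — this is where \emph{both} the exact preservation of diagonals and the sharp inequality $\widehat{\sigma'}(\rho)+\sqrt{1-\rho^2}/\pi\le 1$ are essential, since any constant strictly bigger than $1$ would compound to an $\exp(\Omega(L))$ blowup and ruin the stated width. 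A secondary technical point is the error accounting: a purely worst-case triangle-inequality propagation overshoots the $L^2$ width, so to obtain the sharp power of $L$ one exploits the conditionally mean-zero structure of the per-layer fluctuations — writing the accumulated error as a weighted martingale sum — and applies a martingale Bernstein inequality; the conditioning this requires is clean because $W_\ell$ is independent of $W_1,\dots,W_{\ell-1}$. Throughout, the argument follows the strategy of \cite{daniely2016toward}.
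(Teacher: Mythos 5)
The paper does not itself prove this statement; it is cited from \cite{daniely2016toward} and used as a black box. The closest in-paper argument is Appendix~\ref{sec:proof_resgpk}, where the ResNet analogue (Theorem~\ref{thm:gp}) is proved by precisely the template you propose: an induction over layers on $\|\widehat{\Sigma}_\ell-\Sigma_\ell\|_{\max}$, with each step split into a sub-exponential concentration bound over the fresh Gaussian weights (the paper's Lemma~\ref{lem:relu_concen}) and a local Lipschitz estimate for $\psi_\sigma$ in $\|\cdot\|_{\max}$ (the paper's Lemma~\ref{lem:relu_lip}, which is exactly the $1+\tfrac{1}{\pi}(r/\mu)^2$ constant you invoke). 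So your strategy is the right one, and you have correctly isolated the single real difference between the two cases: in the ResNet proof the $\psi_\sigma$ contribution is pre-multiplied by $\alpha^2\lesssim 1/L$, so the paper can tolerate a fixed Lipschitz factor $1+\tfrac{1}{4\pi}>1$ (the per-layer amplification is still $1+\cO(1/L)$) and, since $x_0=m^{-1/2}Ax$ is itself random so that $\tau_0=\epsilon/e^4>0$, the additive per-layer noise can be bounded by a small multiple of the running error and absorbed multiplicatively. Neither device is available for the FFNet ($\alpha^2$ is gone and $\widehat{\Sigma}_0=\Sigma_0$ exactly), so the Lipschitz factor must genuinely be $1+o(1)$ and the additive noise accumulates. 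Your use of $\hat\sigma(1)=1$, the exact diagonal-preservation of $\Gamma_\sigma$, and the sharp $\sqrt{1-\rho^2}\le\cos^{-1}\rho$ bound is exactly what makes this work. Your closing caution is also accurate: a worst-case propagation gives diagonal drift $r_\ell\lesssim\ell\beta$ and compounded amplification $e^{\cO(L^3\beta^2)}$, which meets the stated width only when $\epsilon\lesssim L^{-1/2}$; to cover the full range $\epsilon<1$ one needs $\EE[\|x_\ell\|^2\mid x_{\ell-1}]=\|x_{\ell-1}\|^2$ and a martingale tail bound to obtain $r_\ell\lesssim\sqrt{\ell}\beta$, which reduces the amplification to $e^{\cO(L^2\beta^2)}=\cO(1)$ under $m\gtrsim L^2\epsilon^{-2}\log(L/\delta)$. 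Two small refinements: the martingale argument applies cleanly to the diagonal entries only — the off-diagonal increment is centered at $\Gamma_\sigma(\widehat{\Sigma}_{\ell-1})$, not at $K_\ell$, so what you actually want is the linearized unrolling $e_L\approx\sum_{\ell}\bigl(\prod_{k>\ell}D\Gamma_\sigma(\Sigma_{k-1})\bigr)\xi_\ell$ (a weighted martingale sum plus controllable second-order remainders); and the sub-exponential scale of each increment is proportional to $\max(\|x_{\ell-1}\|^2,\|\tx_{\ell-1}\|^2)$, so the induction must keep $r_\ell\le\tfrac12$ alive simultaneously — your ``self-consistency'' remark correctly anticipates this.
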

\vskip -0.03in
The next proposition shows the NTK of this FFNet. Unlike the GP kernel, the NTK corresponds to the case when $\theta = (W_1,\cdots,W_L)$ are trained.
\begin{prop}[\cite{jacot2018neural}]\label{infinite-NTK-FFNets}
The NTK of the FFNet can be derived in terms of the GP kernels as
\vskip -0.2in
\begin{align}
    \Omega_L (x,\tx) = \sum_{\ell=1}^L \Big[ K_{\ell-1}(x,\tx) \prod_{i=\ell}^L \Gamma_{\sigma'}(K_{i-1})(x,\tx) \Big]. \label{eq:FeedNTK_exp}
\end{align}
\end{prop}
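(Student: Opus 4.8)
The plan is to compute the NTK directly from its definition $\Omega_L(x,\tx)=\lim_{m\to\infty}\inner{\nabla_\theta f(x)}{\nabla_\theta f(\tx)}$ with $\theta=(W_1,\dots,W_L)$, by writing the gradient layer by layer through backpropagation and then passing to the infinite-width limit term by term. First I would introduce the backward vectors $b_L=v$ and $b_\ell=\sqrt{2/m}\,W_{\ell+1}^\top\diag(\sigma_0'(W_{\ell+1}x_\ell))\,b_{\ell+1}$ for $\ell=L-1,\dots,1$, so that $b_\ell=\nabla_{x_\ell}f$. A short computation then gives the rank-one form $\nabla_{W_\ell}f=g_\ell\,x_{\ell-1}^\top$ with $g_\ell:=\sqrt{2/m}\,\diag(\sigma_0'(W_\ell x_{\ell-1}))\,b_\ell$, and since $\theta$ decomposes into the disjoint blocks $W_1,\dots,W_L$,
\begin{align*}
\inner{\nabla_\theta f(x)}{\nabla_\theta f(\tx)}=\sum_{\ell=1}^L\inner{x_{\ell-1}}{\tx_{\ell-1}}\cdot\inner{g_\ell}{\tg_\ell}.
\end{align*}
This identity is exact (it uses only $\langle a x^\top,\tilde a\tx^\top\rangle=\inner{a}{\tilde a}\inner{x}{\tx}$ for the Frobenius inner product) and already displays the product structure of \eqref{eq:FeedNTK_exp}: a ``forward'' factor $\inner{x_{\ell-1}}{\tx_{\ell-1}}$ and a ``backward'' factor $\inner{g_\ell}{\tg_\ell}$ for each layer.

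The second step is to identify the limit of each factor. The forward factor converges to $K_{\ell-1}(x,\tx)$ by Proposition~\ref{GP-Kernel-FFNet} together with the concentration statement of the preceding theorem. For the backward factor I would show, by downward induction on $\ell$, that $\inner{g_\ell}{\tg_\ell}\to\prod_{i=\ell}^L\Gamma_{\sigma'}(K_{i-1})(x,\tx)$. The base case $\ell=L$ is $\inner{g_L}{\tg_L}=\tfrac2m\sum_j\sigma_0'((W_Lx_{L-1})_j)\sigma_0'((W_L\tx_{L-1})_j)\,v_j^2$; since the rows of $(W_Lx_{L-1},W_L\tx_{L-1})$ are approximately i.i.d.\ draws from $\cN(0,\Sigma)$ with $\Sigma$ built from $K_{L-1}$, and $v$ has unit-variance i.i.d.\ entries independent of $W_L$, the law of large numbers gives the limit $\EE[\sigma'(Z)\sigma'(\tilde Z)]=\Gamma_{\sigma'}(K_{L-1})(x,\tx)$ (using $\sigma'=\sqrt2\,\sigma_0'$). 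For the inductive step I would write $b_\ell=W_{\ell+1}^\top g_{\ell+1}$, so that $\inner{g_\ell}{\tg_\ell}=\tfrac2m\sum_j\sigma_0'((W_\ell x_{\ell-1})_j)\sigma_0'((W_\ell\tx_{\ell-1})_j)\,(W_{\ell+1}^\top g_{\ell+1})_j(W_{\ell+1}^\top\tg_{\ell+1})_j$; conditioning on the forward pass, the $W_\ell$-average of the step-function factors contributes $\tfrac12\Gamma_{\sigma'}(K_{\ell-1})(x,\tx)$, the matrix $\tfrac1m W_{\ell+1}W_{\ell+1}^\top$ acts like $I_m$ on the fixed vectors $g_{\ell+1},\tg_{\ell+1}$, and what remains is $\Gamma_{\sigma'}(K_{\ell-1})(x,\tx)\cdot\inner{g_{\ell+1}}{\tg_{\ell+1}}$, i.e.\ the backward factor picks up exactly one new $\Gamma_{\sigma'}(K_{\ell-1})$. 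Multiplying the two limits and summing over $\ell$ reproduces \eqref{eq:FeedNTK_exp}.

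The main obstacle is the ``gradient independence'' phenomenon hidden in the inductive step: the matrix $W_\ell$ appears both in the forward activations $x_\ell,\dots,x_L$ (hence inside $b_\ell$) and, transposed, in the backward recursion, so $W_\ell$ and $b_\ell$ are not literally independent and neither is $W_{\ell+1}$ independent of $g_{\ell+1}$. One must argue that in the $m\to\infty$ limit the relevant quadratic forms behave as if an independent copy of each reused matrix were substituted — the standard fix (following \cite{jacot2018neural}) is to condition on the forward pass, observe that the ``reused'' contribution to each Gaussian-chaos term is of lower order, and bound the error using concentration together with the a priori control $\|x_{\ell-1}\|=O(1)$ and $\tfrac1m\|b_\ell\|^2=O(1)$ from the forward/backward recursions. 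A minor point to dispatch along the way is the non-differentiability of $\sigma_0$ at $0$: since each preactivation $(W_\ell x_{\ell-1})_j$ is (approximately) a non-degenerate Gaussian, the event $(W_\ell x_{\ell-1})_j=0$ has probability zero, so $\sigma_0'=\ind{\cdot\ge 0}$ may be used as stated. Carefully quantifying these approximations (and keeping the errors uniform in $\ell\le L$) is where the real work lies; the algebraic bookkeeping above is routine.
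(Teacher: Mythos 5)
Your argument is correct and is essentially the standard derivation from Jacot et al.\ that the paper cites for this proposition (the paper itself does not re-prove it); the layer-by-layer Frobenius-inner-product decomposition into a forward factor $\langle x_{\ell-1},\tx_{\ell-1}\rangle$ and a backward factor $\langle g_\ell,\tg_\ell\rangle$, the downward induction on the backward recursion, and the ``gradient independence'' / condition-on-the-forward-pass device are exactly the ingredients used there. They are also precisely the same ingredients the paper deploys in its own proof of the ResNet analogue (Proposition~\ref{prop:rnntk} and Theorem~\ref{thm:ntk}, with the $b_{\ell+1}$, $\hat\Theta_\ell$, and the $\Pi_G^\perp$ decoupling trick), so your proposal matches the paper's approach.
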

\vskip -0.03in
Besides the asymptotic result, \cite{arora2019exact} further provide a nonasymptotic bound as follows.
\begin{theorem}[\cite{arora2019exact}]\label{finite-FFNets-NTK}
For the FFNet defined in \eqref{eq:ffnet_def}, when the width $m \geq CL^6\epsilon^{-4} \log(L/\delta)$, where $C$ is a constant, with probability at least $1-\delta$ over the initialization, for input $x,\tx$ on the unit sphere, the Neural Tangent Kernel can be approximated by $\Omega_L (x,\tx)$, i.e., 
$$
    \big\vert \langle\nabla_\theta f,\nabla_\theta \tf\rangle - \Omega_L (x,\tx) \big\vert \leq L\epsilon.
$$
\end{theorem}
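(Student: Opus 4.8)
The plan is to write the finite-width NTK as a sum of $L$ layerwise contributions, each a product of a ``forward'' inner product and a ``backward'' inner product, then control the forward factors by Theorem~1 and the backward factors by a separate downward induction over layers. First I would compute the gradients by backpropagation: writing $D_\ell=\diag(\sigma_0'(W_\ell x_{\ell-1}))$ and $\tD_\ell=\diag(\sigma_0'(W_\ell\tx_{\ell-1}))$, one gets $\nabla_{W_\ell}f=\sqrt{2/m}\,D_\ell\alpha_\ell x_{\ell-1}^\top$ with $\alpha_L=v$ and $\alpha_{\ell-1}=\sqrt{2/m}\,W_\ell^\top D_\ell\alpha_\ell$, hence
\begin{equation*}
\langle\nabla_\theta f,\nabla_\theta\tf\rangle=\sum_{\ell=1}^L\inner{x_{\ell-1}}{\tx_{\ell-1}}\cdot B_\ell,\qquad B_\ell:=\tfrac{2}{m}\,\alpha_\ell^\top D_\ell\tD_\ell\tilde\alpha_\ell .
\end{equation*}
Since on the unit sphere each $K_{\ell-1}$ and each $\Gamma_{\sigma'}(K_{i-1})$ lies in $[0,1]$, comparing termwise with \eqref{eq:FeedNTK_exp} it suffices to prove, for every $\ell$, that $\inner{x_{\ell-1}}{\tx_{\ell-1}}\approx K_{\ell-1}(x,\tx)$ (which is exactly Theorem~1) and $B_\ell\approx\prod_{i=\ell}^L\Gamma_{\sigma'}(K_{i-1})(x,\tx)$, each up to $O(\epsilon)$; summing the $L$ resulting errors gives the claimed $L\epsilon$.

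For the backward factors I would argue by downward induction on $\ell$. In the base case $\ell=L$, $\alpha_L=v$ has i.i.d.\ $\cN(0,1)$ entries independent of $W_L$; conditioning on $W_L$, a $\chi^2$-concentration bound gives $B_L\approx\tfrac{2}{m}\,\#\{\,j:(W_Lx_{L-1})_j\ge0,\ (W_L\tx_{L-1})_j\ge0\,\}$, and since the rows of $W_L$ are i.i.d.\ and $(W_Lx_{L-1},W_L\tx_{L-1})$ is jointly Gaussian with $2\times2$ covariance close to $N_\rho$ for $\rho=\inner{x_{L-1}}{\tx_{L-1}}$, a Chernoff bound together with the Lipschitzness of $\widehat{\sigma'}$ and Theorem~1 give $B_L\approx 2\,\PP_{(Z,\tilde Z)\sim\cN(0,N_\rho)}(Z\ge0,\tilde Z\ge0)=\widehat{\sigma'}(\rho)=\Gamma_{\sigma'}(K_{L-1})(x,\tx)$. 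For the inductive step I would use $\alpha_{\ell-1}=\sqrt{2/m}\,W_\ell^\top(D_\ell\alpha_\ell)$: conditionally on the forward pass, $W_\ell^\top$ applied to the now-fixed vectors $D_\ell\alpha_\ell$ and $\tD_\ell\tilde\alpha_\ell$ produces vectors of approximately i.i.d.\ Gaussian coordinate pairs whose per-coordinate covariance has diagonal entries $\tfrac{2}{m}\|D_\ell\alpha_\ell\|^2\approx1$ and off-diagonal entry $\tfrac{2}{m}(D_\ell\alpha_\ell)^\top(\tD_\ell\tilde\alpha_\ell)=B_\ell$, i.e.\ is $\approx N_{B_\ell}$. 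The mask $D_{\ell-1}\tD_{\ell-1}$ depends only on $W_{\ell-1}$, so conditionally it is a fixed $0/1$ pattern with a fraction $\approx\tfrac12\widehat{\sigma'}(\overline{K_{\ell-2}}(x,\tx))$ of ones, and contracting the Gaussian pairs against it contributes exactly one more factor: $B_{\ell-1}\approx\Gamma_{\sigma'}(K_{\ell-2})(x,\tx)\cdot B_\ell$. Iterating yields the product formula, and a union bound over the $O(L)$ concentration events (each at failure level $\delta/L$, hence the $\log(L/\delta)$) finishes the argument.

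The main obstacle is the statistical entanglement between the forward and backward passes: $W_\ell$ enters the forward pass as $W_\ell x_{\ell-1}$ and the backward pass as $W_\ell^\top(D_\ell\alpha_\ell)$, and $D_\ell\alpha_\ell$ itself depends on the later forward activations, so $W_\ell^\top(D_\ell\alpha_\ell)$ is not a clean Gaussian. The remedy I would follow is to split $W_\ell=W_\ell P_\ell+W_\ell(I-P_\ell)$, where $P_\ell$ projects onto $\mathrm{span}(x_{\ell-1},\tx_{\ell-1})$: the forward pass reveals only $W_\ell P_\ell$, so conditionally on the forward pass $W_\ell(I-P_\ell)$ remains a fresh Gaussian matrix driving the backward concentration, while the contribution of $W_\ell P_\ell$ to $W_\ell^\top(D_\ell\alpha_\ell)$ is a lower-order term of size $O(1/\sqrt m)$ controlled by $\|x_{\ell-1}\|,\|\tx_{\ell-1}\|,\|\alpha_\ell\|=O(1)$. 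The remaining bookkeeping is to track how the forward error $\epsilon$ from Theorem~1 propagates into the covariances $N_{B_\ell}$ and is mildly amplified across the $L$ layers, and to ensure the backward concentration holds to accuracy $\epsilon/L$ so the accumulated error in each product $\prod_{i=\ell}^L\Gamma_{\sigma'}(K_{i-1})$ stays below $\epsilon$; it is precisely this $L$-fold accumulation on top of the forward-pass error that inflates the width requirement to $m\gtrsim L^6\epsilon^{-4}\log(L/\delta)$, compared with $m\gtrsim L^2\epsilon^{-2}\log(L/\delta)$ for the forward bound alone.
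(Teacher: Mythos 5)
The paper does not actually prove this theorem itself---it is quoted from Arora et al.\ (2019), and Remark~1 explicitly defers the details to that reference---so there is no in-paper argument to compare against in the strict sense. That said, your plan reproduces the essential structure of the cited proof, and also of the paper's own proof of the analogous ResNet statement (Theorem~\ref{thm:ntk}, Appendix~\ref{sec:proof_resntk}): the layerwise decomposition $\langle\nabla_\theta f,\nabla_\theta\tf\rangle=\sum_\ell\inner{x_{\ell-1}}{\tx_{\ell-1}}B_\ell$; using the GP-kernel concentration (Theorem~1) for the forward factors; a downward induction for the backward factors $B_\ell$; and, crucially, splitting $W_\ell$ into the piece aligned with $\mathrm{span}(x_{\ell-1},\tx_{\ell-1})$ and its orthogonal complement so that, conditionally on the forward pass, the backward recursion is driven by a fresh Gaussian. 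That last device is exactly the ``Lemma~E.3'' step the paper itself cites from Arora et al.\ inside Lemma~\ref{lem:nc_b} and Lemma~\ref{lem:key}/\ref{lem:W}, with $\Pi_H$ for $H=[x_{\ell-1},\tx_{\ell-1}]$ playing the role of your $P_\ell$.

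Two details should be corrected. First, you invoke ``Lipschitzness of $\widehat{\sigma'}$,'' but $\widehat{\sigma'}(\rho)=\tfrac{\pi-\cos^{-1}\rho}{\pi}$ has derivative $1/(\pi\sqrt{1-\rho^2})$, which blows up as $\rho\to1$; for the FFNet on the sphere $K_\ell(x,\tx)\to1$ as $\ell$ grows, so the argument lives exactly in that regime. What is actually used is $\tfrac12$-H\"older continuity of $\psi_{\sigma'}$ (the paper's Lemma~\ref{lem:prime_relu}, part~(2)): to move $\widehat{\sigma'}$ by less than $\epsilon$ you need the upstream covariance accurate to $\epsilon^2$, and this---not generic ``$L$-fold accumulation''---is where the $\epsilon^{-4}$ in the width comes from. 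Second, $\|\alpha_\ell\|=O(\sqrt m)$, not $O(1)$; the constant-order statement holds only for $\alpha_\ell/\sqrt m$, which is the normalization the paper carries explicitly. With those fixes your plan is the correct one.
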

\vskip -0.03in
\cite{arora2019exact} then showed that a sufficiently wide FFNet trained by gradient flow is close to the kernel regression predictor via its NTK. 

\begin{rmk}
For self-containedness, we directly adopt the results from existing literature in this subsection. For more technical details on gradient flow and kernel ridge regression, we refer the readers to \cite{daniely2016toward,jacot2018neural,arora2019exact}.
\end{rmk}

\vspace{-0.1in}
\subsection{Residual Networks}
\vspace{-0.05in}

We consider the Residual Network (ResNet) in \eqref{eq:resnet_def}, where all parameters ($A, v, W_1,\cdots,W_L,V_1,\cdots,V_L $) are independently initialized from the standard Gaussian distribution. For simplicity, we only train $\theta =(W_1,\cdots,W_L,V_1,\cdots,V_L)$, but not $A$ or $v$, and the NTK of the ResNet is computed accordingly. Note that our theory can be naturally generalized to the setting where all parameters including $A$ and $v$ are trained, but the analysis will be more involved.
Our next proposition derives the GP kernel of the ResNet.
\begin{prop}
\label{prop:GPKRN}
The GP kernel of the ResNet is 
\begin{align*}
    K_0 (x,\tx) = x^\top \tx;\ 
    K_\ell (x,\tx) = K_{\ell-1}(x,\tx) + \alpha^2 \Gamma_\sigma(K_{\ell-1})(x,\tx), 
\end{align*}
where $\ell=1,\cdots,L$, and $\alpha =  L^{-\gamma}$ for $0.5 \leq \gamma \leq 1$.
\end{prop}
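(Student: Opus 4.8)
The plan is to prove the recursion by induction on $\ell$, with inductive hypothesis that, as the width $m\to\infty$, the Gram matrix of $(x_{\ell-1},\tx_{\ell-1})$ --- i.e.\ the triple $\inner{x_{\ell-1}}{\tx_{\ell-1}}$, $\|x_{\ell-1}\|^2$, $\|\tx_{\ell-1}\|^2$ --- converges in probability to $\Sigma_{\ell-1}(x,\tx)$, the $2\times2$ matrix assembled from $K_{\ell-1}$ as in Section~\ref{sec:back}. I would emphasize at the outset that only this $2\times2$ limit is needed, not a full Gaussian-process description of the coordinates of $x_{\ell-1}$: once we condition on the earlier layers, $x_{\ell-1}$ is a \emph{frozen} vector and the map to the next layer uses only the fresh Gaussian matrices $W_\ell,V_\ell$. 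The base case $\ell=0$ is immediate: $\inner{x_0}{\tx_0}=\tfrac1m x^\top A^\top A\,\tx=\tfrac1m\sum_{i=1}^m (x^\top a_i)(a_i^\top\tx)$ with $a_i\overset{iid}{\sim}\cN(0,I_D)$ the rows of $A$, so the law of large numbers yields $x^\top\tx=K_0(x,\tx)$, and likewise for $\|x_0\|^2,\|\tx_0\|^2$.

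For the inductive step I would condition on the $\sigma$-algebra $\cF_{\ell-1}$ generated by $(A,W_1,V_1,\dots,W_{\ell-1},V_{\ell-1})$, under which $x_{\ell-1},\tx_{\ell-1}$ are deterministic with Gram matrix converging to $\Sigma_{\ell-1}(x,\tx)$, while $W_\ell,V_\ell$ are still i.i.d.\ $\cN(0,1)$. Writing $q=\sigma_0(\sqrt{2/m}\,W_\ell x_{\ell-1})$ and $\tilde q=\sigma_0(\sqrt{2/m}\,W_\ell\tx_{\ell-1})$, expand
\begin{align*}
\inner{x_\ell}{\tx_\ell}&=\inner{x_{\ell-1}}{\tx_{\ell-1}}+\frac{\alpha^2}{m}\inner{V_\ell q}{V_\ell\tilde q}\\
&\quad+\frac{\alpha}{\sqrt m}\Big(\inner{x_{\ell-1}}{V_\ell\tilde q}+\inner{V_\ell q}{\tx_{\ell-1}}\Big),
\end{align*}
and treat the three pieces separately. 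The first converges to $K_{\ell-1}(x,\tx)$ by induction. Conditionally on $\cF_{\ell-1}$ and $W_\ell$, each cross term is a centered Gaussian with variance $\|x_{\ell-1}\|^2\|\tilde q\|^2$ (resp.\ $\|q\|^2\|\tx_{\ell-1}\|^2$); since $\|q\|^2=\tfrac2m\sum_j\sigma_0((W_\ell x_{\ell-1})_j)^2$ concentrates to $\|x_{\ell-1}\|^2=O(1)$ and similarly $\|\tilde q\|^2\to\|\tx_{\ell-1}\|^2=O(1)$, the prefactor $\alpha/\sqrt m$ (with $\alpha\le1$) sends both cross terms to $0$ in probability. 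For the quadratic term, $\tfrac1m\inner{V_\ell q}{V_\ell\tilde q}=\tfrac1m q^\top V_\ell^\top V_\ell\tilde q$ has conditional mean $q^\top\tilde q$ (since $\EE V_\ell^\top V_\ell=mI_m$) and concentrates about it by a Gaussian-chaos tail bound, while $q^\top\tilde q=\tfrac2m\sum_j\sigma_0(Z_j)\sigma_0(\tilde Z_j)$ with $(Z_j,\tilde Z_j)\overset{iid}{\sim}\cN(0,G_{\ell-1})$, $G_{\ell-1}$ the frozen Gram matrix of $(x_{\ell-1},\tx_{\ell-1})$; the law of large numbers together with continuity of $\Gamma_\sigma$ in its covariance argument (via the closed form $\hat\sigma$ recalled in Section~\ref{sec:back}) then gives $q^\top\tilde q\to\EE_{\cN(0,\Sigma_{\ell-1}(x,\tx))}[\sigma(Z)\sigma(\tilde Z)]=\Gamma_\sigma(K_{\ell-1})(x,\tx)$. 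Collecting the three limits yields $\inner{x_\ell}{\tx_\ell}\to K_{\ell-1}(x,\tx)+\alpha^2\Gamma_\sigma(K_{\ell-1})(x,\tx)=K_\ell(x,\tx)$, and the case $\tx=x$ propagates the norm statements, closing the induction.

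I expect the main obstacle to be not the algebra above but making these layerwise limits uniform, since $m$ plays a double role --- it is the sample size in every law-of-large-numbers step and the ambient dimension of the vectors frozen at the previous layer. The clean remedy is to replace each ``converges in probability'' by an explicit finite-$m$ concentration bound: a sub-exponential bound for the Gaussian chaos $\tfrac1m q^\top V_\ell^\top V_\ell\tilde q$, a sub-exponential bound for the empirical average $\tfrac2m\sum_j\sigma_0(Z_j)\sigma_0(\tilde Z_j)$, and a Lipschitz estimate for the dependence of $\Gamma_\sigma$ on its (bounded) covariance argument; one then union-bounds over $\ell=1,\dots,L$, exactly as in the FFNet GP-kernel theorem quoted above. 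The residual architecture in fact makes this step easier than for FFNets: the identity branch is carried through exactly, so per-layer errors are merely \emph{added} rather than compounded multiplicatively, and the only genuinely new point to verify carefully is the vanishing of the residual cross-terms --- which is precisely where independence of $V_\ell$ from $\cF_{\ell-1}$ and the $\alpha\sqrt{1/m}$ scaling enter.
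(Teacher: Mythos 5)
Your proposal follows essentially the same route as the paper: the same three-term decomposition $\inner{x_\ell}{\tx_\ell}=\inner{x_{\ell-1}}{\tx_{\ell-1}}+\alpha^2 P+\alpha(Q+R)$ with $Q,R$ conditionally Gaussian in $V_\ell$ and $P$ a Gaussian chaos, the same two-stage treatment of $P$ (concentration of the chaos around $q^\top\tilde q$, then concentration of $q^\top\tilde q$ around $\psi_\sigma(\hat\Sigma_{\ell-1})$ via sub-exponential bounds on the ReLU inner product, then a local Lipschitz estimate for $\psi_\sigma$ to pass to $\psi_\sigma(\Sigma_{\ell-1})$), and the same observation that the argument must be converted to finite-$m$ concentration bounds propagated layer by layer with a union bound. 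These are precisely the ingredients of the paper's Theorem~\ref{thm:gp} and its supporting Lemmas~\ref{lem:relu_concen}--\ref{lem:relu_lip}, of which Proposition~\ref{prop:GPKRN} is the $m\to\infty$ corollary; your remark that the identity branch tames error accumulation corresponds to the paper's $(1+O(1/L))$ per-layer contraction factor, which yields a bounded product over $L$ layers.
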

Proposition \ref{prop:GPKRN} demonstrates that each layer of the ResNet recursively ``contributes'' to the kernel in an incremental manner, which is quite different from that of the FFNet (shown in Proposition \ref{GP-Kernel-FFNet}). Proposition \ref{prop:GPKRN} essentially provides a rigorous justification for the intuition discussed by \cite{garriga2018deep}. Besides the above asymptotic result, we also derive a nonasymptotic bound as follows.

\begin{theorem}\label{thm:gp}
For the ResNet defined in \eqref{eq:resnet_def}, given two inputs on the unit sphere $x,\tx \in \SSS^{D-1}$, $\epsilon < 0.5$, and 
$$
m \geq C\epsilon^{-2}L^{2-2\gamma}\log(36(L+1)/\delta),
$$
where $C$ is a constant and $0.5 \leq \gamma \leq 1$, with probability at least $1-\delta$ over the randomness of the initialization, for all layers $\ell=0,\cdots,L$ and $(x^{(1)},x^{(2)}) \in \{ (x,x), (x,\tx), (\tx,\tx) \}$, we have
$$
    | \langle x_\ell^{(1)}, x_\ell^{(2)} \rangle - K_\ell(x^{(1)},x^{(2)}) | \leq \epsilon,
$$
where $K_\ell$ is recursively defined in Proposition \ref{prop:GPKRN}. 
\end{theorem}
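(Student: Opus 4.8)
The plan is to prove the bound by induction on the layer index $\ell$, controlling the error $\epsilon_\ell := \max_{(i,j)}\bigl|\langle x_\ell^{(i)},x_\ell^{(j)}\rangle - K_\ell(x^{(i)},x^{(j)})\bigr|$ against an explicit deterministic sequence. The backbone is the exact layerwise identity from \eqref{eq:resnet_def}: writing $u_\ell^{(i)} = \sigma_0(\sqrt{2/m}\,W_\ell x_{\ell-1}^{(i)})$,
\begin{align*}
\langle x_\ell^{(i)},x_\ell^{(j)}\rangle &= \langle x_{\ell-1}^{(i)},x_{\ell-1}^{(j)}\rangle + \frac{\alpha}{\sqrt m}\bigl[(x_{\ell-1}^{(i)})^\top V_\ell u_\ell^{(j)} + (V_\ell u_\ell^{(i)})^\top x_{\ell-1}^{(j)}\bigr] \\
&\quad + \frac{\alpha^2}{m}(V_\ell u_\ell^{(i)})^\top(V_\ell u_\ell^{(j)}).
\end{align*}
Conditioning on $\mathcal F_{\ell-1} := \sigma(A,W_1,V_1,\dots,W_{\ell-1},V_{\ell-1})$ and then on $W_\ell$, the two bracketed terms have conditional mean zero (since $V_\ell$ is fresh Gaussian) and the last term has conditional mean $\alpha^2(u_\ell^{(i)})^\top u_\ell^{(j)}$; averaging further over $W_\ell$, the quantity $(u_\ell^{(i)})^\top u_\ell^{(j)}$ has mean exactly $\Gamma_\sigma(P_{\ell-1})^{(ij)}$, by which I mean the value produced by the dual-kernel construction of Section~\ref{sec:back} when the Gram matrix $P_{\ell-1}$ of $(x_{\ell-1}^{(1)},x_{\ell-1}^{(2)})$ is used as the covariance $\Sigma$. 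Thus the conditional expectation of the recursion reproduces the kernel recursion of Proposition~\ref{prop:GPKRN} up to the discrepancy between $P_{\ell-1}$ and the matrix $(K_{\ell-1}(x^{(i)},x^{(j)}))_{i,j}$.

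I would then collect the per-layer fluctuation bounds, each holding conditionally with probability $1-\delta'$ for a parameter $\delta'$ to be fixed later (of order $\delta/L$): (a) a bilinear Gaussian concentration (Hanson--Wright type) bound $\bigl|\frac1m(V_\ell u_\ell^{(i)})^\top(V_\ell u_\ell^{(j)}) - (u_\ell^{(i)})^\top u_\ell^{(j)}\bigr| = \tO(\sqrt{\log(1/\delta')/m})$; (b) Gaussian tail bounds $\bigl|\frac1{\sqrt m}(x_{\ell-1}^{(i)})^\top V_\ell u_\ell^{(j)}\bigr| = \tO(\sqrt{\log(1/\delta')/m})$ for the two cross terms; and (c) a Bernstein bound for the i.i.d. sum $(u_\ell^{(i)})^\top u_\ell^{(j)} = \frac2m\sum_k \sigma_0(g_k^{(i)})\sigma_0(g_k^{(j)})$, giving $\bigl|(u_\ell^{(i)})^\top u_\ell^{(j)} - \Gamma_\sigma(P_{\ell-1})^{(ij)}\bigr| = \tO(\sqrt{\log(1/\delta')/m})$ once $m\gtrsim\log(1/\delta')$ absorbs the lower-order term. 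All three require $\|x_{\ell-1}^{(i)}\|$ and $\|u_\ell^{(i)}\|$ to be $O(1)$, which is automatic on the induction hypothesis: $K_\ell(x,x)=(1+\alpha^2)^\ell\le e^{L^{1-2\gamma}}\le e$ (using $\hat\sigma(1)=1$ and $\gamma\ge 1/2$), so the diagonal of $P_{\ell-1}$ stays in, say, $[1/2,e]$, and then (c) with $i=j$ bounds $\|u_\ell^{(i)}\|^2$. Finally I need a quantitative Lipschitz estimate for the dual map, $\bigl|\Gamma_\sigma(P_{\ell-1})^{(ij)} - \Gamma_\sigma(K_{\ell-1})(x^{(i)},x^{(j)})\bigr| \le c\,\epsilon_{\ell-1}$; this follows from the homogeneity identity $\Gamma_\sigma(K)(x,\tx)=\sqrt{K(x,x)K(\tx,\tx)}\,\hat\sigma(\overline K(x,\tx))$, the fact that $\hat\sigma$ is $1$-Lipschitz on $[-1,1]$, and the diagonal entries lying in $[1/2,e]$ so the normalization is benign.

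Combining these on the intersection of the good events, the induction step reads $\epsilon_\ell \le (1+c\alpha^2)\epsilon_{\ell-1} + \beta$ with $\beta = \tO(\alpha\sqrt{\log(1/\delta')/m})$ — the $O(\alpha)$ cross terms dominating the $O(\alpha^2)$ terms because $\alpha\le 1$ — while the base case $\epsilon_0 = \tO(\sqrt{\log(1/\delta')/m})$ comes from one Bernstein bound applied to $\langle x_0^{(i)},x_0^{(j)}\rangle = \frac1m(Ax^{(i)})^\top(Ax^{(j)})$. Unrolling, $\epsilon_L \le (1+c\alpha^2)^L(\epsilon_0 + L\beta)\le e^{cL^{1-2\gamma}}(\epsilon_0 + L\beta) = \tO(L^{1-\gamma}\sqrt{\log(1/\delta')/m})$, again using $\gamma\ge 1/2$ and $L\alpha=L^{1-\gamma}\ge 1$; choosing $\delta'$ of order $\delta/(36(L+1))$ to absorb a union bound over the $O(L)$ fluctuation events and imposing $m\ge C\epsilon^{-2}L^{2-2\gamma}\log(36(L+1)/\delta)$ forces $\epsilon_L\le\epsilon$, and the same increasing deterministic bound controls every $\epsilon_\ell$. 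To keep the induction rigorous I would fix in advance the bound sequence $\epsilon_\ell^\ast := (1+c\alpha^2)^\ell\epsilon_0^\ast + \sum_{k=1}^\ell(1+c\alpha^2)^{\ell-k}\beta$ (with $\epsilon_0^\ast,\beta$ the high-probability constants above), so that the event ``$|\langle x_k^{(i)},x_k^{(j)}\rangle - K_k(x^{(i)},x^{(j)})|\le\epsilon_k^\ast$ for all $k\le\ell$'' is well-defined and $\epsilon_\ell^\ast\le\epsilon<1/2$ throughout (the hypothesis $\epsilon<0.5$ enters here, keeping the Lipschitz estimate valid). The step I expect to be the main obstacle is this bookkeeping rather than any single inequality: threading the conditioning so that at layer $\ell$ the fresh randomness of $(W_\ell,V_\ell)$ is independent of everything the induction hypothesis constrains, and checking that no constant in the Lipschitz and concentration bounds secretly hides a factor of $L$ or $1/\alpha$ — this, together with a clean statement of the normalized dual-map Lipschitz bound, is where the real care lies; the concentration inequalities (a)--(c) and the base case are routine.
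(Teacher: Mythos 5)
Your proposal is correct and follows essentially the same route as the paper's proof: the same single-layer decomposition into a carry-over term, two $O(\alpha)$ cross terms linear in $V_\ell$, and an $O(\alpha^2)$ quadratic-in-$V_\ell$ term, controlled by sub-exponential/Hanson--Wright, Gaussian-tail, and Bernstein-type concentration, together with a local Lipschitz estimate for the dual map $\psi_\sigma$ (the paper's Lemma~\ref{lem:relu_concen} and Lemma~\ref{lem:relu_lip}). The only difference is bookkeeping: the paper absorbs the additive fluctuation into a $(1+4/L)$ multiplicative error factor by sizing $m$ relative to the current tolerance and iterating from a base error of $\epsilon/e^4$, whereas you keep an affine recursion $\epsilon_\ell\le(1+c\alpha^2)\epsilon_{\ell-1}+\beta$ and unroll --- equivalent in the end.
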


Theorem \ref{thm:gp} implies that sufficiently wide residual networks are mimicking the GP kernel under proper conditions. The proof can be found in Appendix~\ref{sec:proof_resgpk}. Next we present the NTK of the ResNet defined in \eqref{eq:resnet_def} in the following proposition.
\begin{prop}
\label{prop:rnntk}
The NTK of the ResNet is
$
\Omega_L (x,\tx) = \alpha^2 \sum_{\ell =1}^L \big[B_{\ell +1}(x,\tx) \Gamma_{\sigma}(K_{\ell -1})(x,\tx)  + K_{\ell -1}(x,\tx)B_{\ell +1}(x,\tx)\Gamma_{\sigma'}(K_{\ell -1})(x,\tx)\big],
$
where $K_\ell$'s are defined in Proposition \ref{prop:GPKRN}; $B_{L +1}(x,\tx) = 1$, and for $\ell=1,\cdots,L$, $B_\ell$'s are defined as
$$
    B_{\ell +1}(x,\tx) = B_{\ell +2}(x,\tx) + \alpha^2 B_{\ell +2}(x,\tx) \Gamma_{\sigma'}(K_\ell )(x,\tx).
$$
\end{prop}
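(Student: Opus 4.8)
The plan is to compute the neural tangent kernel $\langle\nabla_\theta f,\nabla_\theta\tf\rangle$ in closed form at finite width $m$, split it into a sum over layers, and then pass to the $m\to\infty$ limit using the law of large numbers together with the concentration already established in Theorem~\ref{thm:gp}. Write $g_\ell=\sqrt{2/m}\,W_\ell x_{\ell-1}$ and $h_\ell=\sigma_0(g_\ell)$, so that $x_\ell=x_{\ell-1}+\alpha m^{-1/2}V_\ell h_\ell$ and the layer Jacobian is $\partial x_\ell/\partial x_{\ell-1}=I_m+\sqrt2\,\alpha m^{-1}V_\ell\diag(\sigma_0'(g_\ell))W_\ell$. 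Introducing the backpropagated vectors $b_\ell:=(\partial x_L/\partial x_\ell)^\top v=\partial f/\partial x_\ell$, we get $b_L=v$ and $b_{\ell-1}=b_\ell+\sqrt2\,\alpha m^{-1}W_\ell^\top\diag(\sigma_0'(g_\ell))V_\ell^\top b_\ell$. Differentiating $f=v^\top x_L$ gives $\nabla_{V_\ell}f=\alpha m^{-1/2}\,b_\ell h_\ell^\top$ and $\nabla_{W_\ell}f=\sqrt2\,\alpha m^{-1}\,c_\ell\,x_{\ell-1}^\top$ with $c_\ell:=\diag(\sigma_0'(g_\ell))V_\ell^\top b_\ell$, whence, using $\langle uw^\top,\tilde u\tilde w^\top\rangle=\langle u,\tilde u\rangle\langle w,\tilde w\rangle$,
\begin{align*}
\langle\nabla_\theta f,\nabla_\theta\tf\rangle=\sum_{\ell=1}^L\Big[\frac{\alpha^2}{m}\langle b_\ell,\tb_\ell\rangle\,\langle h_\ell,\tilde h_\ell\rangle+\frac{2\alpha^2}{m^2}\langle c_\ell,\tc_\ell\rangle\,\langle x_{\ell-1},\tx_{\ell-1}\rangle\Big].
\end{align*}

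\emph{Forward limits.} By Theorem~\ref{thm:gp}, $\langle x_{\ell-1},\tx_{\ell-1}\rangle\to K_{\ell-1}(x,\tx)$, and likewise for the pairs $(x,x)$ and $(\tx,\tx)$. Conditionally on $x_{\ell-1},\tx_{\ell-1}$ the coordinate pairs $\{((g_\ell)_p,(\tilde g_\ell)_p)\}_{p=1}^m$ are i.i.d.\ centered Gaussian with covariance $\tfrac2m\Sigma(x_{\ell-1},\tx_{\ell-1})$; since $\sigma_0$ is positively homogeneous of degree one and $\sigma_0'$ is scale-invariant, the law of large numbers gives $\langle h_\ell,\tilde h_\ell\rangle\to\Gamma_\sigma(K_{\ell-1})(x,\tx)$ and $\tfrac1m\sum_p\sigma_0'((g_\ell)_p)\sigma_0'((\tilde g_\ell)_p)\to\tfrac12\Gamma_{\sigma'}(K_{\ell-1})(x,\tx)$, using the identities for $\hat\sigma$ and $\widehat{\sigma'}$ recalled in Section~\ref{sec:back}.

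\emph{Backward recursion and assembly.} Expanding $\langle b_{\ell-1},\tb_{\ell-1}\rangle$ leaves $\langle b_\ell,\tb_\ell\rangle$, two cross terms that are negligible once divided by $m$, and the quadratic term $\tfrac{2\alpha^2}{m^2}b_\ell^\top V_\ell\diag(\sigma_0'(g_\ell))\,W_\ell W_\ell^\top\,\diag(\sigma_0'(\tilde g_\ell))V_\ell^\top\tb_\ell$. Using $W_\ell W_\ell^\top\approx mI_m$ and $\EE_{V_\ell}[V_\ell\Lambda V_\ell^\top]=\tr(\Lambda)I_m$ for diagonal $\Lambda$, together with the forward limits, this term equals $(1+o(1))\,\alpha^2\Gamma_{\sigma'}(K_{\ell-1})\langle b_\ell,\tb_\ell\rangle$, so $\tfrac1m\langle b_{\ell-1},\tb_{\ell-1}\rangle=(1+o(1))(1+\alpha^2\Gamma_{\sigma'}(K_{\ell-1}))\tfrac1m\langle b_\ell,\tb_\ell\rangle$; since $\tfrac1m\langle b_L,\tb_L\rangle=\tfrac1m\|v\|^2\to1=B_{L+1}$, induction identifies this with the recursion of Proposition~\ref{prop:rnntk}, i.e.\ $\tfrac1m\langle b_\ell,\tb_\ell\rangle\to B_{\ell+1}(x,\tx)$. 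The same identity gives $\langle c_\ell,\tc_\ell\rangle=b_\ell^\top V_\ell\diag(\sigma_0'(g_\ell))\diag(\sigma_0'(\tilde g_\ell))V_\ell^\top\tb_\ell=(1+o(1))\big(\tfrac m2\Gamma_{\sigma'}(K_{\ell-1})\big)\langle b_\ell,\tb_\ell\rangle$, so the two $\ell$-th summands above converge to $\alpha^2 B_{\ell+1}\Gamma_\sigma(K_{\ell-1})$ and $\alpha^2 K_{\ell-1}B_{\ell+1}\Gamma_{\sigma'}(K_{\ell-1})$ (all at $(x,\tx)$); summing over $\ell$ yields the stated formula for $\Omega_L$.

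\emph{Main obstacle.} The delicate point, as in all NTK derivations, is the \emph{forward--backward decoupling}: $b_\ell$ depends on $W_\ell,V_\ell$ through $x_\ell$, so one cannot literally condition on $b_\ell,\tb_\ell$ while integrating out $W_\ell,V_\ell$ in the two displays above. Making this rigorous requires showing the dependence is asymptotically negligible --- via the Gaussian-conditioning argument of \cite{arora2019exact} or a tensor-programs argument --- while simultaneously controlling, uniformly over the $L$ layers, the errors from $W_\ell W_\ell^\top\approx mI_m$, from the concentration of $V_\ell\Lambda V_\ell^\top$, and from the cross terms; the residual scaling $\alpha=L^{-\gamma}$ keeps these perturbations bounded, though for the present proposition $L$ is fixed and it suffices to send $m\to\infty$.
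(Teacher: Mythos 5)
Your proposal is correct and takes essentially the same approach as the paper: both compute the layer-wise derivatives $\nabla_{V_\ell}f$, $\nabla_{W_\ell}f$ explicitly, decompose $\langle\nabla_\theta f,\nabla_\theta\tf\rangle$ into per-layer products (the paper's $T_{\ell,1}T_{\ell,2}$ and $T_{\ell,3}T_{\ell,4}$, your $\tfrac{\alpha^2}{m}\langle b_\ell,\tb_\ell\rangle\langle h_\ell,\tilde h_\ell\rangle$ and $\tfrac{2\alpha^2}{m^2}\langle c_\ell,\tc_\ell\rangle\langle x_{\ell-1},\tx_{\ell-1}\rangle$), establish the forward limits via Theorem~\ref{thm:gp}, run a reverse induction for the backward Gram quantities that reproduces the $B_{\ell+1}$ recursion, and decouple the forward--backward dependence by the Gaussian-conditioning/projection argument of \cite{arora2019exact}. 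The only cosmetic deviation is an index shift in the backward vectors (your $b_\ell$ is the paper's $b_{\ell+1}$), which does not affect the argument.
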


Proposition \ref{prop:rnntk} implies that similar to what has been proved for the FFNet, the ResNet trained by gradient flow is also equivalent to the kernel regression predictor with some NTK. Note that Proposition \ref{prop:rnntk} is an asymptotic result. We defer the proof, as it can be straightforwardly derived from the nonasymptotic bound as follows.

\begin{theorem}\label{thm:ntk}
For the ResNet defined in \eqref{eq:resnet_def}, given two inputs on the unit sphere $x,\tx \in \SSS^{D-1}$, $\epsilon < 0.5$, and
$$
    m\geq  C\epsilon^{-4}L^{2-2\gamma}\big(\log(320(L^2+1)/\delta)+1\big),
$$
where $C$ is a constant, with probability at least $1-\delta$ over the randomness of the initialization, we have
$$
\big  \vert \biginner{\nabla_\theta f}{\nabla_\theta \tf}  - \Omega_L(x,\tx) \big \vert \leq 2L\alpha^2\epsilon,
$$
where $\alpha = L ^{-\gamma}$ with $\gamma \in [0.5,1]$, $ \Omega_L(x,\tx)$ is defined in Proposition~\ref{prop:rnntk}.
\end{theorem}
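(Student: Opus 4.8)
The plan is to expand the empirical kernel $\langle\nabla_\theta f,\nabla_\theta\tf\rangle$ layer by layer and match each piece against the corresponding summand of $\Omega_L$. Write $b_\ell:=\nabla_{x_\ell}f\in\RR^m$ for the backward sensitivities (and $\tb_\ell$ the analogue at $\tx$), so that $b_L=v$ and, setting $g_\ell:=\sqrt{2/m}\,W_\ell x_{\ell-1}$, the chain rule gives $b_{\ell-1}=b_\ell+\tfrac{\sqrt2\,\alpha}{m}W_\ell^\top\diag(\sigma_0'(g_\ell))V_\ell^\top b_\ell$. Differentiating $f=v^\top x_L$ then yields the closed forms
\begin{align*}
  \langle\nabla_{V_\ell}f,\nabla_{V_\ell}\tf\rangle&=\frac{\alpha^2}{m}\,\langle b_\ell,\tb_\ell\rangle\,\langle\sigma_0(g_\ell),\sigma_0(\tg_\ell)\rangle,\\
  \langle\nabla_{W_\ell}f,\nabla_{W_\ell}\tf\rangle&=\frac{2\alpha^2}{m^2}\,\langle q_\ell,\tilde q_\ell\rangle\,\langle x_{\ell-1},\tx_{\ell-1}\rangle,\qquad q_\ell:=\diag(\sigma_0'(g_\ell))V_\ell^\top b_\ell,
\end{align*}
and $\langle\nabla_\theta f,\nabla_\theta\tf\rangle$ is the sum of these over $\ell=1,\dots,L$. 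Hence it suffices to control three families of quantities: the forward inner products $\langle x_\ell^{(1)},x_\ell^{(2)}\rangle$, the normalized backward inner products $\tfrac1m\langle b_\ell^{(1)},b_\ell^{(2)}\rangle$, and the mixed sums $\langle\sigma_0(g_\ell),\sigma_0(\tg_\ell)\rangle$ and $\tfrac1m\langle q_\ell,\tilde q_\ell\rangle$ that couple the two passes.

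First I would dispatch the forward pass with Theorem~\ref{thm:gp}: for the stated width, on an event $\mathcal E_1$ of probability $\ge1-\delta/2$, $|\langle x_\ell^{(1)},x_\ell^{(2)}\rangle-K_\ell(x^{(1)},x^{(2)})|\le\epsilon$ for all layers and all pairs in $\{(x,x),(x,\tx),(\tx,\tx)\}$; in particular $\|x_\ell\|^2=(1+\alpha^2)^\ell\pm\epsilon$ is bounded by a universal constant because $\gamma\ge\tfrac12$ forces $(1+\alpha^2)^L\le e$. Next, for the mixed sums: conditionally on the forward pass through layer $\ell-1$ the matrix $W_\ell$ is fresh, so the coordinate pairs $(g_{\ell,i},\tg_{\ell,i})$ are i.i.d.\ centered Gaussians whose covariance is $\tfrac2m$ times a $2\times2$ matrix within $O(\epsilon)$ of the Gram matrix $\Sigma_{\ell-1}$ of $(x,\tx)$ under $K_{\ell-1}$; standard concentration for Lipschitz (resp.\ indicator) functions of Gaussian vectors together with the Lipschitz continuity of $\hat\sigma,\widehat{\sigma'}$ in the covariance then gives $\langle\sigma_0(g_\ell),\sigma_0(\tg_\ell)\rangle=\Gamma_\sigma(K_{\ell-1})(x,\tx)+O(\epsilon)$ and $\tfrac1m\sum_i\sigma_0'(g_{\ell,i})\sigma_0'(\tg_{\ell,i})=\tfrac12\Gamma_{\sigma'}(K_{\ell-1})(x,\tx)+O(\epsilon)$ with high probability (these are the estimates already underlying Theorem~\ref{thm:gp}). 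Moreover, conditioning on the full forward pass and on $b_\ell$ (under which $V_\ell$ is independent of $b_\ell$ and fresh up to two linear constraints), $\langle q_\ell,\tilde q_\ell\rangle=\sum_i\sigma_0'(g_{\ell,i})\sigma_0'(\tg_{\ell,i})(V_\ell^\top b_\ell)_i(V_\ell^\top\tb_\ell)_i$ concentrates around $\langle b_\ell,\tb_\ell\rangle\cdot\tfrac m2\Gamma_{\sigma'}(K_{\ell-1})(x,\tx)$.

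The crux is the backward pass: by downward induction on $\ell$ I would establish $\big|\tfrac1m\langle b_\ell^{(1)},b_\ell^{(2)}\rangle-B_{\ell+1}(x^{(1)},x^{(2)})\big|$ small, the base case being $\tfrac1m\|v\|^2=1+O(\epsilon)=B_{L+1}+O(\epsilon)$. For the inductive step, conditioning on the forward pass and on $b_\ell$ (so $b_\ell$ is frozen and $W_\ell,V_\ell$ are Gaussian modulo two linear constraints each), expand $\langle b_{\ell-1},\tb_{\ell-1}\rangle=\langle b_\ell,\tb_\ell\rangle+\langle b_\ell,\tilde\Delta_\ell\rangle+\langle\Delta_\ell,\tb_\ell\rangle+\langle\Delta_\ell,\tilde\Delta_\ell\rangle$ with $\Delta_\ell:=b_{\ell-1}-b_\ell$: the two cross terms are linear in the fresh part of $V_\ell$ with conditional mean zero, hence negligible, while in $\langle\Delta_\ell,\tilde\Delta_\ell\rangle$ one must disentangle the two appearances of $W_\ell$ — inside $\sigma_0'(g_\ell)$ and inside $W_\ell W_\ell^\top$ — by splitting $W_\ell$ along $\mathrm{span}\{x_{\ell-1},\tx_{\ell-1}\}$ and its orthogonal complement, so that the conditional expectation of $W_\ell W_\ell^\top$ equals $(m-O(1))I$ and the surviving term concentrates around $\alpha^2\langle b_\ell,\tb_\ell\rangle\,\Gamma_{\sigma'}(K_{\ell-1})(x,\tx)$, reproducing the recursion $B_\ell=B_{\ell+1}\big(1+\alpha^2\Gamma_{\sigma'}(K_{\ell-1})\big)$. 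The delicate quantitative point is error accumulation: step $\ell$ perturbs $\tfrac1m\langle b,\tb\rangle$ multiplicatively by $1+\alpha^2\Gamma_{\sigma'}(K_{\ell-1})+\eta_\ell$, so after $L$ steps the total deviation is $\approx B_1\sum_\ell\eta_\ell$; because $\gamma\ge\tfrac12$ keeps $B_1=\prod_\ell\big(1+\alpha^2\Gamma_{\sigma'}(K_{\ell-1})\big)\le e$ bounded, the per-step errors add instead of compounding geometrically, and combining the $L^{2-2\gamma}$ factor (as in Theorem~\ref{thm:gp}) with the sub-exponential tails of the Gaussian-product sums — which force $\epsilon^2$-level accuracy at each of the $L$ steps, the source of the $\epsilon^{-4}$ dependence, exactly as in the feedforward bound of Theorem~\ref{finite-FFNets-NTK} — yields the stated width, the $\log(L^2)$ term coming from a union bound over the $O(L^2)$ events in the forward$\times$backward bookkeeping.

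Finally, on the intersection of the good events I would substitute the controlled quantities into the closed forms: using $B_{\ell+1},K_{\ell-1}=O(1)$ and Lipschitzness of $\Gamma_\sigma,\Gamma_{\sigma'}$ in their kernel argument, $\langle\nabla_{V_\ell}f,\nabla_{V_\ell}\tf\rangle=\alpha^2B_{\ell+1}\Gamma_\sigma(K_{\ell-1})(x,\tx)+O(\alpha^2\epsilon)$ and $\langle\nabla_{W_\ell}f,\nabla_{W_\ell}\tf\rangle=\alpha^2B_{\ell+1}K_{\ell-1}(x,\tx)\Gamma_{\sigma'}(K_{\ell-1})(x,\tx)+O(\alpha^2\epsilon)$, and summing the $2L$ terms gives $|\langle\nabla_\theta f,\nabla_\theta\tf\rangle-\Omega_L(x,\tx)|\le 2L\alpha^2\epsilon$ after rescaling constants into $C$. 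I expect the backward-pass analysis to be the main obstacle: it requires carefully disentangling the randomness shared by the forward and backward passes (the same weight matrices appear in both) while keeping the per-layer concentration errors from blowing up over the $L$ backward steps — and it is precisely the residual scaling $\alpha=L^{-\gamma}$, $\gamma\ge\tfrac12$, that tames this accumulation and hence the depth dependence of the required width, which is the very phenomenon the theorem is meant to highlight. The remainder is bookkeeping on top of Theorem~\ref{thm:gp} and standard Gaussian concentration.
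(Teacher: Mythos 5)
Your decomposition (the four families $T_{\ell,1},\ldots,T_{\ell,4}$), the forward-pass reduction to Theorem~\ref{thm:gp}, the downward induction on $\tfrac1m\langle b_\ell,\tb_\ell\rangle$ with Gaussian conditioning on $\mathrm{span}\{x_{\ell-1},\tx_{\ell-1}\}$ (resp.\ the two $V_\ell$-images) to decouple $W_\ell,V_\ell$ from the backward vectors, and the multiplicative error-accumulation argument keeping $B_1\le e$ are all exactly what the paper does, so the proposal is essentially the same proof. One inaccuracy worth fixing: $\widehat{\sigma'}(\rho)=1-\tfrac{1}{\pi}\cos^{-1}\rho$ is not Lipschitz in the covariance near $\rho=1$ (its derivative blows up like $(1-\rho^2)^{-1/2}$) but only H\"older-$1/2$, as captured by the paper's Lemma~\ref{lem:prime_relu}(2); this H\"older continuity --- not the sub-exponential tails per se --- is what forces the Gram matrices to be controlled at scale $\epsilon^2$ and hence is the true source of the $\epsilon^{-4}$ width requirement, even though you ultimately arrive at the right count.
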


\begin{proof}[Proof Sketch of Proposition~\ref{prop:rnntk} and Theorem~\ref{thm:ntk}]

For simplicity, we use $\phi_W: \RR^m \to \RR^m$ to denote
$
    \phi_W(z) = \sqrt{\frac{2}{m}} \sigma_0(Wz).
$
Then its derivative w.r.t. $z$ is as follows,
$
    \phi_W' (z) = \sqrt{\frac{2}{m}}D(Wz)W,
$
where $D(Wz)$ is an operator defined as
$D(Wz) \equiv \diag(\sigma_0'(Wz))
 = \diag([\ind {W_{1,\cdot}z \geq 0}, \cdots, \ind{W_{m,\cdot}z \geq 0}]^\top).
$

For simplicity, we denote $D_\ell = D(W_\ell x_{\ell-1})$, where $\ell=1,2,\cdots,L$. Note that $D_\ell$ is essentially the activation pattern of the $\ell$-th bottleneck layer on the input $x$. We denote $\tD_\ell$ for $\tx$ in a similar fashion. Then we have
$
\frac{\partial x_{\ell}}{\partial x_{\ell-1}} = I_m + \alpha \sqrt{\frac{1}{m}} V_\ell \sqrt{\frac{2}{m}}D_\ell W_\ell  .
$
For $\ell=1,\cdots,L$, we denote $b_{\ell+1} = \nabla_{x_\ell}f$. Then we have
$
    b_{\ell+1} = \big(v^\top \frac{\partial x_L }{\partial x_{L -1}} \frac{\partial x_{L -1}}{\partial x_{L -2}}\cdots \frac{\partial x_{\ell+1}}{\partial x_\ell} \big)^\top.
$

Combining all above derivations, we have
$
\nabla_{V_\ell} f = \frac{\alpha}{\sqrt{m}} b_{\ell+1} \cdot (\phi_{W_\ell}(x_{\ell-1}))^\top,$ and $
\nabla_{W_\ell} f = \frac{\alpha}{\sqrt{m}} \sqrt{\frac{2}{m}} D_\ell V_\ell^\top b_{\ell+1} \cdot x_{\ell-1}^\top .
$
Then we can derive the kernel
$
\sum_{\ell =1}^L \inner{\nabla_{W_\ell } f}{\nabla_{W_\ell } \tf} +\sum_{\ell =1}^L \inner{\nabla_{V_\ell } f}{\nabla_{V_\ell } \tf},~\textrm{where}
$
$
\!\inner{\nabla_{V_\ell} f}{\nabla_{V_\ell} \tf}\! =\! \alpha^2 \underbrace{\frac{1}{m} \!\langle b_{\ell+1},\tb_{\ell+1}\rangle}_{T_{\ell,1}} \!\underbrace{\inner{\phi_{W_\ell}(x_{\ell -1})}{\phi_{W_\ell }(\tx_{\ell -1})}}_{T_{\ell,2}},\\
\!\inner{\nabla_{W_\ell } f}{\nabla_{W_\ell } \tf} \!= \!\alpha^2 \underbrace{\! \inner{x_{\ell -1}}{\tx_{\ell -1}}}_{T_{\ell,3}}\!\underbrace{\frac{2}{m^2} \tb_{\ell +1} ^\top V_\ell   \tD_\ell  D_\ell  V_\ell ^\top b_{\ell +1}}_{T_{\ell,4}}.
$
Note that the concentration of $T_{\ell,3}$ can be shown by Theorem \ref{thm:gp}. We then show the concentration of $T_{\ell,1}$, $T_{\ell,2}$ and $T_{\ell,4}$, respectively.

For simplicity, we define two matrices for each layer,
\begin{align*}
   \widehat{\Sigma}_\ell (x,\tx) = \mattwo{\inner{x_{\ell }}{x_\ell }}{\inner{x_{\ell }}{\tx_\ell }}{\inner{\tx_{\ell }}{x_\ell }}{\inner{\tx_{\ell }}{\tx_\ell }},\ 
   \Sigma_\ell (x,\tx) = \mattwo{K_\ell (x,x)}{K_\ell (x,\tx)}{K_\ell (\tx,x)}{K_\ell (\tx,\tx)}.
\end{align*}
We define $\psi_\sigma : \mathcal{M}^{2}_+ \to \RR$ as $
    \psi_\sigma(\Sigma) = \EE_{(X,\tX) \sim \cN(0, \Sigma)} \sigma(X)\sigma(\tX)$ and $\psi_{\sigma'} :\mathcal{M}^{2}_+ \to \RR$ as 
$
    \psi_{\sigma'}(\Sigma) = \EE_{(X,\tX) \sim \cN(0, \Sigma)} {\sigma'}(X)\sigma'(\tX).
$
Note $\Gamma_\sigma(K_{\ell-1})=\psi_\sigma(\Sigma_{\ell-1})$ and $\Gamma_{\sigma'}(K_{\ell-1})=\psi_{\sigma'}(\Sigma_{\ell-1})$.

The following lemmas are technical results and very involved. Please see Appendix~\ref{sec:proof_resntk} for details.
\begin{lem}
\label{lem:skt:T1}
Suppose that for $\ell=1,\cdots,L$,
\begin{align}
\|\widehat{\Sigma}_{\ell-1}(x,\tx)-\Sigma_{\ell-1}(x,\tx)\|_{\max} \leq c\epsilon^2,~~m\geq C_1\epsilon^{-2}L^{2-2\gamma} \big(\log(80L^2/\delta)+1 \big), \label{eqn:large-enough-m}
\end{align}
with probability at least $1-3\delta$, we have
$| T_{\ell,1}-B_{\ell+1}(x,\tx) | \leq c_1\epsilon$, for $\ell=1,\cdots,L$,
where $C_1$, $c_1$, and $c$ are constants.
\end{lem}
\begin{lem}
\label{lem:skt:T2}
Suppose \eqref{eqn:large-enough-m} holds for $\ell=1,\cdots,L$. With probability at least $1-\delta$, we have
$
| T_{\ell,2} - \Gamma_\sigma(K_{\ell-1})(x,\tx) | \leq c_2\epsilon,
$  for $\ell=1,\cdots,L$, 
where $C_2$ and $c_2$ are constants.
\end{lem}
\begin{lem}
\label{lem:skt:T4}
Suppose that \eqref{eqn:large-enough-m} holds for $\ell=1,\cdots,L$. With probability at least $1-3\delta$, we have $| T_{\ell,4} - B_{\ell+1}(x,\tx)\Gamma_{\sigma'}(K_{\ell-1})(x,\tx) | \leq c_3\epsilon
$,  for $\ell=1,\cdots,L$,
where $c_3$ is a constant.
\end{lem}
We remark: (1) Lemma \ref{lem:skt:T1} is proved by reverse induction; (2) Lemma~\ref{lem:skt:T2} exploits the concentration properties of $W_\ell$ and local Lipschitz properties of $\psi_\sigma$; (3) We prove Lemma~\ref{lem:skt:T4} and Lemma~\ref{lem:skt:T1} simultaneously with the H\"{o}lder continuity of $\psi_{\sigma'}$. Combining all results above, we complete Theorem~\ref{thm:ntk}. Moreover, taking $m\rightarrow \infty$, we have Proposition~\ref{prop:rnntk}.
\end{proof}


\vspace{-0.15in}
\section{Deep Feedforward v.s. Residual Networks}
\label{sec:limitingntk}
\vspace{-0.1in}

To compare the NTKs associated with deep FFNets and ResNets, we consider proper normalization, which avoids the kernel function blowing up or vanishing as the depth $L$ goes to infinity.

\vspace{-0.1in}
\subsection{The Limiting NTK of the Feedforward Networks}
\vspace{-0.05in}

Recall that the NTK of the $L$-layer FFNet defined in \eqref{eq:ffnet_def} is
$
    \Omega_L (x,\tx) = \sum_{\ell =1}^L \big[ K_{\ell -1}(x,\tx) \cdot \prod_{i=\ell} ^L \Gamma_{\sigma'}(K_{i-1})(x,\tx) \big].
$
One can check that $\Omega_L (x,x)=L$ for all $x\in\SSS^{D-1}$. To avoid $\Omega_L (x,x)\rightarrow\infty$, as $L\rightarrow \infty$. We consider a normalized version as 
$$    \overline{\Omega}_L (x,\tx) =\frac{1}{L }\Omega_L (x,\tx).
$$ We characterize the impact of the depth $L$ on the NTK in the following theorem.
\begin{theorem}
\label{thm:ffntkinf}
For the NTK of the FFNet, as $L\rightarrow\infty$, given $x,\tx \in \SSS^{D-1}$ and $|1-x^\top\tx|\geq\delta>0,$ where $\delta$ is a constant and does not scale with 
$L$, we have
$$\Big\vert \overline{\Omega}_L(x,\tilde{x})-1/4 \Big\vert = \mathcal{O}\Big(\frac{\mathrm{polylog}(L)}{L}\Big),
$$ When $x = \tx$, we have $\overline{\Omega}_L (x,\tx)=1, \forall L$.
\end{theorem}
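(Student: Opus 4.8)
The plan is to reduce the statement to a one‑dimensional dynamical system. Since the inputs lie on $\SSS^{D-1}$, every GP kernel in Proposition~\ref{GP-Kernel-FFNet} stays normalized: $\hat\sigma(1)=1$ forces $K_\ell(x,x)=K_\ell(\tx,\tx)=1$ by induction, and $\widehat{\sigma'}(1)=1$. In particular the case $x=\tx$ is immediate from Proposition~\ref{infinite-NTK-FFNets}: every factor there equals $1$, so $\Omega_L(x,x)=\sum_{\ell=1}^L 1=L$ and $\overline\Omega_L(x,x)=1$ for all $L$. For $x\neq\tx$ I would write $\rho_\ell:=K_\ell(x,\tx)=\overline K_\ell(x,\tx)$ and $\theta_\ell:=\arccos\rho_\ell$; normalization gives the scalar recursion $\rho_\ell=\hat\sigma(\rho_{\ell-1})$ and $\Gamma_{\sigma'}(K_{i-1})(x,\tx)=\widehat{\sigma'}(\rho_{i-1})=1-\theta_{i-1}/\pi$, so that $\Omega_L(x,\tx)=\sum_{\ell=1}^{L}\rho_{\ell-1}\prod_{i=\ell}^{L}(1-\theta_{i-1}/\pi)$. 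The hypothesis $|1-x^\top\tx|\ge\delta$ enters only through $\theta_0=\arccos(x^\top\tx)\ge\arccos(1-\delta)>0$, a constant independent of $L$.

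Next I would analyze the recursion for $\theta_\ell$. Writing $\rho=\cos\theta$ one has the exact identity $\hat\sigma(\cos\theta)=\cos\theta+\tfrac1\pi(\sin\theta-\theta\cos\theta)$, whose correction term is $\ge 0$ on $[0,\pi]$; hence $\theta_\ell$ is nonincreasing, and by continuity at the unique fixed point $\theta=0$ it decreases to $0$. The point is to get the precise rate. Using $\sin\theta-\theta\cos\theta=\tfrac{\theta^3}{3}(1+\mathcal O(\theta^2))$ and $1-\cos\theta=\tfrac{\theta^2}{2}(1+\mathcal O(\theta^2))$, the identity gives, once $\theta_{\ell-1}$ is below an absolute threshold,
\[
\frac{1}{\theta_\ell}=\frac{1}{\theta_{\ell-1}}+\frac{1}{3\pi}+\mathcal O(\theta_{\ell-1}).
\]
There is $L_0=L_0(\delta)$ (not depending on $L$) after which this holds with the error at most half the constant term; telescoping first yields the crude bound $\theta_\ell=\mathcal O(1/\ell)$, which fed back into the recursion upgrades to $1/\theta_L=L/(3\pi)+\mathcal O(\log L)+\mathcal O_\delta(1)$, i.e. $\theta_\ell=\tfrac{3\pi}{\ell}+\mathcal O((\log\ell)/\ell^2)$ and $\rho_\ell=1-\mathcal O(1/\ell^2)$.

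With the rate in hand I would evaluate the sum. Taking logarithms, $\log\prod_{i=\ell}^L(1-\theta_{i-1}/\pi)=-\tfrac1\pi\sum_{i=\ell}^L\theta_{i-1}+\mathcal O(\sum_{i=\ell}^L\theta_{i-1}^2)$, and inserting $\theta_{i-1}=3\pi/i+\mathcal O((\log i)/i^2)$ together with $\sum_{i=\ell}^L 1/i=\ln(L/\ell)+\mathcal O(1/\ell)$, $\sum_{i\ge\ell}(\log i)/i^2=\mathcal O((\log\ell)/\ell)$ and $\sum_{i\ge\ell}\theta_{i-1}^2=\mathcal O(1/\ell)$ gives, for $\ell\ge L_0$,
\[
\prod_{i=\ell}^L\Big(1-\frac{\theta_{i-1}}{\pi}\Big)=\Big(\frac{\ell}{L}\Big)^{3}\Big(1+\mathcal O\big((\log\ell)/\ell\big)\Big).
\]
Since $\rho_{\ell-1}=1-\mathcal O(1/\ell^2)$, $\sum_{\ell=1}^L\ell^3=L^4/4+\mathcal O(L^3)$ and $\sum_{\ell=L_0}^L\ell^2\log\ell=\mathcal O(L^3\log L)$, summing gives $\sum_{\ell=L_0}^L\rho_{\ell-1}\prod_{i=\ell}^L(1-\theta_{i-1}/\pi)=L/4+\mathcal O(\log L)$; the first $L_0-1$ terms contribute only $\mathcal O_\delta(1)$, since there each product is a $\delta$‑dependent constant times $\Theta(L^{-3})$. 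Hence $\Omega_L(x,\tx)=L/4+\mathcal O(\log L)$ and $\overline\Omega_L(x,\tx)=1/4+\mathcal O((\log L)/L)=1/4+\mathcal O(\mathrm{polylog}(L)/L)$.

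The main obstacle is the middle step: identifying the exact constant $3\pi$ in $\theta_\ell\sim 3\pi/\ell$ and, more delicately, controlling the error to $\mathcal O((\log\ell)/\ell^2)$ — equivalently a multiplicative $1+\mathcal O((\log\ell)/\ell)$ inside the product. A looser estimate $\theta_\ell\asymp 1/\ell$ with a coarse error already gives $\overline\Omega_L\to 1/4$, but only at a polynomial rate $\mathcal O(L^{-\epsilon})$; obtaining the claimed polylog‑over‑$L$ rate forces us to carry the second‑order term of the $\hat\sigma$ expansion through both the telescoping of $1/\theta_\ell$ and the summation over $\ell$, and to check that all implicit constants past the $\delta$‑dependent burn‑in $L_0$ are absolute.
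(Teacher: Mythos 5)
Your proof is correct and reaches the stated conclusion (in fact with a slightly sharper rate, $\mathcal{O}((\log L)/L)$ rather than $\mathcal{O}(\mathrm{polylog}(L)/L)$), but it takes a genuinely different route from the paper. The paper works directly with $e_\ell = 1-K_\ell$, deriving two-sided bounds $e_{\ell-1} - e_{\ell-1}^{3/2}/\pi \le e_\ell \le e_{\ell-1} - 2\sqrt{2}\,e_{\ell-1}^{3/2}/(3\pi)$, and then constructs explicit sandwiching functions of the form $\cos\bigl(\pi(1-(n/(n+1))^{3\pm \log(L)^2/L})\bigr)$ that it verifies by induction dominate/are dominated by $K_n$; the exponent perturbations $\pm\log(L)^2/L$ and the shift $\log(L)^p$ are bookkeeping devices to absorb lower-order errors, and the polylog in the final rate comes from these. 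You instead pass to the angle $\theta_\ell = \arccos K_\ell$, exploit the exact identity $\hat\sigma(\cos\theta) = \cos\theta + \tfrac{1}{\pi}(\sin\theta - \theta\cos\theta)$ to get the first-order difference equation $1/\theta_\ell = 1/\theta_{\ell-1} + 1/(3\pi) + \mathcal{O}(\theta_{\ell-1})$, telescope after a $\delta$-dependent burn-in $L_0$, bootstrap the crude $\theta_\ell = \mathcal{O}(1/\ell)$ bound into $\theta_\ell = 3\pi/\ell + \mathcal{O}((\log \ell)/\ell^2)$, and then control $\log\prod(1-\theta_{i-1}/\pi)$ directly. The arccos change of variables linearizes the recursion and makes the constant $3\pi$ transparent; what the paper's approach buys is that the sandwich functions interact in closed form with $\hat\sigma$ and $\widehat{\sigma'}$ (so the inductive verification is algebraic series-expansion), at the cost of an extra polylog power in the error. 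One small thing you gloss over but that is easy to supply: $\theta_\ell\downarrow 0$ (strict monotonicity plus the unique fixed point at $0$) so that the burn-in $L_0$ exists; the paper derives the same fact from $K_\ell$ being increasing and bounded. Both are fine.
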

 \begin{proof}[Proof Sketch of Theorem \ref{thm:ffntkinf} ]
 The main challenge comes from the sophisticated recursion of the kernel. To handle the recursion, we employ the following bound.
 \begin{lem}
 \label{lem:skt:bounds}
 When $L$ is large enough, we have
 \begin{align*}
 \cos \left(\pi  \left(1-\left(\frac{n}{n+1}\right)^{3+\frac{\log(L)^2}{L}}\right)\right)\leq K_n(x,\tx)\leq \cos \left(\pi  \left(1-\left(\frac{n+\log(L)^p}{n+\log(L)^p+1}\right)^{3-\frac{\log(L)^2}{L}}\right)\right),
 \end{align*}
 where $p$ is a positive constant depending on $\delta$.
 \end{lem}
 By Lemma \ref{lem:skt:bounds}, we can further bound $\prod_{i=\ell}^{L}\Gamma_{\sigma'}(K_{i-1}(x,\tx))$ by
 \begin{align}
 \Big(\frac{\ell-1}{L}\Big)^{3+\frac{\log(L)^2}{L}}\leq\prod_{i=\ell}^{L}\Gamma_{\sigma'}(K_{i-1}(x,\tx))\leq\Big(\frac{\ell+\log(L)^p-1}{L+\log(L)^p}\Big)^{3-\frac{\log(L)^2}{L}}
 \end{align}
 Hence we can measure the rate of convergence. The detailed proof is the following.
 \end{proof}


As can be seen from Theorem \ref{thm:ffntkinf}, the NTK of the FFNet converges to a limiting form, i.e.,
\begin{align*}
   \overline{\Omega}_{\infty}(x,\tx)= \lim_{L  \to \infty} \overline{\Omega}_L(x,\tx) = \left\{
    \begin{array}{cc}
        1/4, & x \ne \tx \\
        1, &  x=\tx
    \end{array}.
    \right.
\end{align*}
For simplicity, we refer to $\overline{\Omega}_{\infty}$ as the limiting NTK of the FFNets.

The limiting NTK of the FFNets is actually a non-informative kernel. For example, we consider a kernel regression problem with $n$ independent observations $\{(x_i,y_i)\}_{i=1}^n$, where $x_i\in\RR^D$ is the feature vector, and $y_i\in\RR$ is the response. Without loss of generality, we assume that the training samples have been properly processed such that $x_i\ne x_j$ for $i\ne j$, and $\sum_{i=1}^ny_i=0.$ By the Representer theorem \citep{friedman2001elements}, we know that the kernel regression function can be represented by
$
f(\cdot) = \sum_{i=1}^n\beta_i \overline{\Omega}_{\infty}(x_i,\cdot).
$
We then minimize the regularized empirical risk as follows.
\begin{align}\label{kernel-regression-erm}
\hat{\beta}=\min_\beta \|y- \widetilde{\Omega}\beta\|^2+\lambda\beta^\top  \widetilde{\Omega}\beta,
\end{align} 
where $\beta=(\beta_1,...\beta_n)^\top\in\RR^{n}$, $y=(y_1,...,y_n)^\top\in \RR^n$, $\widetilde{\Omega}\in\RR^{n\times n}$ with $\widetilde{\Omega}_{ij}=\overline{\Omega}_{\infty}(x_i,x_j)$, and $\lambda$ is the regularization parameter and usually very small for large $n$. One can check that \eqref{kernel-regression-erm} admits a closed form solution $\hat{\beta}=(\widetilde{\Omega}+\lambda I_n)^{-1}y.$ Note that we have 
$ \tilde{\Omega}+\lambda I_n =1/4J_n + (\lambda+3/4)  I_n$, which is the sum of a diagonal matrix and a rank-one matrix and $J_n$ is $n\times n$ all-ones matrix. By Sherman -- Morrison formula $$(A+uv^\top)^{-1}=A^{-1}-\frac{A^{-1}uv^\top A^{-1}}{1+v^\top A^{-1}u},\  \text{we have}
\ \hat{\beta}=\frac{1}{\lambda +3/4}\Big(I_n-\frac{1}{n+4\lambda +3} J_n\Big)y.$$
Then we further have
$
f(x_j) = \sum_{i=1}^n\hat{\beta}_i \overline{\Omega}_{\infty}(x_i,x_j)=\frac{3}{4\lambda +3}y_j.
$

As can be seen, for sufficiently large $n$ and sufficiently small $\lambda$, we have $f(x_j)\approx y_j$, which means that we can fit the training data well. However, for an unseen data point $x^*$, where $x^*\neq x_1,...,x_n$, the regression function $f$ always gives an output $0$, i.e.,
$$
f(x^*) = \sum_{i=1}^n\hat{\beta}_i \overline{\Omega}_{\infty}(x_i,x^*) = \frac{1}{4}\sum_{i=1}^n\hat{\beta}_i =0.
$$
This indicates that the function class induced by the limiting NTK of the FFNets $\overline{\Omega}_{\infty}$ is not learnable.

\vspace{-0.1in}
\subsection{The Limiting NTK of the Residual Networks}
\vspace{-0.05in}

Recall that the infinite-width NTK of the $L$-layer ResNet is
$$
\Omega_L (x,\tx) = \alpha^2 \sum_{\ell =1}^L \Big[B_{\ell +1}(x,\tx) \Gamma_{\sigma}(K_{\ell -1})(x,\tx) + K_{\ell -1}(x,\tx)B_{\ell +1}(x,\tx)\Gamma_{\sigma'}(K_{\ell -1})(x,\tx)\Big],
$$
where $B_{L +1}(x,\tx)=1$ and for $\ell=1,..,L-1$, $B_{\ell +1}(x,\tx) = \prod_{i=\ell} ^{L -1} (1+\alpha^2 \Gamma_{\sigma'}(K_i)(x,\tx)).$
One can check that for $x \in \SSS^{D-1}$, $\Omega_L (x,x) = 2L \alpha^2  (1+\alpha^2)^{L -1}.$ 

Different from the NTK of the FFNet, $\Omega_L (x,x)\to 0$ as $L\to\infty$. Therefore, we also consider the normalized NTK for the ResNet to prevent the kernel from vanishing. Specifically, the normalized NTK of the ResNet on $\SSS^{D-1} \times \SSS^{D-1}, \ \overline{\Omega}_L (x,\tx),$ is defined as follows,
\begin{align}
\label{limiting-resnet-ntk}
   \frac{1/(2L)}{(1+\alpha^2)^{L -1}} \sum_{\ell =1}^L \Big[B_{\ell +1}(x,\tx) \Gamma_{\sigma}(K_{\ell -1})(x,\tx) + K_{\ell -1}(x,\tx)B_{\ell +1}(x,\tx)\Gamma_{\sigma'}(K_{\ell -1})(x,\tx)\Big].
    \end{align}
We then analyze the limiting NTK of the ResNets. Recall that $\alpha = L^{-\gamma}$. Our next theorem only considers $\gamma=1$, i.e., $\alpha=1/L$. 
\begin{theorem}
\label{thm:resntkinf}
For the NTK of the ResNet, as $L\to \infty$, given $\alpha=\frac{1}{L}$ and $x,\tx \in \SSS^{D-1}$ such that $|1-x^\top\tx|\geq\delta>0,$ where $\delta$ is a constant and does not scale with 
$L$, we have
$$
\left\vert \overline{\Omega}_L (x,\tilde{x})-\overline{\Omega}_1 (x,\tilde{x}) \right\vert  = \mathcal{O}\left(1/L\right),
$$
where $\overline{\Omega}_1(x,\tilde{x})=\frac{1}{2}\left(\hat \sigma(x^\top\tx)+x^\top\tx \cdot \hat{\sigma'}(x^\top\tx)\right)$.
\end{theorem}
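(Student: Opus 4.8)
The plan is to exploit the fact that, with $\alpha = 1/L$ (hence $\alpha^2 = 1/L^2$), the two recursions governing the ResNet NTK barely move: the kernel recursion of Proposition~\ref{prop:GPKRN} changes $K_\ell$ by only $\cO(1/L^2)$ per layer, and each factor in $B_{\ell+1}(x,\tx) = \prod_{i=\ell}^{L-1}\big(1+\alpha^2\Gamma_{\sigma'}(K_i)(x,\tx)\big)$ differs from $1$ by $\cO(1/L^2)$. Since there are at most $L$ layers, every quantity entering $\Omega_L(x,\tx)$ stays within $\cO(1/L)$ of its ``layer-$0$'' value \emph{uniformly in $\ell$}, so $\overline{\Omega}_L(x,\tx)$ is essentially a Ces\`{a}ro average of $L$ nearly identical summands; identifying their common value with $2\,\overline{\Omega}_1(x,\tx) = \hat\sigma(x^\top\tx)+x^\top\tx\,\widehat{\sigma'}(x^\top\tx)$ and tracking the $\cO(1/L)$ corrections then yields the rate.

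First I would control the diagonal and the kernel itself. Using $\hat\sigma(1)=1$ and $\widehat{\sigma'}(1)=1$, the recursion gives $K_\ell(x,x) = (1+\alpha^2)^\ell$, so $1\le K_\ell(x,x)\le (1+1/L^2)^L \le e^{1/L} = 1+\cO(1/L)$ for all $\ell\le L$ (and likewise at $\tx$). Since $0\le\hat\sigma\le 1$ on $[-1,1]$, we get $0\le \Gamma_\sigma(K_{\ell-1})(x,\tx) = \sqrt{K_{\ell-1}(x,x)K_{\ell-1}(\tx,\tx)}\,\hat\sigma(\overline{K}_{\ell-1}(x,\tx)) \le e$, and telescoping $K_\ell(x,\tx) = x^\top\tx + \alpha^2\sum_{i<\ell}\Gamma_\sigma(K_i)(x,\tx)$ gives $|K_\ell(x,\tx)-x^\top\tx| \le e\,\ell\,\alpha^2 \le e/L$; dividing, $\overline{K}_\ell(x,\tx) = x^\top\tx + \cO(1/L)$ uniformly in $\ell$. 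The same idea bounds $B_{\ell+1}$: since $\widehat{\sigma'}(\rho)=\tfrac12+\tfrac1\pi\arcsin\rho\in[0,1]$, each factor of $B_{\ell+1}(x,\tx)$ lies in $[1,1+1/L^2]$, so $1\le B_{\ell+1}(x,\tx)\le e^{1/L}=1+\cO(1/L)$ uniformly; the same estimate gives $(1+\alpha^2)^{L-1}=1+\cO(1/L)$, so the scalar prefactor in \eqref{limiting-resnet-ntk} equals $\tfrac1{2L}\big(1+\cO(1/L)\big)$.

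Next I would ``freeze'' each summand. Since $\tfrac{d}{d\rho}\hat\sigma = \widehat{\sigma'}\in[0,1]$ on $[-1,1]$, $\hat\sigma$ is $1$-Lipschitz there, so $\Gamma_\sigma(K_{\ell-1})(x,\tx) = (1+\cO(1/L))\,\hat\sigma(x^\top\tx+\cO(1/L)) = \hat\sigma(x^\top\tx)+\cO(1/L)$; for $\Gamma_{\sigma'}(K_{\ell-1})(x,\tx) = \widehat{\sigma'}(\overline{K}_{\ell-1}(x,\tx))$ I would use that $\widehat{\sigma'}$ has derivative $\tfrac{1}{\pi\sqrt{1-\rho^2}}$, bounded on compact subintervals of $(-1,1)$ — the hypothesis $|1-x^\top\tx|\ge\delta$ together with $\overline{K}_{\ell-1}(x,\tx)=x^\top\tx+\cO(1/L)$ keeps the argument away from the degenerate point $\rho=1$, and a symmetric estimate handles the neighborhood of $\rho=-1$ — giving $\Gamma_{\sigma'}(K_{\ell-1})(x,\tx)=\widehat{\sigma'}(x^\top\tx)+\cO(1/L)$. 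Hence each summand
\[
  T_\ell := B_{\ell+1}\Gamma_\sigma(K_{\ell-1}) + K_{\ell-1}B_{\ell+1}\Gamma_{\sigma'}(K_{\ell-1}) = \hat\sigma(x^\top\tx)+x^\top\tx\,\widehat{\sigma'}(x^\top\tx)+\cO(1/L)
\]
uniformly in $\ell\le L$, so $\sum_{\ell=1}^L T_\ell = L\big(\hat\sigma(x^\top\tx)+x^\top\tx\,\widehat{\sigma'}(x^\top\tx)\big)+\cO(1)$, and multiplying by $\tfrac1{2L}(1+\cO(1/L))$ gives $\overline{\Omega}_L(x,\tx) = \tfrac12\big(\hat\sigma(x^\top\tx)+x^\top\tx\,\widehat{\sigma'}(x^\top\tx)\big)+\cO(1/L)$. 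Evaluating \eqref{limiting-resnet-ntk} at $L=1$ (so $\alpha=1$, $B_2=1$, $K_0=x^\top\tx$) confirms the leading term is exactly $\overline{\Omega}_1(x,\tx)$.

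The hard part will be the ``uniformly in $\ell$'' bookkeeping: the perturbation bounds for $K_\ell$ and $B_{\ell+1}$ feed into one another (the bound on $\overline{K}_i$ is needed to bound $B_{\ell+1}$, and both feed into $T_\ell$), so one must run them through a single induction — the same coupling already present in Propositions~\ref{prop:GPKRN}--\ref{prop:rnntk} — and, crucially, verify that $\overline{K}_{\ell-1}(x,\tx)$ never drifts into a region where $\widehat{\sigma'}$ has an unbounded Lipschitz constant, which is exactly where the non-degeneracy condition $|1-x^\top\tx|\ge\delta$ enters. Once these uniform estimates are in place the convergence $\overline{\Omega}_L\to\overline{\Omega}_1$ is immediate and the $\cO(1/L)$ rate falls out of the telescoping bounds above.
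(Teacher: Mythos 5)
Your proposal follows essentially the same route as the paper's proof: the paper also passes to the normalized kernel $S_{\ell,L}=K_{\ell,L}/(1+\alpha^2)^\ell$, shows $S_{\ell,L}-S_0 = \mathcal{O}(\ell/L^2)$ by a telescoping/monotonicity argument, bounds $P_{\ell+1,L}=B_{\ell+1}(1+\alpha^2)^{-(L-\ell)}$ via $1-P_{\ell+1,L}=\mathcal{O}((L-\ell)/L^2)$, and then freezes each summand using the mean-value estimate $|\widehat{\sigma'}(S_{\ell-1,L})-\widehat{\sigma'}(S_0)|\le |S_{\ell-1,L}-S_0|/(\pi\sqrt{1-S_{\ell-1,L}^2})$, with the non-degeneracy hypothesis keeping the denominator bounded. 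One small caution: the phrase ``a symmetric estimate handles the neighborhood of $\rho=-1$'' is not literally justified by the stated hypothesis (which only keeps $\overline{K}_\ell$ away from $+1$), but the paper's own proof makes the same silent assumption when it substitutes $C(\delta)=\sqrt{1-(1-\delta)^2}$ for $\sqrt{1-S_{\ell-1,L}^2}$; in both cases what saves the argument is that $x,\tx$ are fixed, so $1+x^\top\tx$ is a positive constant absorbed into the $\mathcal{O}(1/L)$.
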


 \begin{proof}[Proof Sketch of Theorem~\ref{thm:resntkinf}] The main technical challenge here is also handling the recursion. Specifically, we denote $K_{\ell,L}$ to be the $\ell$-th layer of the GP kernel when the depth is $L$, which is originally denoted by $K_\ell(x,\tx)$. Let $S_0=K_0(x,\tx)$ and $S_{\ell,L}=\frac{K_{\ell,L}}{(1+\alpha^2)^\ell}=\frac{K_{\ell,L}}{(1+1/L^2)^\ell}.$ We have $\Gamma_\sigma(K_{\ell,L})=(1+\alpha^2)^\ell\hat{\sigma}(S_{\ell,L})$ and $\Gamma_{\sigma'}(K_{\ell,L})=\widehat{\sigma'}(S_{\ell,L})$. We rewrite the recursion of $K_{\ell,L}$ as
 $
 S_{\ell,L}=\frac{S_{\ell-1,L}+\alpha^2\hat{\sigma}(S_{\ell-1,L})}{(1+\alpha^2)}\geq S_{\ell-1,L},
 $ which eases the technical difficulty. However, the proof is still highly involved, and more details can be found in Appendix~\ref{proof:limitingrnntk}.
 \end{proof}

Note that we do not consider $\gamma=0.5$ for technical concerns, as $\overline{\Omega}_L (x,\tx)$ in \eqref{limiting-resnet-ntk} becomes very complicated to compute, as $L\rightarrow \infty$. Also we find that considering $\gamma=1$ is sufficient to provide us new theoretical insights on ResNets (See more details in Section~\ref{sec:exp}).

\begin{figure*}[htb!]
\vspace{-0.2in}
\begin{center}
    \subfigure[\it FFNets]{
        \includegraphics[width=0.27\textwidth]{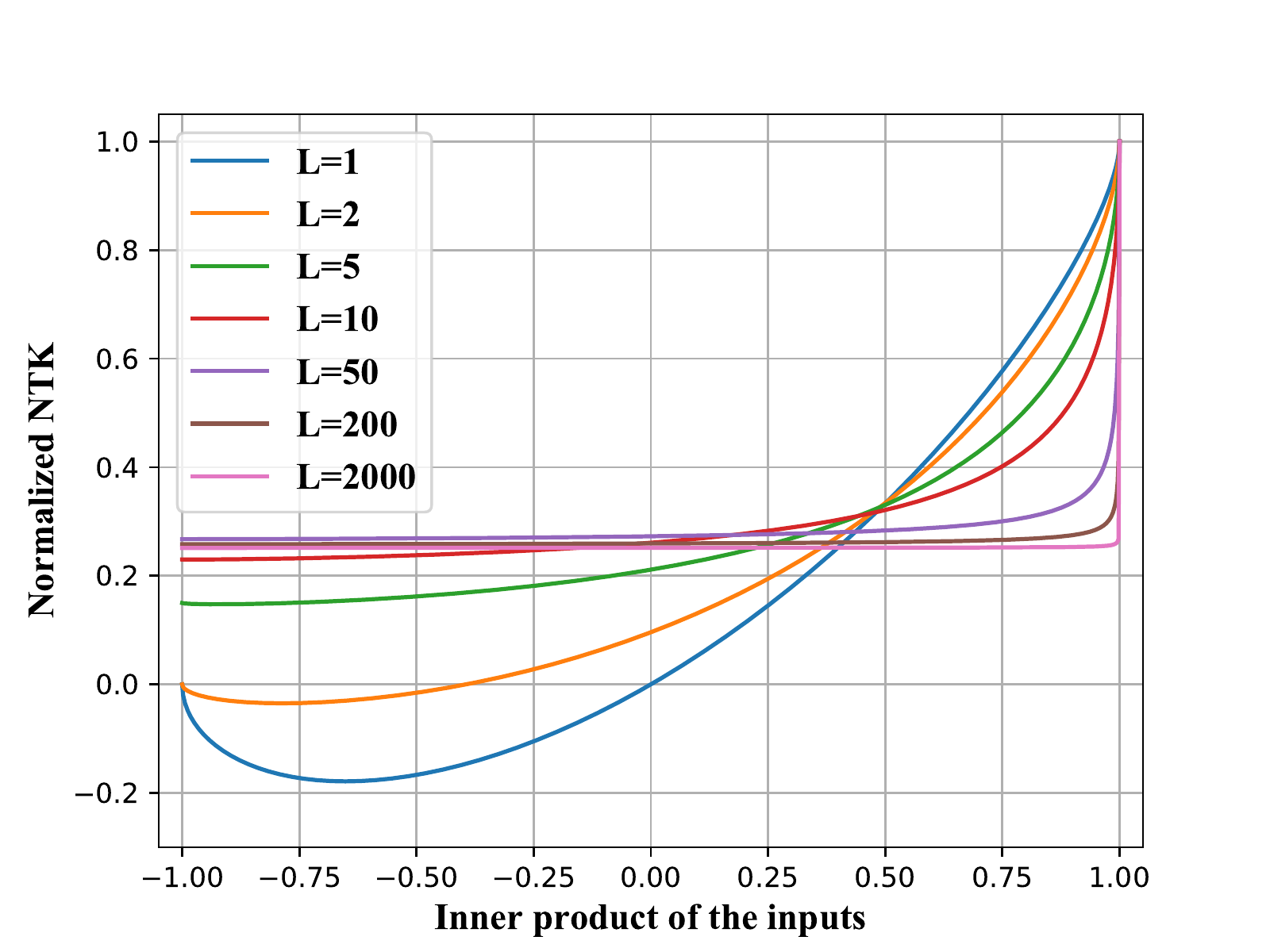}\label{ffntk}
        \vspace{-0.125in}
    }
    \subfigure[\it ResNets with $\gamma=1$]{
        \includegraphics[width=0.27\textwidth]{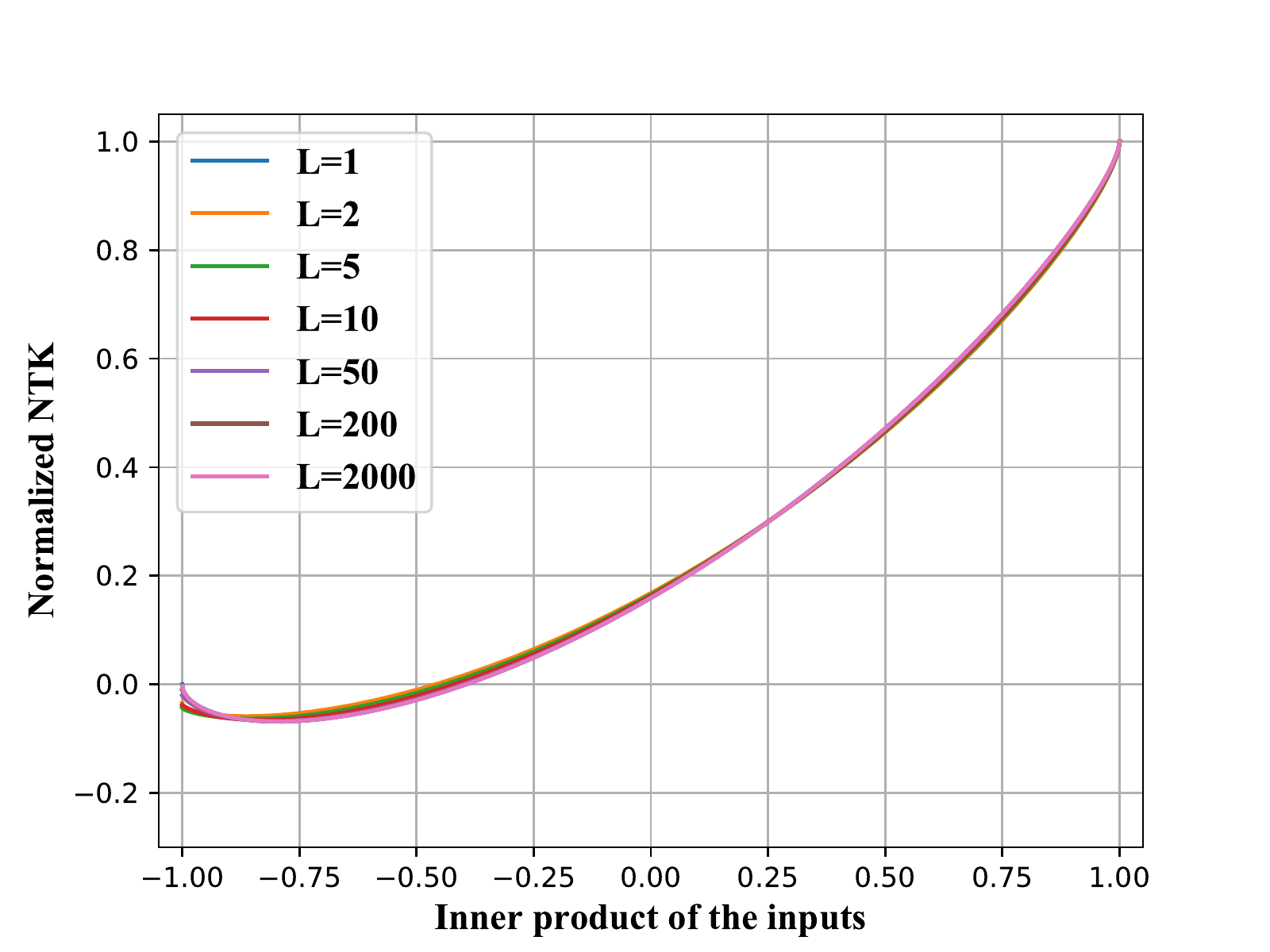}\label{resnetk}
        \vspace{-0.125in}
    }
    \subfigure[\it ResNets with $\gamma=0.5$]{
        \includegraphics[width=0.27\textwidth]{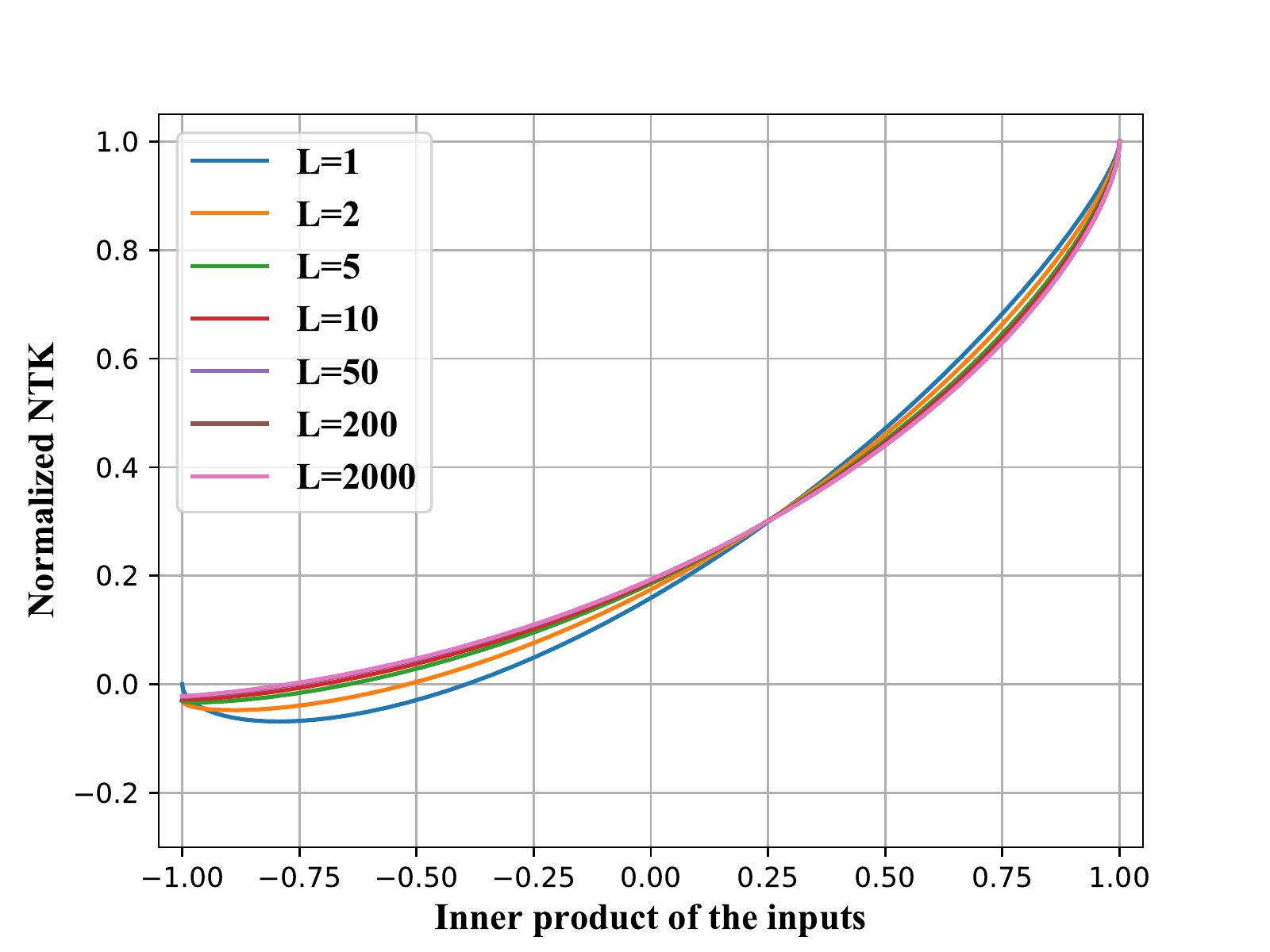}\label{resnetk5}
        \vspace{-0.125in}
    }
    \vspace{-0.125in}
    \caption{\it Normalized Neural Tangent Kernels Associated with Different Deep Networks.}
    \vspace{-0.15in}
\end{center}
\end{figure*}

Different from FFNets, the class of functions induced by the NTKs of the ResNets does not significantly change, as the depth $L$ increases. Surprisingly, we actually have $\overline{\Omega}_{\infty}=\overline{\Omega}_1$ for $\alpha=1/L$, i.e., infinitely deep and $1$-layer ResNets induce the same NTK. To further visualize such a difference, we plot the NTKs of the ResNets in Fig.~\ref{resnetk} and~\ref{resnetk5} for $\alpha=1/L$ and $\alpha=1/\sqrt{L}$, respectively. As can be seen, the increase of the depth yields very small changes to the NTKs. This partially explains why increasing the depth of the ResNet does not significantly deteriorate the generalization.

Moreover, as long as $x\ne \widetilde{x}$, i.e., $\langle x, \widetilde{x}\rangle\neq 1$, the limiting NTK of the FFNets always yields $1/4$ regardless how different $x$ is from $\widetilde{x}$. In contrast, the residual networks do not suffer from this drawback. The limiting NTK of the ResNets can greatly distinguish the difference between $x$ and $\widetilde{x}$, e.g., $\langle x, \widetilde{x}\rangle=-0.5$, $0$, and $0.5$ yield different values. Therefore, for an unseen data point, the corresponding regression model does not always output $0$, which is in sharp contrast to that of the limiting NTK of the FFNets.


\vspace{-0.15in}
\section{Experiments}
\label{sec:exp}
\vspace{-0.1in}

We demonstrate the generalization properties of the kernel regression based on the NTKs of the FFNets and the ResNets with varying depths. Our experiments follow similar settings to \cite{arora2019exact,arora2019fine}. We adopt two widely used data sets -- MNIST \citep{lecun1998mnist} and CIFAR10 \citep{krizhevsky2009cifar}, which are popular in existing literature. Note that both MNIST and CIFAR10 contains 10 classes of images. For simplicity, we select 2 classes out of 10 (digits ``0'' and ``8'' for MNIST, categories ``airplane'' and ``ship'' for CIFAR10), respectively, which results in two binary classification problems, denoted by MNIST2 and CIFAR2. 

Similar to \cite{arora2019exact,arora2019fine}, we use the kernel regression model for classification. Specifically, given the training data $(x_1,y_1),\cdots,(x_n,y_n)$, where $x_i\in\RR^D$ and $y_i\in\{-1,+1\}$ for $i=1,...,n$, we compute the kernel matrix $\tilde{K} = [\tilde{K}_{ij}]_{i,j=1}^n$ using the NTKs associated with the FFNets and the ResNets, where $\tilde{K}_{ij} = \overline{\Omega}_L(x_i,x_j)$. Then we compute the kernel regression function 
$
f(x) = \sum_{i=1}^n\alpha_i\overline{\Omega}_L(x,x_i),
$
where $[\alpha_1,...,\alpha_n]^\top=(\tilde{K}+\lambda I)^{-1}y$, $y=[y_1,...,y_n]^\top$ and $\lambda=0.1/n$ is a very small constant. We predict the label of $x$ to be $\sign(f(x))$.

Our experiments adopt the NTKs associated with three network architectures: (1) FFNets, (2) ResNets ($\gamma=0.5$) and (3) ResNets ($\gamma=1$). We set $n=200$ and $n=2000$. For each data set, we randomly select $n$ training data points ($n/2$ for each class) and $2000$ testing data points ($1000$ for each class). When training the kernel regression models, we normalize all training data points to have zero mean and unit norm. We repeat the procedure for 20 simulations. We find that the training errors of all simulations ($L$ varies from $1$ to $2000$) are $0.0$, which means that all NTK-based models are sufficient to overfit the training data, regardless $n=200$ or $n=2000$. The test accuracies of the kernel regression models with different kernels and depths are shown in Figure~\ref{fig:exp}.

As can be seen, the test accuracies of the kernel regression models of ResNets (both $\gamma=0.5$ and $\gamma=1$) are not sensitive to the depth. In contrast, the test accuracies of the kernel regression models of the FFNets significantly decrease, as the depth $L$ increases. Especially when the sample size is small ($n=200$), the kernel regression models behave like random guess for both MNIST2 and CIFAR2 when $L\geq 1000$. This is consistent with our analysis. 

\begin{figure}[htb!]
\vspace{-0.15in}
\begin{center}
    \subfigure[\it MNIST2 ($n=200$)]{
        \includegraphics[width=0.230\textwidth]{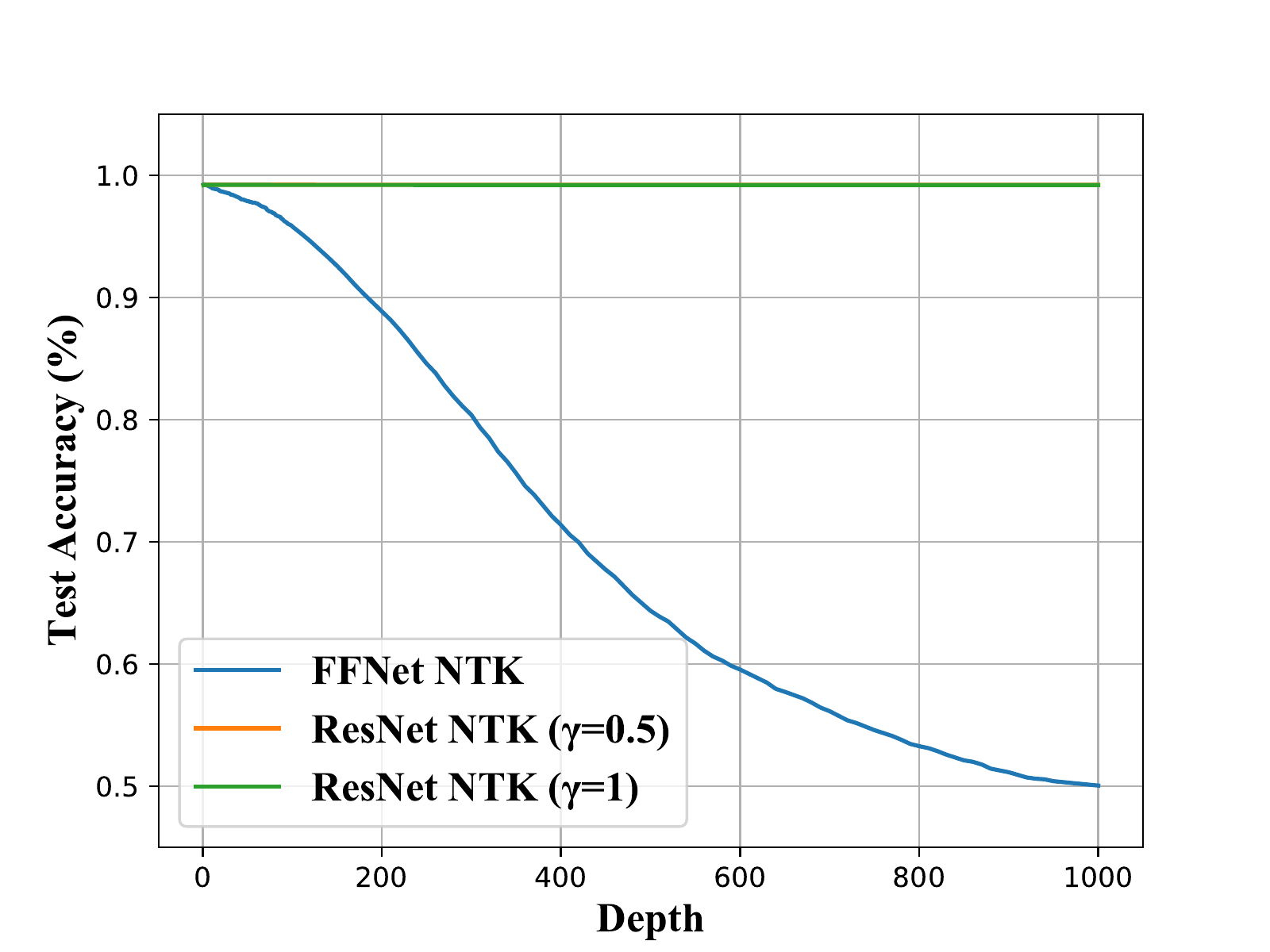}\label{fig:mnist_Ts_200}
        \vspace{-0.125in}
    }
    \subfigure[\it MNIST2 ($n=2000$)]{
        \includegraphics[width=0.230\textwidth]{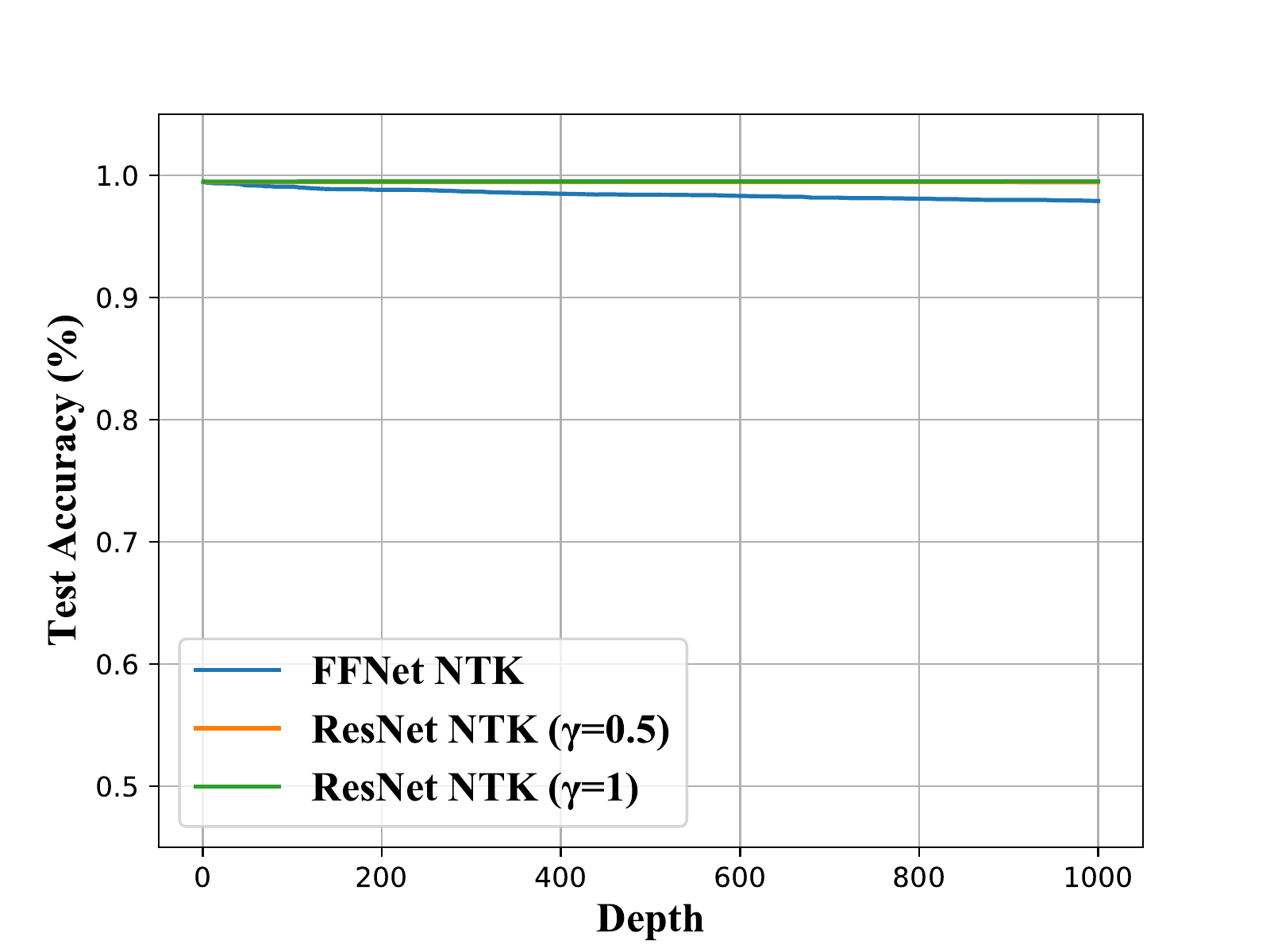}\label{fig:mnist_Ts_2000}
        \vspace{-0.125in}
    }
    \subfigure[\it CIFAR2 ($n=200$)]{
        \includegraphics[width=0.230\textwidth]{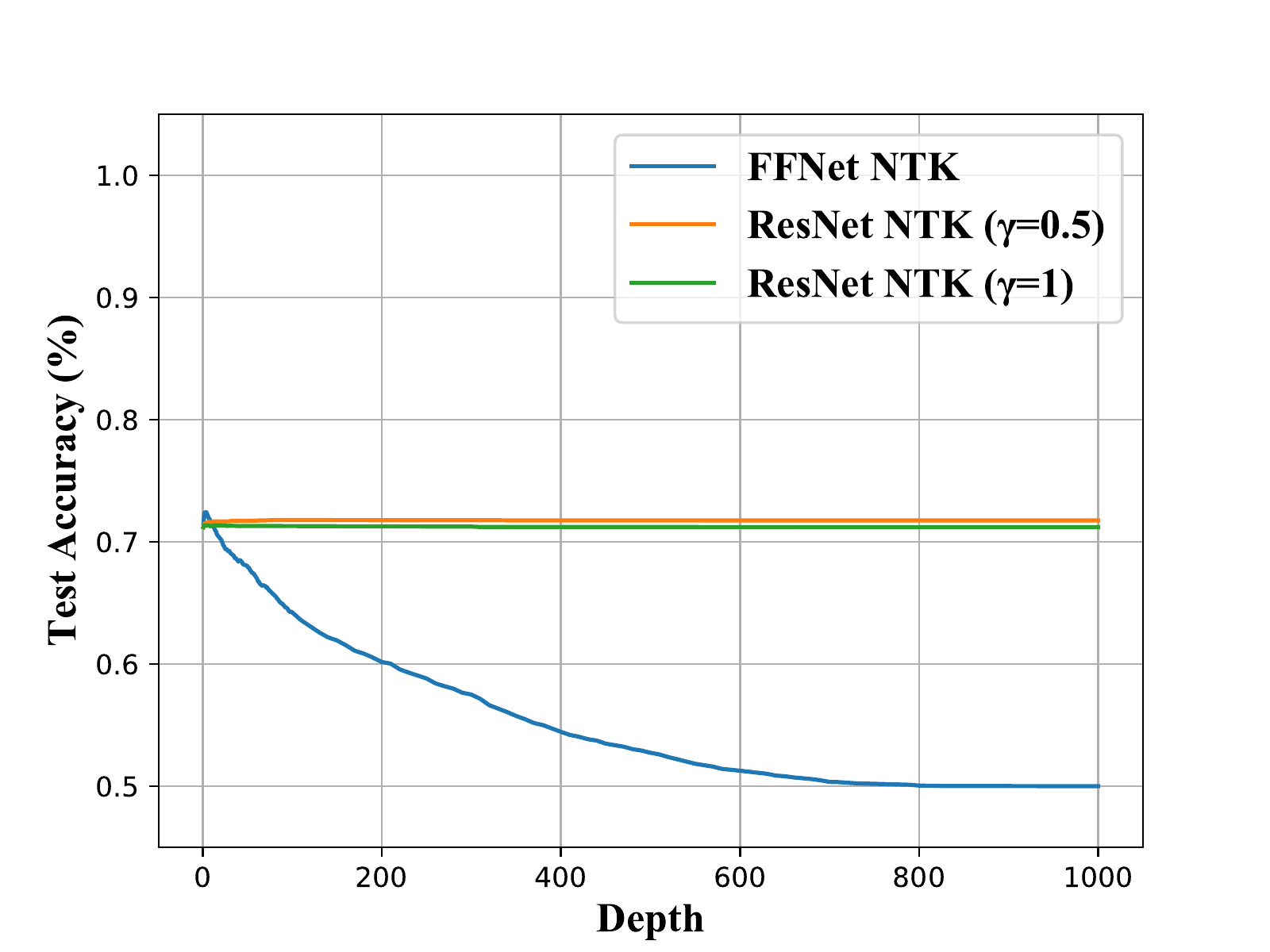}\label{fig:cifar10_Ts_200}
        \vspace{-0.125in}
    }
    \subfigure[\it CIFAR2 ($n=2000$)]{
        \includegraphics[width=0.230\textwidth]{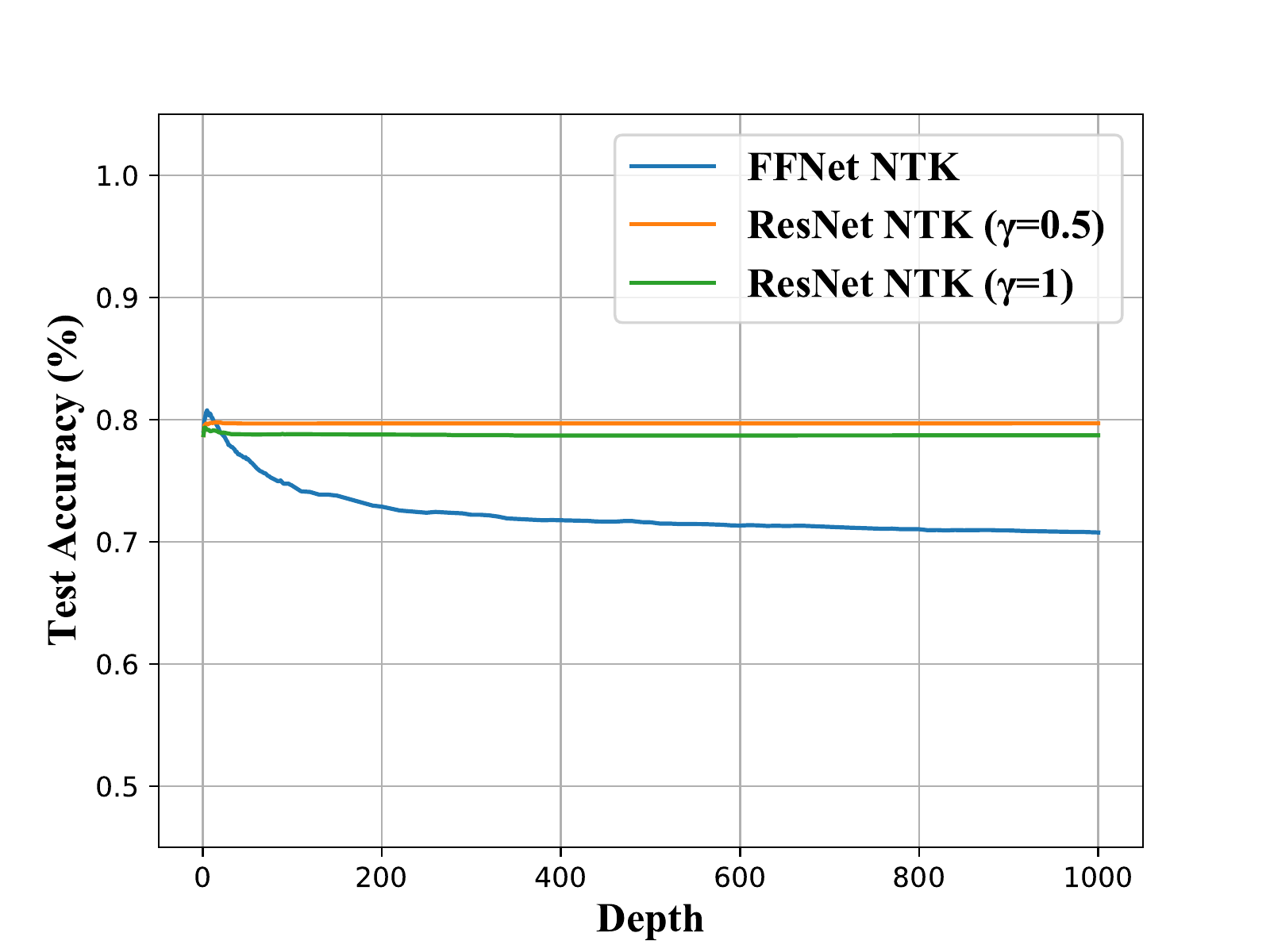}\label{fig:cifar10_Ts_2000}
        \vspace{-0.125in}
    }
    \vspace{-0.1in}
    \caption{\it Test accuracies of the kernel regression models evaluated on MNIST2 and CIFAR2.}
        \label{fig:exp}
    \vspace{-0.15in}
\end{center}
\end{figure}

Next we provide numerical verifications for our theorems. For Theorem~\ref{thm:ntk}, we randomly initialize the ResNet with width=500, scaling factor $\gamma=1$ and depth $L=5, 10, 100, 300$, and then calculate the inner product of the Jacobians of the ResNet for two different inputs as in the definition of NTK. We repeat the procedure for 500 times and plot the mean value (black cross) and the 1/4, 3/4 quantiles ("I"-shape line) of the sampled random NTKs and the theoretical NTK value in Fig.~\ref{fig:thm4}, which shows the two results match very well. For Theorem~\ref{thm:ffntkinf} and Theorem~\ref{thm:resntkinf}, Fig.~\ref{fig:thm5} and Fig.~\ref{fig:thm6} show that $\lim_{L\to\infty}|\overline{\Omega}_L(x,\tilde{x})-1/4|\cdot L/\rm{log}(L)\approx$ constant and $\lim_{L\to\infty}|\overline{\Omega}_L(x,\tilde{x})-\overline{\Omega}_1(x,\tilde{x})|\cdot L\approx$ constant with $x^\top\tilde{x}=K_0$ chosen at 9 points. 

\begin{figure}[htb!]
 \vspace{-0.2in}
\begin{center}
      \subfigure[\it Theorem~\ref{thm:ntk}]{\includegraphics[trim={0.2cm 0cm 0cm 1.2cm}, clip,width=0.28\linewidth]{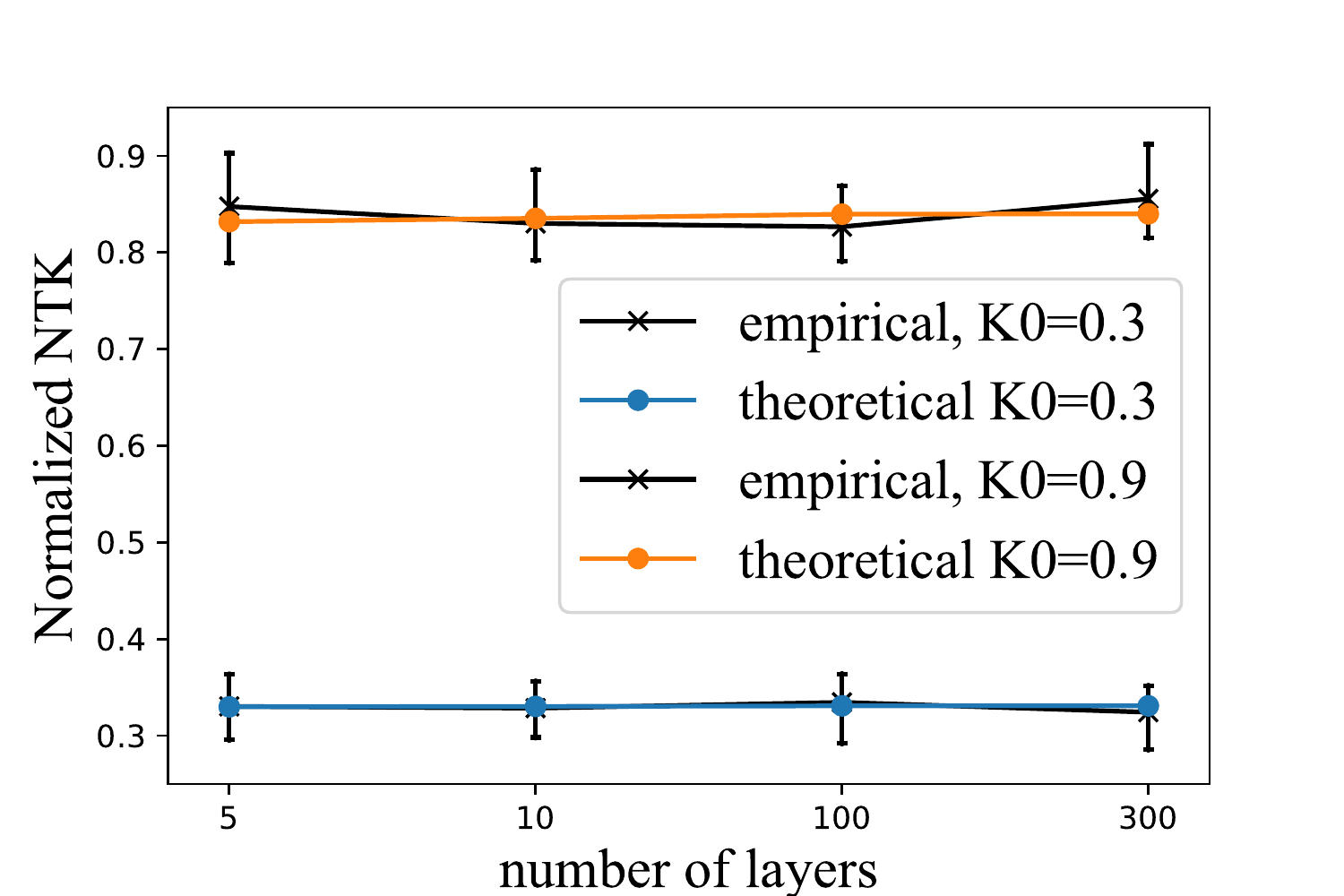}  
  \label{fig:thm4}

}
 \subfigure[\it Theorem~\ref{thm:ffntkinf}]{\includegraphics[trim={0.6cm 0cm 0.8cm 0cm}, clip,width=0.28\linewidth]{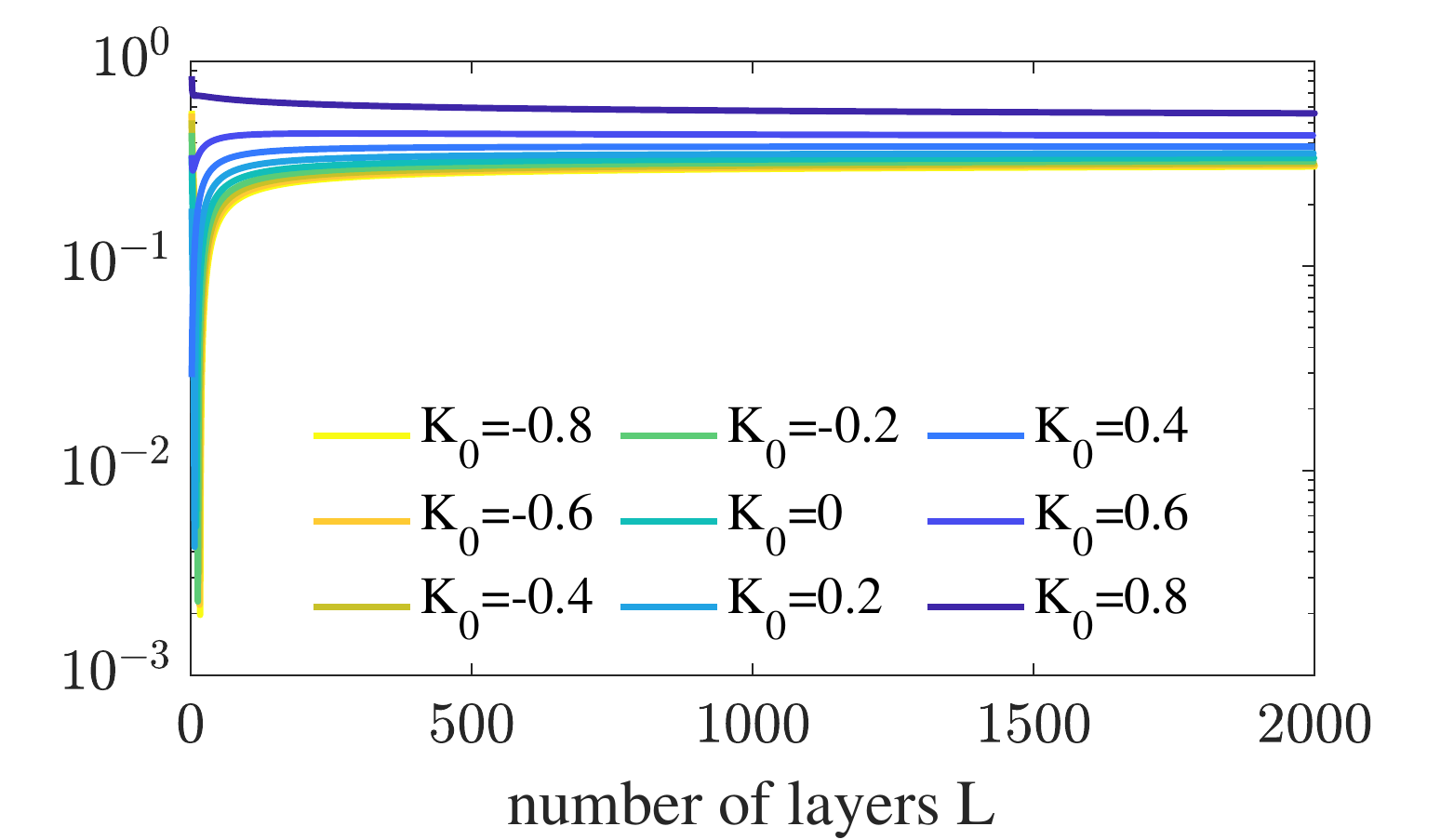}  
  \label{fig:thm5}

}
 \subfigure[\it Theorem~\ref{thm:resntkinf}]{\includegraphics[trim={0.6cm 0cm 0.8cm 0cm}, clip,width=0.28\linewidth]{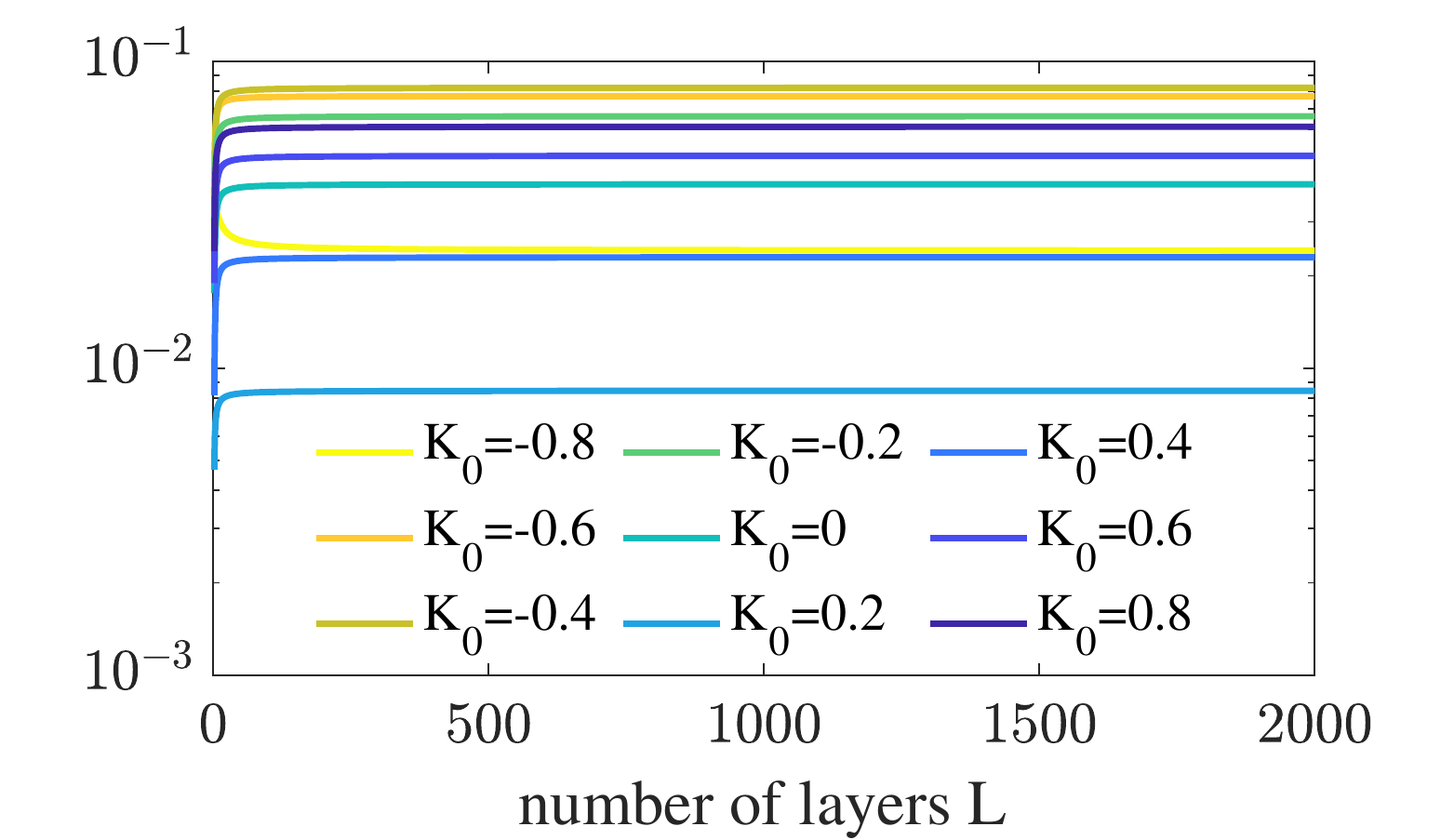}  
  \label{fig:thm6}
}
    \vspace{-0.1in}
\caption{\it Verification of main theorems. (a) Theorem~\ref{thm:ntk}, $m=500$ and scaling $\gamma=1$; (b) Theorem~\ref{thm:ffntkinf}, $y$-axis is $|\overline{\Omega}_L(x,\tilde{x})-1/4|\cdot L/\rm{log}(L)$; (c) Theorem~\ref{thm:resntkinf}, $y$-axis is $|\overline{\Omega}_L(x,\tilde{x})-\overline{\Omega}_1(x,\tilde{x})|\cdot L$}
\label{fig:fig}

\end{center}
\vspace{-0.25in}
\end{figure}


\vspace{-0.1in}
\section{Discussion}
\vspace{-0.15in}

\begin{wrapfigure}{r}{0.28\textwidth}
\begin{center}
\vspace{-0.45in}
 \includegraphics[width=0.28\textwidth]{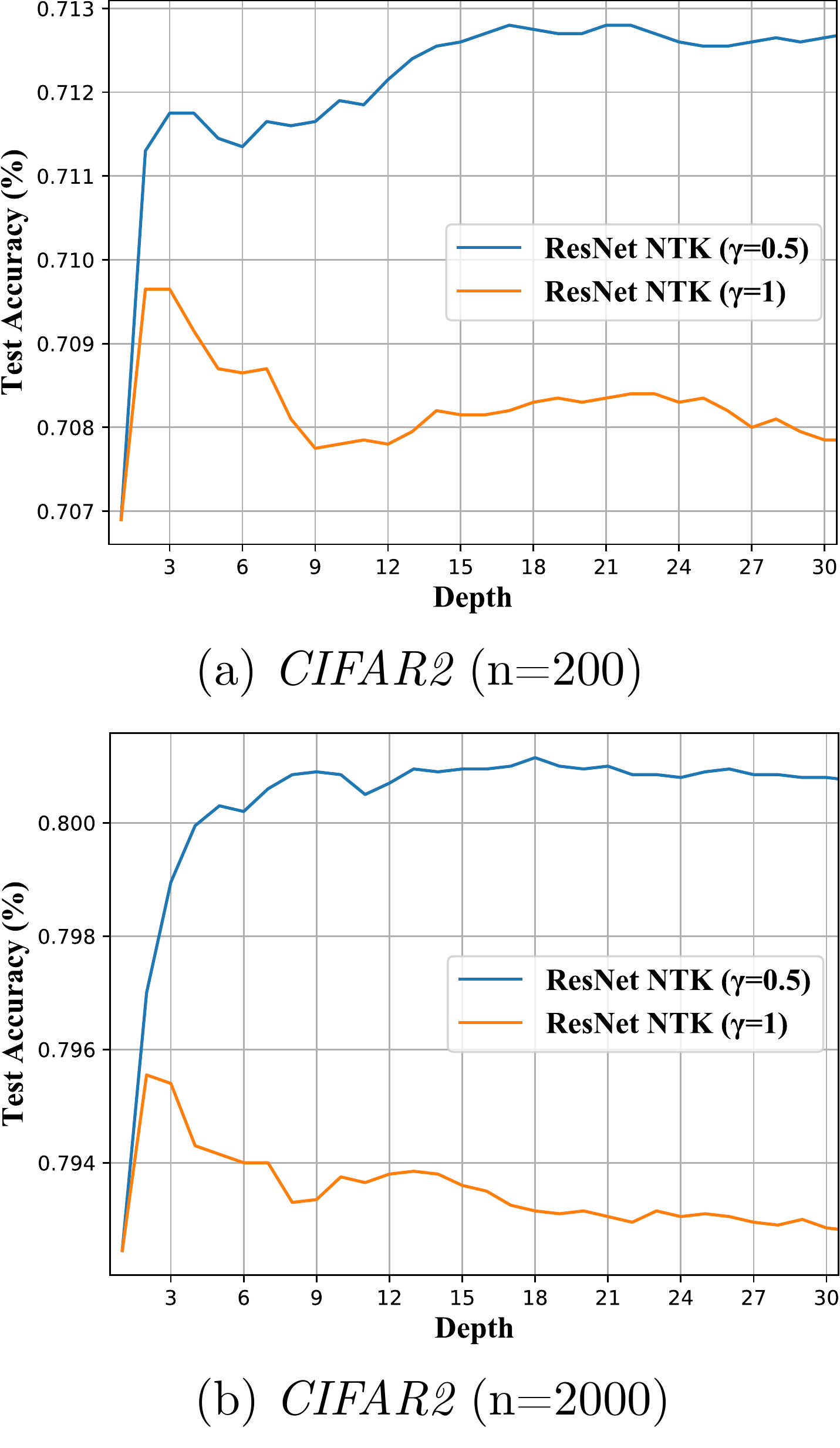}
\vspace{-0.3in}
\caption{\it Test accuracies of the kernel regression models evaluated on CIFAR2.}
\vspace{-0.4in}
\label{fig:cifar10_Ts_par}
\end{center}
\end{wrapfigure}

We discuss the NTK of the ResNet in more details. We remark unless specified, the NTK mentioned below indicates the normalized NTK.

Our theory shows the function class induced by the NTK of the deep ResNet asymptotically converges to that by the NTK of the 1-layer ResNet, as the depth increases. This indicates that the complexity of such a function class is not significantly different from that by the NTK of the 1-layer ResNet, for large enough $L$. Thus, the generalization gap does not significantly increase, as $L$ increases.

On the other hand, our experiments suggest that, as illustrated in Figure \ref{fig:cifar10_Ts_par}, the NTK of the ResNet with $\gamma=1$ actually achieves the best testing accuracy for CIFAR2 when $L=2$. The accuracy slightly decreases as $L$ increases, and becomes stable when $L\geq 9$. For the NTK of the ResNet with $\gamma=0.5$, the accuracy achieves the best when $L\approx 15$, and becomes stable for $L\geq 15$. Such evidence suggests that the function class induced by the NTKs of the ResNets with large $L$ and large $\gamma$ are possibly not as flexible as those by the NTKs of the deep ResNets with small $L$ and small $\gamma$.


Existing literature connects overparameterized neural networks to NTKs only under some very specific regime. Practical neural networks, however, are trained under more complicated regimes. Therefore, there still exists a significant theoretical gap between NTKs and practical neural networks. For example, Theorem~\ref{thm:resntkinf} shows that the NTK of the infinitely deep ResNet is identical to that of the 1-layer ResNet, while practical ResNets often show better generalization performance, as the depth increases. Also, we do not consider batch norm in our networks but refer to \citep{Jacot2019OrderAC} if necessary. We will leave these challenges for future investigation.

\newpage
\section*{Broader Impact}

This paper makes a significant contribution to extending the frontier of deep learning theory, and increases the intellectual rigor. To the best of our knowledge, our results are the first one for analyzing the effect of depth on the generalization of neural tangent kernels (NTKs). Moreover, our results are also the first one establishing the non-asymptotic bounds for NTKs of ResNets when all but the last layers are trained, which enables us to successfully analyze the generalization properties of ResNets through the perspective of NTK. This is in sharp contrast to the existing impractical theoretical results for NTKs of ResNets, which either only apply to an over-simplified structure of ResNets or only deal with the case when the last layer is trained.




\section*{Acknowledgement}

Molei Tao was partially supported by NSF DMS-1847802 and ECCS-1936776 and Yuqing Wang was partially supported by NSF DMS-1847802.
\bibliography{grant,icml2020}
\bibliographystyle{unsrt}

\appendix

\clearpage

\section{Proof of GP Kernels of ResNets}
\label{sec:proof_resgpk}
\subsection{Notation and Main Idea}
For a fixed pair of inputs $x$ and $\tx$, we introduce two matrices for each layer
\[
    \hat{\Sigma}_\ell (x,\tx) = \mattwo{\inner{x_{\ell }}{x_\ell }}{\inner{x_{\ell }}{\tx_\ell }}{\inner{\tx_{\ell }}{x_\ell }}{\inner{\tx_{\ell }}{\tx_\ell }},
\]
and
\[
    \Sigma_\ell (x,\tx) = \mattwo{K_\ell (x,x)}{K_\ell (x,\tx)}{K_\ell (\tx,x)}{K_\ell (\tx,\tx)}.
\]
$\hat{\Sigma}_\ell (x,\tx)$ is the empirical Gram matrix of the outputs of the $\ell$-th layer, while $\Sigma_\ell (x,\tx)$ is the infinite-width version. Theorem~\ref{thm:gp} says that with high probability, for each layer $\ell$, the difference of these two matrices measured by the entry-wise $L_\infty$ norm (denoted by $\|\cdot\|_{\max}$) is small.

The idea is to bound how much the $\ell$-th layer magnifies the input error to the output. Specifically, if the outputs of $(\ell-1)$-th layer satisfy
\[
 \left\|  \hat{\Sigma}_{\ell -1}(x,\tx) - \Sigma_{\ell -1}(x,\tx) \right\|_{\max} \leq \tau,
\]
we hope to prove that with high probability over the randomness of $W_{\ell }$ and $V_{\ell }$, we have
\[
 \left\|    \hat{\Sigma}_{\ell }(x,\tx) - \Sigma_{\ell }(x,\tx) \right\|_{\max} \leq \bigg(1+\cO\bigg(\frac{1}{L}\bigg) \bigg)\tau.
\]
Then the theorem is proved by first showing that w.h.p. $
 \left\|    \hat{\Sigma}_{0}(x,\tx) - \Sigma_{0}(x,\tx) \right\|_{\max} \leq (1+\cO(1/L))^{-L} \epsilon$ and then applying the result above for each layer.
\subsection{Lemmas}
We introduce the following lemmas. The first lemma shows the boundedness of $K_\ell (x,\tx)$.
\begin{lem}
For the ResNet defined in Eqn.~\eqref{eq:resnet_def}, $K_\ell (x,x) = (1+\alpha^2)^\ell$ for all $x\in \SSS^{D-1}$, $\ell=0,1,\cdots,L$. Also $K_\ell (x,x)$ is bounded uniformly when $0.5 \leq \gamma \leq 1$.
\end{lem}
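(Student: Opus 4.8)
The plan is to prove the closed-form expression by induction on $\ell$ using the recursion of Proposition~\ref{prop:GPKRN}, and then to read off the uniform bound from it by an elementary estimate.

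First I would dispatch the base case: for $x\in\SSS^{D-1}$, $K_0(x,x)=x^\top x=1=(1+\alpha^2)^0$. For the inductive step, the crucial observation is that the diagonal of the dual kernel is trivial, namely $\Gamma_\sigma(K_{\ell-1})(x,x)=K_{\ell-1}(x,x)$. This follows from the positive homogeneity identity $\Gamma_\sigma(K)(x,x)=\sqrt{K(x,x)K(x,x)}\,\hat\sigma(\overline{K}(x,x))=K(x,x)\,\hat\sigma(1)$ together with $\hat\sigma(1)=1$, which one reads off directly from the closed form $\hat\sigma(\rho)=\frac{\sqrt{1-\rho^2}+(\pi-\cos^{-1}\rho)\rho}{\pi}$; equivalently, one can compute that for $X\sim\cN(0,s^2)$ one has $\EE[\sigma(X)^2]=2\,\EE[\max(0,X)^2]=\EE[X^2]=s^2$ by symmetry of the Gaussian. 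Plugging this into $K_\ell(x,x)=K_{\ell-1}(x,x)+\alpha^2\Gamma_\sigma(K_{\ell-1})(x,x)$ collapses the recursion to $K_\ell(x,x)=(1+\alpha^2)K_{\ell-1}(x,x)$, so the induction hypothesis yields $K_\ell(x,x)=(1+\alpha^2)^\ell$.

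For the uniform boundedness claim I would substitute $\alpha=L^{-\gamma}$ and bound, for every $0\le\ell\le L$,
\[
1\le K_\ell(x,x)=(1+L^{-2\gamma})^\ell\le(1+L^{-2\gamma})^L\le\exp\!\big(L\cdot L^{-2\gamma}\big)=\exp\!\big(L^{1-2\gamma}\big)\le e,
\]
where the middle inequality uses $1+t\le e^t$ and the last inequality uses $1-2\gamma\le 0$, which is exactly where the assumption $\gamma\ge 1/2$ enters. Hence $K_\ell(x,x)\in[1,e]$ for all $\ell$ and all $L$, giving the desired uniform bound.

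The main obstacle here is essentially nonexistent: the only steps requiring any care are verifying $\hat\sigma(1)=1$ (or, equivalently, the Gaussian second-moment identity $\EE[\sigma(X)^2]=\mathrm{Var}(X)$) and observing that $\gamma\ge 1/2$ is precisely the condition that makes the factor $(1+\alpha^2)^L$ remain bounded as $L\to\infty$.
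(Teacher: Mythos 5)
Your proof is correct and is essentially the only natural argument: the paper states this lemma without proof, but its implicit reasoning (used e.g.\ in the proof of Theorem~\ref{thm:gp}, where $(1+\alpha^2)^L\leq e$ is invoked) is exactly the induction via $\hat\sigma(1)=1$ collapsing the recursion to $K_\ell(x,x)=(1+\alpha^2)K_{\ell-1}(x,x)$, followed by the elementary bound $(1+L^{-2\gamma})^L\leq e^{L^{1-2\gamma}}\leq e$ for $\gamma\geq 1/2$. Your verification of $\hat\sigma(1)=1$ both from the closed form and from the Gaussian second-moment identity is sound, and the role of the hypothesis $\gamma\geq 1/2$ is correctly isolated.
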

Recall that $\phi_{W_\ell }(z) = \sqrt{\frac{2}{m}}\sigma_0(W_\ell  z)$. Since $W_\ell $ is Gaussian, we know that $\phi_{W_\ell }(x_{\ell -1})$ and $\phi_{W_\ell }(\tx_{\ell -1})$ are both sub-Gaussian random vectors over the randomness of $W_\ell $. Then their inner product enjoys sub-exponential property.
\begin{lem}[Sub-exponential concentration]
\label{lem:relu_concen}
With probability at least $1-\delta'$ over the randomness of $W_\ell  \sim \cN(0,I)$, when $m\geq c' \log(6/\delta') $, the following hold simultaneously 
\begin{align}
    &\Big| \langle \phi_{W_\ell } (x_{\ell -1}),  \phi_{W_\ell }(\tx_{\ell -1}) \rangle - \psi_\sigma(\hat{\Sigma}_{\ell -1} ( x , \tx ))\Big | \leq \sqrt{\frac{c'\log (6/\delta')}{m}} \|x_{\ell -1}\| \|\tx_{\ell -1}\|, \label{eq:con4}   \\ 
   & \Big| \| \phi_{W_\ell } (x_{\ell -1}) \|^2 - \|x_{\ell -1}\|^2 \Big| \leq \sqrt{\frac{c'\log (6/\delta')}{m}} \|x_{\ell -1}\|^2, \label{eq:con5} \\   
   & \Big| \| \phi_{W_\ell } (\tx_{\ell -1}) \|^2 - \|\tx_{\ell -1}\|^2 \Big| \leq \sqrt{\frac{c'\log (6/\delta')}{m}} \|\tx_{\ell -1}\|^2. \label{eq:con6}
\end{align}
\end{lem}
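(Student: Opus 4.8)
The plan is to recognize each of the three quantities in \eqref{eq:con4}--\eqref{eq:con6} as an empirical average of $m$ i.i.d.\ sub-exponential random variables and to apply Bernstein's inequality. First I would condition on $x_{\ell-1}$ and $\tx_{\ell-1}$: these are deterministic functions of $A, W_1,\dots,W_{\ell-1}, V_1,\dots,V_{\ell-1}$, hence independent of $W_\ell$. Writing $(W_\ell)_{i,\cdot}\sim\cN(0,I_m)$ for the $i$-th row of $W_\ell$, the pairs $(a_i,b_i):=(\langle (W_\ell)_{i,\cdot},x_{\ell-1}\rangle,\langle (W_\ell)_{i,\cdot},\tx_{\ell-1}\rangle)$ for $i=1,\dots,m$ form an i.i.d.\ sequence of centered Gaussian vectors in $\RR^2$ with covariance matrix exactly $\hat{\Sigma}_{\ell-1}(x,\tx)$. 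Since $\phi_{W_\ell}(z)=\sqrt{2/m}\,\sigma_0(W_\ell z)$ and $\sigma=\sqrt2\,\sigma_0$, we have $\langle\phi_{W_\ell}(x_{\ell-1}),\phi_{W_\ell}(\tx_{\ell-1})\rangle=\frac1m\sum_{i=1}^m\sigma(a_i)\sigma(b_i)$, whose conditional mean is $\EE_{(X,\tX)\sim\cN(0,\hat{\Sigma}_{\ell-1})}\sigma(X)\sigma(\tX)=\psi_\sigma(\hat{\Sigma}_{\ell-1}(x,\tx))$; likewise $\|\phi_{W_\ell}(x_{\ell-1})\|^2=\frac1m\sum_i\sigma(a_i)^2$ has conditional mean $\|x_{\ell-1}\|^2$ (because $\EE\sigma_0(a_i)^2=\tfrac12\|x_{\ell-1}\|^2$), and similarly for $\tx$.

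Next I would bound the Orlicz norms of the summands. Each $a_i$ is Gaussian with variance $\|x_{\ell-1}\|^2$, so $\|\sigma(a_i)\|_{\psi_2}\lesssim\|x_{\ell-1}\|$ and similarly $\|\sigma(b_i)\|_{\psi_2}\lesssim\|\tx_{\ell-1}\|$; by the product rule for sub-Gaussian/sub-exponential norms, $\|\sigma(a_i)\sigma(b_i)\|_{\psi_1}\lesssim\|x_{\ell-1}\|\|\tx_{\ell-1}\|$ and $\|\sigma(a_i)^2\|_{\psi_1}\lesssim\|x_{\ell-1}\|^2$. Bernstein's inequality for sums of independent mean-zero sub-exponential variables then gives, for each average $\frac1m\sum_i Z_i$ with sub-exponential parameter $K$, a tail bound $2\exp(-c\,m\min\{t^2/K^2,\,t/K\})$. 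Taking $t=K\sqrt{c'\log(6/\delta')/m}$ makes this at most $\delta'/3$, and the hypothesis $m\ge c'\log(6/\delta')$ (with $c'$ chosen large relative to $1/c$) keeps $t\le K$, so we stay in the sub-Gaussian branch where the bound reads $2\exp(-c''mt^2/K^2)$. Plugging in $K\asymp\|x_{\ell-1}\|\|\tx_{\ell-1}\|$ for \eqref{eq:con4} and $K\asymp\|x_{\ell-1}\|^2,\ \|\tx_{\ell-1}\|^2$ for \eqref{eq:con5}, \eqref{eq:con6} yields the three stated right-hand sides after absorbing constants into $c'$, and a union bound over the three events gives all of them simultaneously with conditional probability at least $1-\delta'$; since this holds for every realization of $x_{\ell-1},\tx_{\ell-1}$, it holds unconditionally.

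The steps themselves are routine; the one thing to be careful about is the conditioning/independence structure. It is essential that $W_\ell$ is fresh randomness independent of $x_{\ell-1},\tx_{\ell-1}$, so that $(a_i,b_i)$ really are i.i.d.\ Gaussians with the \emph{empirical} covariance $\hat{\Sigma}_{\ell-1}$ rather than the deterministic $\Sigma_{\ell-1}$ --- this is precisely what makes the lemma usable as the one-step error-propagation bound in the induction of Theorem~\ref{thm:gp}. The only genuine technical care is the constant bookkeeping: one must check that a single width condition $m\ge c'\log(6/\delta')$, with no dependence on $\epsilon$ or $L$, is enough to place the chosen $t$ in the valid Bernstein range for all three inequalities at once.
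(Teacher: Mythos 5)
Your proof is correct, and it fills a gap the paper itself leaves open: in the appendix, Lemma~\ref{lem:relu_concen} is stated in Section A.2 but never proved (only Lemma~\ref{lem:relu_lip} receives a proof, in Section A.4). Your argument — condition on $x_{\ell-1},\tx_{\ell-1}$ so that the rows of $W_\ell$ generate i.i.d.\ Gaussian pairs $(a_i,b_i)$ with covariance exactly $\hat{\Sigma}_{\ell-1}$, recognize the three quantities as empirical means of sub-exponential summands, bound the $\psi_1$-norms via the sub-Gaussian product rule, and invoke Bernstein followed by a union bound — is precisely the standard argument the authors implicitly appeal to when they label this ``sub-exponential concentration.'' The details you flag as requiring care are indeed the right ones: (i) the lemma concentrates around $\psi_\sigma(\hat{\Sigma}_{\ell-1})$, not $\psi_\sigma(\Sigma_{\ell-1})$, and this depends crucially on conditioning out the earlier layers so that $W_\ell$ is fresh; (ii) the threshold $m\ge c'\log(6/\delta')$ is exactly what guarantees $t\le K$ so the Bernstein bound stays in its sub-Gaussian regime and no extra $\epsilon$- or $L$-dependence appears in the width requirement; and (iii) one should verify, as you do, that $\EE[\sigma(a_i)^2]=\|x_{\ell-1}\|^2$ rather than $\frac12\|x_{\ell-1}\|^2$, which is where the $\sqrt 2$ normalization of $\sigma$ earns its keep.
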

\begin{lem}[Locally Lipschitzness, based on \citep{daniely2016toward}]
\label{lem:relu_lip}
$\psi_\sigma$ is $(1+ \frac{1}{\pi}(\frac{r}{\mu})^2)$-Lipschitz w.r.t. ${\max}$ norm in $\mathcal{M}_{\mu,r} = \left\{ \mattwo{a}{b}{b}{c} |a,c \in [\mu-r,\mu+r] ; ac-b^2 >0 \right\}$ for all $\mu>0$, $0 < r \leq \mu/2$. That means, if (i). $\|\hat{\Sigma}_{\ell -1}(x,\tx) - \Sigma_{\ell -1}(x,\tx)\|_{\max} \leq \tau $ and (ii). $K_{\ell -1}(x,x) = K_{\ell -1}(\tx,\tx) = \mu$, for $\tau \leq \mu/2$, we have
\[
    \Big|\psi_\sigma(\hat{\Sigma}_{\ell -1}(x,\tx)) -  \psi_\sigma(\Sigma_{\ell -1}(x,\tx))\Big| \leq \Big (1 + \frac{1}{\pi}\Big(\frac{\tau}{\mu}\Big)^2 \Big) \tau.
\]
\end{lem}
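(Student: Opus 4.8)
The plan is to reduce the Lipschitz statement to a pointwise bound on the $\ell_1$-norm of the gradient of $\psi_\sigma$, viewed as a function of the three free entries $(a,b,c)$ of the symmetric matrix $\Sigma=\mattwo{a}{b}{b}{c}$: the restriction of $\|\cdot\|_{\max}$ to symmetric $2\times2$ matrices is exactly the $\ell_\infty$ norm of $(a,b,c)$, whose dual is the $\ell_1$ norm, so on the convex set $\mathcal{M}_{\mu,r}$ the best Lipschitz constant of a $C^1$ function equals $\sup\|\nabla\psi_\sigma\|_1$. Throughout I will use that $\sigma$ is positively $1$-homogeneous, so that for PSD $\Sigma$ with $a,c>0$ one has $\psi_\sigma(\Sigma)=\sqrt{ac}\,\hat\sigma(\rho)$ with $\rho=b/\sqrt{ac}\in[-1,1]$ and $\hat\sigma(\rho)=\big(\sqrt{1-\rho^2}+(\pi-\cos^{-1}\rho)\rho\big)/\pi$.

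First I would compute the three partials from this formula. The off-diagonal one is $\partial_b\psi_\sigma=\widehat{\sigma'}(\rho)=(\pi-\cos^{-1}\rho)/\pi\in[0,1]$ (this also equals $\psi_{\sigma'}(\Sigma)$, by Gaussian integration by parts); and, using the identity $\hat\sigma(\rho)-\rho\,\widehat{\sigma'}(\rho)=\sqrt{1-\rho^2}/\pi$, the diagonal ones are $\partial_a\psi_\sigma=\sqrt{ac-b^2}/(2\pi a)$ and $\partial_c\psi_\sigma=\sqrt{ac-b^2}/(2\pi c)$. All three extend continuously to the part of the boundary of the PSD cone on which $a,c>0$, so $\psi_\sigma$ is $C^1$ there. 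Summing, $\|\nabla\psi_\sigma\|_1=\frac{\sqrt{ac-b^2}}{2\pi}\big(\tfrac1a+\tfrac1c\big)+\frac{\pi-\cos^{-1}\rho}{\pi}$, and writing $\rho=\cos t$ with $t\in[0,\pi]$ and $\kappa=\frac{a+c}{2\sqrt{ac}}$ this becomes $1+(\kappa\sin t-t)/\pi$. On $\mathcal{M}_{\mu,r}$ the quantity $\kappa$ depends only on $a/c$ and is maximized at a corner $(a,c)=(\mu-r,\mu+r)$, so $\kappa\le\kappa_{\max}:=\mu/\sqrt{\mu^2-r^2}$; maximizing $\kappa\sin t-t$ over $t$ (critical point $\cos t=1/\kappa$) and then over $\kappa\le\kappa_{\max}$ gives $\sup_{\mathcal{M}_{\mu,r}}\|\nabla\psi_\sigma\|_1=1+\frac1\pi\big(\sqrt{\kappa_{\max}^2-1}-\cos^{-1}(1/\kappa_{\max})\big)$.

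It then remains to show the last quantity is at most $1+\frac1\pi(r/\mu)^2$. Setting $u=r/\mu\in(0,\tfrac12]$ one has $\kappa_{\max}=1/\sqrt{1-u^2}$, $\sqrt{\kappa_{\max}^2-1}=u/\sqrt{1-u^2}$ and $\cos^{-1}(1/\kappa_{\max})=\sin^{-1}u$, so the claim is the one-variable inequality $g(u):=\frac{u}{\sqrt{1-u^2}}-\sin^{-1}u-u^2\le0$ on $(0,\tfrac12]$. I would prove this by noting $g(0)=0$ and $g'(u)=u\big(\frac{u}{(1-u^2)^{3/2}}-2\big)<0$ on $(0,\tfrac12]$, since $(1-u^2)^{3/2}\ge(3/4)^{3/2}$ forces $\frac{u}{(1-u^2)^{3/2}}\le\frac{1/2}{(3/4)^{3/2}}<2$. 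This gives the Lipschitz claim; the displayed consequence then follows by the mean value inequality along the segment $s\mapsto(1-s)\Sigma_{\ell-1}(x,\tx)+s\,\hat{\Sigma}_{\ell-1}(x,\tx)$, which is convex and, by hypotheses (i) and (ii), contained in $\mathcal{M}_{\mu,\tau}$, so $|\psi_\sigma(\hat{\Sigma}_{\ell-1})-\psi_\sigma(\Sigma_{\ell-1})|\le\big(1+\tfrac1\pi(\tau/\mu)^2\big)\|\hat{\Sigma}_{\ell-1}-\Sigma_{\ell-1}\|_{\max}\le\big(1+\tfrac1\pi(\tau/\mu)^2\big)\tau$.

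The differentiation and the reduction to $1+(\kappa\sin t-t)/\pi$ are routine; the real work — and the reason the bound comes out in the clean form $1+\frac1\pi(r/\mu)^2$ — is the scalar inequality $g(u)\le0$, which must be verified over the whole range $(0,\tfrac12]$ (it is comfortably satisfied, the true excess being of order $u^3$, but this still needs a monotonicity argument rather than just a Taylor estimate near $0$). A secondary technicality is the $C^1$-regularity of $\psi_\sigma$ up to rank-deficient Gram matrices, needed so the mean value inequality applies on the closed segment; this is covered by the continuity of the partials established above.
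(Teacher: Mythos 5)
Your argument is correct, and it uses the same basic reduction as the paper's: bound the $\ell_1$-norm of $\nabla\psi_\sigma$ (the dual of $\|\cdot\|_{\max}$ on symmetric $2\times 2$ matrices) over the convex set $\mathcal{M}_{\mu,\tau}$ and apply the mean-value inequality along the segment between $\Sigma_{\ell-1}$ and $\hat{\Sigma}_{\ell-1}$. The gradient formula you derive from homogeneity is the one the paper quotes from Daniely et al., and your convexity/boundary remarks about $\mathcal{M}_{\mu,r}$ are the right ones.

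Where you diverge is in how the gradient bound is established. The paper never touches the joint optimization: it first bounds $\kappa = \tfrac{1}{2}\tfrac{a+c}{\sqrt{ac}}\le(1-(r/\mu)^2)^{-1/2}\le 1+(r/\mu)^2$ (this is where $r\le\mu/2$ enters), then writes $\|\nabla\psi_\sigma\|_1 \le (1+(r/\mu)^2)\tfrac{\sqrt{1-\rho^2}}{\pi} + 1 - \tfrac{\cos^{-1}\rho}{\pi}$ and finishes with two decoupled one-line estimates: $\sqrt{1-\rho^2}\le 1$ to peel off $\tfrac{1}{\pi}(r/\mu)^2$, and $\sqrt{1-\rho^2}\le\cos^{-1}\rho$ (i.e.\ $\sin\theta\le\theta$) to bound the remainder by $1$. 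You instead parametrize $\rho=\cos t$, observe $\|\nabla\psi_\sigma\|_1 = 1 + (\kappa\sin t - t)/\pi$, compute the exact supremum $1 + \tfrac{1}{\pi}\big(\sqrt{\kappa_{\max}^2-1}-\cos^{-1}(1/\kappa_{\max})\big)$, and then prove the scalar inequality $g(u)\le 0$ on $(0,1/2]$ by a monotonicity argument. Your route is slightly longer but identifies the actual maximizer and shows the stated constant is loose by an $O(u^3)$ margin, whereas the paper's route is shorter and requires no calculus beyond checking $\sin\theta\le\theta$. Both are valid; nothing in your proposal needs repair.
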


\subsection{Proof of Theorem~\ref{thm:gp}}
\begin{proof}
In this proof, we also show the following hold with the same probability.
\begin{enumerate}
    \item For $\ell=0,1,\cdots,L$,  $\|x_\ell \|$ and $\|\tx_\ell \|$ are bounded by an absolute constant $C_1$ ($C_1 = 4$).
    \item For $\ell=1,\cdots,L$, $\|\phi_{W_\ell }(x_{\ell -1})\|$ and $\|\phi_{W_\ell }(\tx_{\ell -1})\|$ are bounded by an absolute constant $C_2$ ($C_2=8$).
    \item $\Big| \inner{\phi_{W_\ell }(x^{(1)}_{\ell -1})}{\phi_{W_\ell }(x^{(2)}_{\ell -1})} - \Gamma_\sigma(K_{\ell -1})(x^{(1)},x^{(2)}) \Big| \leq 2\epsilon$ for all $\ell=1,\cdots,L$ and $(x^{(1)},x^{(2)}) \in \{(x,x),(x,\tx),(\tx,\tx)\}$.
\end{enumerate}

We focus on the $\ell$-th layer. Let $\tau = \left\|  \hat{\Sigma}_{\ell -1}(x,\tx) - \Sigma_{\ell -1}(x,\tx) \right\|_{\max}$. Recall that $\Gamma_\sigma(K_{\ell -1})(x,\tx) = \psi_\sigma(\Sigma_{\ell -1}(x,\tx)) = \EE_{(X,\tX)\sim \cN(0,\Sigma_{\ell -1}(x,\tx))}\sigma(X)\sigma(\tX)$. Then 
\[K_\ell (x,\tx) = K_{\ell -1}(x,\tx) + \alpha^2 \psi_\sigma(\Sigma_{\ell -1}(x,\tx)).\]

Since $x_\ell  = x_{\ell -1} + \frac{\alpha}{\sqrt{m}}V_\ell  \phi_{W_\ell }(x_{\ell -1})$, we have
\begin{align*}
\langle x_\ell , \tx_\ell  \rangle &= \inner{x_{\ell -1}}{\tx_{\ell -1}} + \frac{\alpha^2}{m} \langle V_\ell \phi_{W_\ell }(x_{\ell -1}), V_\ell \phi_{W_\ell }(\tx_{\ell -1}) \rangle \\
& \qquad +\alpha \frac{1}{\sqrt{m}} \big( \langle V_\ell \phi_{W_\ell }(x_{\ell -1}), \tx_{\ell -1} \rangle + \langle V_\ell \phi_{W_\ell }(\tx_{\ell -1}), x_{\ell -1} \rangle   \big) \\
&=\inner{x_{\ell -1}}{\tx_{\ell -1}} + \alpha^2 P + \alpha(Q + R),
\end{align*}
where
\begin{align*}
P &\equiv\frac{1}{m} \langle V_\ell \phi_{W_\ell }(x_{\ell -1}), V_\ell \phi_{W_\ell }(\tx_{\ell -1}) \rangle, \\
Q &\equiv \frac{1}{\sqrt{m}} \big( \langle V_\ell \phi_{W_\ell }(x_{\ell -1}), \tx_{\ell -1} \rangle \big ), \\
R &\equiv \frac{1}{\sqrt{m}} \big( \langle V_\ell \phi_{W_\ell }(\tx_{\ell -1}), x_{\ell -1} \rangle \big ).    
\end{align*}
Under the randomness of $V_\ell $, $P$ is sub-exponential, and $Q$ and $R$ are Gaussian random variables. Therefore, for a given $\delta_0$, if $m \geq c_0\log(2/\delta_0)$, with probability at least $1-\delta_0$ over the randomness of $V_\ell $, we have 
\begin{equation}
\begin{aligned}
&\Big| P - \langle \phi_{W_\ell }(x_{\ell -1}) ,\phi_{W_\ell }(\tx_{\ell -1})  \rangle \Big| \leq \|\phi_{W_\ell }(x_{\ell -1}) \|  \|\phi_{W_\ell }(\tx_{\ell -1}) \| \sqrt{\frac{c_0 \log(2/\delta_0 )}{m}}; \label{eq:con1}    
\end{aligned}
\end{equation}
for a given $\tdelta$, with probability at least $1-2\tdelta$ over the randomness of $V_\ell $, we have 
\begin{equation}
| Q | \leq \|\phi_{W_\ell }(x_{\ell -1}) \|  \|\tx_{\ell -1} \| \sqrt{\frac{\tc \log(2/\tdelta )}{m}}, \label{eq:con2}    
\end{equation}
and 
\begin{equation}
| R | \leq \|\phi_{W_\ell }(\tx_{\ell -1}) \|  \|x_{\ell -1} \| \sqrt{\frac{\tc \log(2/\tdelta )}{m}}, \label{eq:con3}    
\end{equation}
where $c_0,\tc>0$ are absolute constants.

Using the above result and Lemma~\ref{lem:relu_concen} and setting $\delta_0 = \tdelta = \frac{\delta}{18(L+1)}$, $\delta' =\frac{\delta}{6(L+1)}$, when $m\geq C \log(36(L+1)/\delta)$, we have (\ref{eq:con1}), (\ref{eq:con2}), (\ref{eq:con3}), (\ref{eq:con4}), (\ref{eq:con5}), and (\ref{eq:con6}) hold with probability at least $1-\frac{\delta}{3(L+1)}$.

Recall that $\tau = \left\|  \hat{\Sigma}_{\ell -1}(x,\tx) - \Sigma_{\ell -1}(x,\tx) \right\|_{\max}$. Conditioned on $\tau < 0.5$, we have 
\[
    \|x_{\ell -1} \|^2 \leq K_{\ell -1}(x,x) + \tau \leq (1+\alpha^2)^L + \tau \leq e + \tau. 
\]
Similarly we can show $\|\tx_{\ell -1} \|^2$ is bounded by $e +\tau$. By \eqref{eq:con5} and \eqref{eq:con6} we have $\|\phi_{W_\ell } (x_{\ell -1}) \|^2 \leq 2 \|x_{\ell -1} \|^2$ and $\| \phi_{W_\ell } (\tx_{\ell -1}) \|^2 \leq 2 \|\tx_{\ell -1} \|^2$, which are both bounded.

Then
\begin{align*}
  \Big| \langle x_\ell , \tx_\ell  \rangle &- \left(\alpha^2 \psi_\sigma(\Sigma_{\ell -1}(x,\tx)) + K_{\ell -1}(x,\tx) \right) \Big|  \\
    & \leq \tau + \alpha^2 \big( P - \psi_\sigma(\Sigma_{\ell -1}(x,\tx))\big) + \alpha (|Q| +|R|)   \\
    & \leq \tau + \alpha^2 \Big| P -\langle \phi_{W_\ell }(x_{\ell -1}) ,\phi_{W_\ell }(\tx_{\ell -1})  \rangle \Big|    + \alpha \sqrt{\frac{\tc \log(2/\tdelta )}{m}}  \big( \|\phi_{W_\ell }(\tx_{\ell -1}) \|  \|x_{\ell -1} \| + \|\phi_{W_\ell }(x_{\ell -1}) \|  \|\tx_{\ell -1} \| \big) \\
    & \qquad+  \alpha^2 \Big|\psi_\sigma(\hat{\Sigma}_{\ell -1}(x,\tx)) -  \psi_\sigma(\Sigma_{\ell -1}(x,\tx))\Big|+ \alpha^2 \Big| \langle \phi_{W_\ell }(x_{\ell -1}) ,\phi_{W_\ell }(\tx_{\ell -1})  \rangle -  \psi_\sigma(\hat{\Sigma}_{\ell -1}(x,\tx))\Big| \\
    & \leq \tau + (\alpha^2 + \alpha) \sqrt{\frac{C_3 \log (36(L+1)/\delta)}{m}} + \alpha^2 \tau \bigg(1 + \frac{1}{\pi}\bigg(\frac{\tau}{K_{\ell -1}(x,x)}\bigg)^2\bigg)\\
    &\leq \tau + (\alpha^2 + \alpha) \sqrt{\frac{C_3 \log (36(L+1)/\delta)}{m}} + \alpha^2 \tau \bigg(1 + \frac{1}{4\pi}\bigg).
\end{align*}
When $\alpha = \frac{1}{L^\gamma}$, $\gamma \in [0.5,1]$, we have $\alpha^2 \leq 1/L$. Then when
\[
    m \geq \frac{C_3L^{2(1-\gamma)}\log(36(L+1)/\delta)}{\tau^2},
\]
we have
\[
    \Big| \langle x_\ell , \tx_\ell  \rangle - K_\ell (x,\tx) \Big| \leq \tau + \frac{4}{L}\tau.
\]
As a byproduct, we have
\begin{align*}
    &\Big|\inner{ \phi_{W_\ell }(x_{\ell -1}) }{\phi_{W_\ell }(\tx_{\ell -1})} - \psi_\sigma(\Sigma_{\ell -1}(x,\tx)) \Big| \\
    & \leq \sqrt{\frac{C_4 \log(36(L+1)/\delta)}{m}} + \Big (1+\frac{1}{\pi} \Big(\frac{\tau}{\mu}\Big)^2 \Big) \tau \leq 2\tau.
\end{align*}
Repeat the above for $(x_{\ell -1}, x_{\ell -1})$ and $(\tx_{\ell -1}, \tx_{\ell -1})$, we have with probability at least $1-\delta/(L+1)$ over the randomness of $V_\ell $ and $W_\ell $, 
\begin{equation}
\begin{aligned}
& \left\|  \hat{\Sigma}_{\ell -1}(x,\tx) - \Sigma_{\ell -1}(x,\tx) \right\|_{\max} \leq \tau \Rightarrow\\
& \quad \left\|    \hat{\Sigma}_{\ell }(x,\tx) - \Sigma_{\ell }(x,\tx) \right\|_{\max} \leq (1+4/L)\tau. \label{eq:induction}
 \end{aligned}
\end{equation}
Finally, when $m \geq \frac{C_5\log(6(L+1)/\delta)}{(\epsilon/e^4)^2}$, with probability at least $1-\delta/(L+1)$ over the randomness of $A$, we have 
\[
 \left\|  \hat{\Sigma}_{0}(x,\tx) - \Sigma_{0}(x,\tx) \right\|_{\max} \leq \epsilon/e^4.
\]
Then the result follows by successively using (\ref{eq:induction}).
\end{proof}

\subsection{proof of lemma~\ref{lem:relu_lip}}
\begin{proof}
\cite{daniely2016toward} showed that
\[
\left\|\nabla \psi_\sigma\mattwo{a}{b}{b}{c}\right\|_{1}=\frac{1}{2} \frac{a+c}{\sqrt{a c}}\left|\hat{\sigma}\left(\frac{b}{\sqrt{a c}}\right)-\frac{b}{\sqrt{a c}} \hat{\sigma}^{\prime}\left(\frac{b}{\sqrt{a c}}\right)\right|+\hat{\sigma}^{\prime}\left(\frac{b}{\sqrt{a c}}\right).
\]
When $a,c \in [\mu-r,\mu+r]$, we have
\begin{align*}
\frac{1}{2} \frac{a+c}{\sqrt{a c}} &=  \frac{1}{2}\bigg(
\sqrt{\frac{a}{c}} + \sqrt{\frac{c}{a}}
\bigg)  \leq \frac{1}{2} \bigg(
\sqrt{\frac{\mu+r}{\mu-r}} + \sqrt{\frac{\mu-r}{\mu+r}}
\bigg) = \bigg( 1- \bigg(\frac{r}{\mu}\bigg)^2 \bigg)^{-1/2} \leq 1 + \bigg(\frac{r}{\mu}\bigg)^2.
\end{align*}
The last inequality holds when $r< \frac{\mu}{2}$.

Define $\rho = \frac{b}{\sqrt{ac}}$, we have $\rho \in [-1,1]$. Then
\begin{align*}
\|\nabla \phi_\sigma\|_{1}&\leq  \bigg(1 + \bigg(\frac{r}{\mu}\bigg)^2\bigg)\Big|\hat{\sigma}\left(\rho\right)-\rho \hat{\sigma}^{\prime}\left(\rho\right)\Big|+\hat{\sigma}^{\prime}\left(\rho\right) \\
& = \bigg(1 + \bigg(\frac{r}{\mu}\bigg)^2\bigg)\left| \frac{\sqrt{1-\rho^2}}{\pi}\right|+ 1 - \frac{\cos^{-1}\rho}{\pi}\\
& \leq  \frac{\sqrt{1-\rho^2}}{\pi}+ 1 - \frac{\cos^{-1}\rho}{\pi} + \frac{1}{\pi} \bigg(\frac{r}{\mu}\bigg)^2  \\
& \leq 1 + \frac{1}{\pi} \bigg(\frac{r}{\mu}\bigg)^2 .
\end{align*} 

\end{proof}


\section{Proof of Theorem~\ref{thm:ntk}}
\label{sec:proof_resntk}
\subsection{Notation and Main Idea}
We already know that when the network width $m$ is large enough, $\inner{x_{\ell-1}}{\tx_{\ell-1}} \approx K_{\ell-1}(x,\tx)$, and $\inner{\phi_{W_\ell}( x_{\ell-1} )}{\phi_{W_\ell}( \tx_{\ell-1} )} \approx \Gamma_{\sigma}(K_{\ell-1})(x,\tx)$.

Next we need to show the concentration of the inner product of $\frac{b_\ell}{\sqrt{m}}$ and $\frac{\tb_\ell}{\sqrt{m}}$. We define two matrices for each layer
\[
    \hat{\Theta}_\ell(x,\tx) =\frac{1}{m} \mattwo{\inner{b_{\ell}}{b_\ell}}{\inner{b_{\ell}}{\tb_\ell}}{\inner{\tb_{\ell}}{b_\ell}}{\inner{\tb_{\ell}}{\tb_\ell}},
\]
and
\[
    \Theta_\ell(x,\tx) = \mattwo{B_\ell(x,x)}{B_\ell(x,\tx)}{B_\ell(\tx,x)}{B_\ell(\tx,\tx)}.
\]
Recall that 
\[
    b_{\ell} = \alpha \sqrt{\frac{1}{m}}\sqrt{\frac{2}{m}} W_{\ell}^\top D_{\ell}V_{\ell}^\top   b_{\ell+1} + b_{\ell+1}.
\]
We aim to show that when $\|\hat{\Theta}_{\ell+1}(x,\tx) - \Theta_{\ell+1}(x,\tx)\|_{\max} \leq \tau$, with high probability over the randomness of $W_\ell$ and $V_\ell$, we have $\|\hat{\Theta}_{\ell}(x,\tx) - \Theta_{\ell}(x,\tx)\|_{\max} \leq (1+\cO(1/L))\tau$. Notice that $b_{\ell+1}$ and $\tb_{\ell+1}$ contain the information of $W_\ell$ and $V_\ell$; they are not independent. Nevertheless we can decompose the randomness of $W_\ell$ and $V_\ell$ to show the concentration. This technique is also used in \cite{arora2019exact}.

\subsection{Lemmas}
In this part we introduce some useful lemmas. The first one shows the property of the step activation function.
\begin{lem}[Property of $\sigma'$]\citep{arora2019exact} 

\label{lem:prime_relu}
(1). Sub-Gaussian concentration. With probability at least $1-\delta$ over the randomness of $W_\ell$, we have
\[
\Big| \frac{2}{m}\Tr(D_\ell\tD_\ell) - \psi_{\sigma'}(\hat{\Sigma}_{\ell-1}(x,\tx)) \Big| \leq \sqrt{\frac{c\log(2/\delta)}{m}}.
\]
(2). Holder continuity. Fix $\mu>0,0<r\leq \mu$. For all $A,B \in \mathcal{M}_{\mu,r}= \bigg\{ \mattwo{a}{b}{b}{c} \bigg|a,c \in [\mu-r,\mu+r] ; ac-b^2 >0 \bigg\}$, if $\|A-B\|_{\max} \leq (\mu-r)\epsilon^2$, then \[
|\psi_{\sigma'}(A) - \psi_{\sigma'}(B)| \leq \epsilon.
\]
\end{lem}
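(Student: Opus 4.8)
The plan is to prove the two parts separately, since they are of different character: part~(1) is a concentration statement over the fresh randomness of $W_\ell$, whereas part~(2) is a deterministic regularity estimate for the map $\psi_{\sigma'}$ on $\mathcal{M}_{\mu,r}$.

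\textbf{Part (1).} The first step is to rewrite the trace as an average of i.i.d.\ bounded scalars. Since $\sigma'(z)=\sqrt2\,\ind{z\ge 0}$ and $D_\ell=\diag(\sigma_0'(W_\ell x_{\ell-1}))$, $\tD_\ell=\diag(\sigma_0'(W_\ell\tx_{\ell-1}))$, we have
\[
\tfrac{2}{m}\Tr(D_\ell\tD_\ell)=\tfrac{1}{m}\sum_{i=1}^m \sigma'\big(\inner{(W_\ell)_{i,\cdot}}{x_{\ell-1}}\big)\,\sigma'\big(\inner{(W_\ell)_{i,\cdot}}{\tx_{\ell-1}}\big).
\]
Conditioning on all earlier layers — so that $x_{\ell-1},\tx_{\ell-1}$, hence $\hat{\Sigma}_{\ell-1}(x,\tx)$, are fixed and the rows of $W_\ell$ are fresh i.i.d.\ $\cN(0,I_m)$ — each summand is i.i.d.\ with values in $\{0,2\}$, and because $\big(\inner{(W_\ell)_{i,\cdot}}{x_{\ell-1}},\inner{(W_\ell)_{i,\cdot}}{\tx_{\ell-1}}\big)\sim\cN(0,\hat{\Sigma}_{\ell-1}(x,\tx))$, its mean equals exactly $\psi_{\sigma'}(\hat{\Sigma}_{\ell-1}(x,\tx))$. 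Applying Hoeffding's inequality to this average of bounded i.i.d.\ variables and inverting gives the deviation $\sqrt{c\log(2/\delta)/m}$ with an absolute constant $c$; integrating out the earlier randomness turns the conditional bound into the stated one.

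\textbf{Part (2).} Here I would use that $\sigma'$ is zeroth-order positive homogeneous, so $\psi_{\sigma'}(\Sigma)=\widehat{\sigma'}(\rho(\Sigma))$ with $\rho(\Sigma)=\Sigma_{12}/\sqrt{\Sigma_{11}\Sigma_{22}}$ and $\widehat{\sigma'}(\rho)=1-\tfrac{1}{\pi}\arccos\rho$. The estimate then factors into two deterministic pieces. First, on $\mathcal{M}_{\mu,r}$ the diagonal entries are bounded below by $\mu-r>0$ and $|\Sigma_{12}|<\sqrt{\Sigma_{11}\Sigma_{22}}$, so all partial derivatives of $(a,b,c)\mapsto b/\sqrt{ac}$ are $\cO\!\big(1/(\mu-r)\big)$; hence $|\rho(A)-\rho(B)|\le \tfrac{C_1}{\mu-r}\|A-B\|_{\max}$. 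Second, since $(\arccos)'(\rho)=-1/\sqrt{1-\rho^2}$ and $\int_x^y(1-t^2)^{-1/2}\,dt\le C_2\sqrt{|y-x|}$ (the singularities at $\pm1$ are only an integrable square root), the map $\widehat{\sigma'}$ is Hölder-$\tfrac12$ with an absolute constant. Chaining these two and substituting the hypothesis $\|A-B\|_{\max}\le(\mu-r)\epsilon^2$ yields a bound of the form $C\epsilon$; tracking the absolute constants (or absorbing them as in \cite{arora2019exact}) delivers the clean bound $\epsilon$.

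I expect the only genuine subtlety to be in part~(2): $\psi_{\sigma'}$ is not Lipschitz near $\rho=\pm1$ (i.e.\ when the two inputs become nearly parallel or anti-parallel), so one must get the square-root modulus of continuity of $\arccos$ exactly right and keep $\rho$ inside $[-1,1]$ throughout — which is precisely why the statement is phrased as ``$\epsilon^2$-perturbation of the matrix $\Rightarrow$ $\epsilon$-change in $\psi_{\sigma'}$'' together with the factor $\mu-r$, rather than as a plain Lipschitz bound. Part~(1) is routine once the trace is recognized as an average of bounded i.i.d.\ indicators.
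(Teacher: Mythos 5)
The paper does not prove this lemma; it is cited wholesale from \cite{arora2019exact}, so there is no in-paper argument to compare against. Your proof is a correct and essentially canonical reconstruction of the cited result. In part~(1), the rewriting of $\tfrac{2}{m}\Tr(D_\ell\tD_\ell)$ as the empirical average over $i=1,\dots,m$ of $\sigma'\!\big(\inner{(W_\ell)_{i,\cdot}}{x_{\ell-1}}\big)\sigma'\!\big(\inner{(W_\ell)_{i,\cdot}}{\tx_{\ell-1}}\big) \in\{0,2\}$ is exactly right: conditioned on the earlier layers the rows are i.i.d.\ standard Gaussian, the pair $\big(\inner{(W_\ell)_{i,\cdot}}{x_{\ell-1}},\inner{(W_\ell)_{i,\cdot}}{\tx_{\ell-1}}\big)$ has covariance $\hat{\Sigma}_{\ell-1}(x,\tx)$, so the population mean is $\psi_{\sigma'}(\hat{\Sigma}_{\ell-1})$ by definition, and Hoeffding for bounded i.i.d.\ averages delivers the $\sqrt{c\log(2/\delta)/m}$ rate. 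One small framing point: since the statement is already conditional on the earlier layers (``over the randomness of $W_\ell$''), there is nothing to integrate out; the conditional Hoeffding bound is already the stated bound. In part~(2) your factorization $\psi_{\sigma'}=\widehat{\sigma'}\circ\rho$ via zeroth-order homogeneity, the convexity of $\mathcal{M}_{\mu,r}$ (intersection of the PD cone with a box), the $\ell_1$-gradient bound $\|\nabla\rho\|_1\le 2/(\mu-r)$, and the Hölder-$\tfrac12$ modulus of $\arccos$ on $[-1,1]$ chain cleanly: $|\psi_{\sigma'}(A)-\psi_{\sigma'}(B)|\le\sqrt{\|A-B\|_{\max}/(\mu-r)}\le\epsilon$. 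This is the same style of argument the paper itself uses for the analogous Lipschitz property of $\psi_\sigma$ in Lemma~\ref{lem:relu_lip}, so your proof is both correct and in the spirit of the paper.
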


The following lemma shows that regardless the fact that $b_{\ell+1}$ and $\tb_{\ell+1}$ depend on $V_\ell$, we can treat $V_\ell$ as a Gaussian matrix independent of $b_{\ell+1}$ and $\tb_{\ell+1}$ when the network width is large enough.
\begin{lem}\label{lem:key}
Assume the following inequality hold simultaneously for all $\ell=1,2,\cdots,L$
\[
\Big\|\frac{1}{\sqrt{m}}W_\ell \Big\| \leq C,\quad \Big\|\frac{1}{\sqrt{m}}V_\ell \Big\| \leq C.
\]
Fix an $\ell$. Further assume that 
\[
    \| \hat{\Theta}_{\ell+1}(x,\tx) - \Theta_{\ell+1}(x,\tx)\|_{\max} \leq 1.
\]
When $m\geq \max\{\frac{C}{\epsilon^2}(1+\log\frac{6}{\delta}),\frac{C}{\epsilon^2}\log\frac{8L}{\delta'}, cL^{2-2\gamma} \log \frac{8L}{\delta'}\}$, the following holds for all $(x^{(1)}, x^{(2)}) \in \{(x,x),(x,\tx),(\tx,\tx)\}$ with probability at least $1-\delta-\delta'$
\begin{align*}
    \bigg|\frac{2}{m} \frac{b_{\ell+1}^{(1)}}{\sqrt{m}}^\top V_\ell  D_\ell^{(1)} D_\ell^{(2)} V_\ell^\top \frac{b_{\ell+1}^{(2)}}{\sqrt{m}} - \inner{\frac{b_{\ell+1}^{(1)}}{\sqrt{m}}}{\frac{b_{\ell+1}^{(2)}}{\sqrt{m}}}\frac{2}{m}\Tr(D_\ell^{(1)}D_\ell^{(2)}) \bigg| \leq \epsilon.
\end{align*}

\end{lem}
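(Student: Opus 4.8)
The plan is to prove Lemma~\ref{lem:key} by the ``decomposition of randomness'' technique of \cite{arora2019exact}. Write $a^{(i)}=\tfrac{1}{\sqrt m}b_{\ell+1}^{(i)}$, $\Lambda=D_\ell^{(1)}D_\ell^{(2)}$ (diagonal $0/1$, so $\|\Lambda\|\le1$ and $\Tr\Lambda\le m$), and $u_1=\phi_{W_\ell}(x_{\ell-1})$, $u_2=\phi_{W_\ell}(\tx_{\ell-1})$; the goal is $\big|\tfrac2m (a^{(1)})^\top V_\ell\Lambda V_\ell^\top a^{(2)}-\langle a^{(1)},a^{(2)}\rangle\tfrac2m\Tr\Lambda\big|\le\epsilon$. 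The one obstacle worth naming is that $a^{(1)},a^{(2)},D_\ell^{(1)},D_\ell^{(2)}$ are \emph{not} independent of $V_\ell$: the later activation patterns $D_{\ell+1},\dots,D_L$ feeding the $b_{\ell+1}^{(i)}$ depend on the forward signals $x_\ell,\dots$, and $x_\ell=x_{\ell-1}+\alpha\sqrt{1/m}\,V_\ell u_1$, $\tx_\ell=\tx_{\ell-1}+\alpha\sqrt{1/m}\,V_\ell u_2$, whereas $D_\ell^{(1)},D_\ell^{(2)}$ depend only on $W_\ell$.

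First I would peel off the offending part of $V_\ell$. Let $\Pi$ be the orthogonal projection onto $\mathrm{span}\{u_1,u_2\}$ ($\mathrm{rank}\,\Pi\le2$) and split $V_\ell=V_\ell\Pi+V_\ell(I_m-\Pi)=:V^{\parallel}+V^{\perp}$. Since $V_\ell u_j=V^{\parallel}u_j$, all of $b_{\ell+1}^{(1)},b_{\ell+1}^{(2)},D_\ell^{(1)},D_\ell^{(2)},D_{\ell+1},\dots,D_L$ are measurable with respect to $\mathcal{G}:=\sigma(W_\ell,V^{\parallel},\text{all other weights})$, while conditionally on $\mathcal{G}$ the matrix $V^{\perp}$ remains a Gaussian matrix supported on $(I_m-\Pi)\RR^m$ and independent of $\mathcal{G}$. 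Working conditionally on $\mathcal{G}$ I would expand
\[
(a^{(1)})^\top V_\ell\Lambda V_\ell^\top a^{(2)} = (a^{(1)})^\top V^{\perp}\Lambda(V^{\perp})^\top a^{(2)} + \big[\,\text{three boundary terms, each carrying a factor }V^{\parallel}\,\big].
\]
For the bulk term put $g^{(i)}=(V^{\perp})^\top a^{(i)}$: given $\mathcal{G}$ these are jointly Gaussian with $\EE[g^{(1)}(g^{(2)})^\top\mid\mathcal{G}]=\langle a^{(1)},a^{(2)}\rangle(I_m-\Pi)$, hence $(g^{(1)})^\top\Lambda g^{(2)}=\sum_j\Lambda_{jj}g^{(1)}_jg^{(2)}_j$ has conditional mean $\langle a^{(1)},a^{(2)}\rangle\Tr(\Lambda(I_m-\Pi))=\langle a^{(1)},a^{(2)}\rangle\Tr\Lambda+\cO(1)$, and a Hanson--Wright/Bernstein bound for a Gaussian quadratic form controls the fluctuation by $\cO\big(\|a^{(1)}\|\,\|a^{(2)}\|(\sqrt{m\log(1/\delta)}+\log(1/\delta))\big)$. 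The hypothesis $\|\hat{\Theta}_{\ell+1}-\Theta_{\ell+1}\|_{\max}\le1$ is exactly what makes this usable: $(\hat{\Theta}_{\ell+1})_{11}=\|a^{(1)}\|^2$ and $(\Theta_{\ell+1})_{11}=B_{\ell+1}(x,x)\le(1+\alpha^2)^{L}\le e$ for $\gamma\geq1/2$, so $\|a^{(i)}\|^2\le e+1=\cO(1)$; multiplying the bulk estimate by $2/m$ then recovers $\langle a^{(1)},a^{(2)}\rangle\tfrac2m\Tr\Lambda$ up to $\cO(m^{-1})+\cO\big(\sqrt{\log(1/\delta)/m}+\log(1/\delta)/m\big)$, which is $\le\epsilon/2$ precisely when $m\gtrsim\epsilon^{-2}(1+\log(6/\delta))$ — the first term in the stated bound on $m$.

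The step I expect to be the real obstacle is showing the three boundary terms are negligible. A blunt Cauchy--Schwarz on, e.g., $(a^{(1)})^\top V^{\parallel}\Lambda(V^{\parallel})^\top a^{(2)}$ gives only $\cO\big(\|\Pi V_\ell^\top a^{(1)}\|\,\|\Pi V_\ell^\top a^{(2)}\|\big)/m$, and $\|\Pi V_\ell^\top a^{(i)}\|$ is only obviously $\cO(\sqrt m)$ via $\|V_\ell/\sqrt m\|\le C$, which would leave the term $\cO(1)$ and useless. To beat this one must exploit that $a^{(i)}$ depends on $V_\ell$ only through the single direction $V_\ell\hat u_i$ ($\hat u_i=u_i/\|u_i\|$), and argue — by a further fresh-copy/stability argument for the activation patterns $D_{\ell+1},\dots,D_L$, again in the spirit of \cite{arora2019exact} — that $\langle a^{(i)},V_\ell\hat u_i\rangle$ behaves like the inner product of $a^{(i)}$ with an \emph{independent} standard Gaussian vector, hence is $\cO(\|a^{(i)}\|)=\cO(1)$ rather than $\cO(\sqrt m)$; then $\|\Pi V_\ell^\top a^{(i)}\|=\cO(1)$, and $2/m$ times each boundary term is $\le\epsilon/2$ up to logarithmic factors. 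Finally, a union bound over $(x^{(1)},x^{(2)})\in\{(x,x),(x,\tx),(\tx,\tx)\}$ and over the Gaussian-concentration events yields failure probability $\le\delta+\delta'$, and the uniform-over-all-$L$-layers a-priori control of the forward pass and of $\|W_i/\sqrt m\|,\|V_i/\sqrt m\|$ used throughout is what forces the remaining floors $m\gtrsim\epsilon^{-2}\log(8L/\delta')$ and $m\gtrsim cL^{2-2\gamma}\log(8L/\delta')$ in the statement.
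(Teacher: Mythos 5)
Your core strategy coincides with the paper's: split $V_\ell$ into its component $V_\ell\Pi_G$ on $\mathrm{span}\{u_1,u_2\}$ and the orthogonal remainder $V_\ell\Pi_G^\perp$, observe that conditioning on $V_\ell G$ leaves $V_\ell\Pi_G^\perp$ distributed as a fresh Gaussian, run Hanson--Wright on the bulk quadratic form (this is exactly the paper's Lemma~\ref{lem:projGperp}, with the same conditional mean computation producing the $\Tr(\Lambda\Pi)\le 2$ correction), use $\|\hat\Theta_{\ell+1}-\Theta_{\ell+1}\|_{\max}\le 1$ to get $\|a^{(i)}\|=\cO(1)$, and isolate the cross and parallel terms as the remaining issue.

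Where you diverge from the paper is in how the parallel/cross terms are controlled. You propose a ``fresh-copy/stability argument for the activation patterns $D_{\ell+1},\dots,D_L$'' to argue that $\langle a^{(i)},V_\ell\hat u_i\rangle$ behaves like an inner product with an independent Gaussian --- in effect, a perturbation analysis showing the backward vector $b_{\ell+1}$ is insensitive to the single relevant direction of $V_\ell$. The paper's Lemma~\ref{lem:nc_b} avoids any such stability analysis: it conditions on everything through layer $\ell$ (so the vector $u=\tfrac{1}{\sqrt m}V_\ell u_i$ is fixed), propagates $u$ forward through layers $\ell+1,\dots,L$ via the adjoint recursion $u_{k+1}=u_k+\alpha\sqrt{1/m}\sqrt{2/m}\,V_{k+1}D_{k+1}W_{k+1}u_k$, uses the norm controls $\|W_k/\sqrt m\|,\|V_k/\sqrt m\|\le C$ to get $\|u_L\|\le C\|u\|$ w.h.p.\ --- this is exactly where the floor $m\gtrsim L^{2-2\gamma}\log(8L/\delta')$ enters --- and then writes $\langle u,b_{\ell+1}\rangle=\langle v,u_L\rangle$ to finish with the Gaussian tail of the output layer $v$. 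This is simpler than what you sketch because it never needs to compare activation patterns under a perturbation of $V_\ell$; the remaining randomness in the upper layers does the work. Your bound $\|\Pi_G V_\ell^\top a^{(i)}\|=\cO(1)$ should also be $\cO(\sqrt{\log(L/\delta')})$ (you acknowledge the logarithmic slack), which is what makes the $\epsilon^{-2}\log(8L/\delta')$ floor on $m$ necessary rather than decorative. Both routes should close, but the paper's is the more economical of the two.
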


The following lemma shows the same thing for $W_\ell$ as $V_\ell$ in Lemma~\ref{lem:key}.
\begin{lem}\label{lem:W} Assume the conditions and the results of Lemma~\ref{lem:key} hold. 

(1). When $m\geq \max\{\frac{C}{\epsilon^2}(1+\log\frac{6}{\delta}),\frac{C}{\epsilon^2}\log\frac{8L}{\delta'},  cL^{2-2\gamma} \log \frac{8L}{\delta'}\}$,the following holds for all $(x^{(1)}, x^{(2)}) \in \{(x,x),(x,\tx),(\tx,\tx)\}$ with probability at least $1-\delta-\delta'$
\[
\bigg|\frac{1}{m}\frac{2}{m} \inner{ W_{\ell}^\top D_{\ell}^{(1)}V_{\ell}^\top   \frac{b_{\ell+1}^{(1)}}{\sqrt{m}} }{ W_{\ell}^\top D_{\ell}^{(2)}V_{\ell}^\top   \frac{b_{\ell+1}^{(2)}}{\sqrt{m}} } - \frac{2}{m} \inner{ D_{\ell}^{(1)}V_{\ell}^\top   \frac{b_{\ell+1}^{(1)}}{\sqrt{m}} }{ D_{\ell}^{(2)}V_{\ell}^\top   \frac{b_{\ell+1}^{(2)}}{\sqrt{m}} } \bigg|\leq \epsilon.
\]

(2). When $m\geq \max \{ \frac{C}{\tepsilon^2}\log\frac{16L}{\tdelta}, cL^{2-2\gamma} \log \frac{16L}{\tdelta}\}$, for all $(x^{(1)},x^{(2)}) \in \{(x,x),(x,\tx),(\tx,x),(\tx,\tx)\}$, the following holds with probability at least $1-\tdelta$
\[
\bigg|\frac{1}{m}\sqrt{\frac{1}{m}}\sqrt{\frac{2}{m}} \inner{W_{\ell}^\top D_{\ell}^{(1)}V_{\ell}^\top   b_{\ell+1}^{(1)}}{b_{\ell+1}^{(2)}}\bigg| \leq \tepsilon.
\]

\end{lem}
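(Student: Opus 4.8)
The plan is to treat $W_\ell$ as a fresh Gaussian matrix that is independent of every vector it multiplies, even though the activation patterns $D_\ell^{(1)},D_\ell^{(2)}$ and --- weakly --- the back-propagated vectors $b_{\ell+1}^{(i)}$ depend on $W_\ell$. (The two quantities in Lemma~\ref{lem:W} are precisely the cross term and the $W_\ell^\top(\cdot)\,W_\ell^\top$ term that appear when one pushes the back-propagation recursion $b_\ell=b_{\ell+1}+\alpha\sqrt{\tfrac1m}\sqrt{\tfrac2m}W_\ell^\top D_\ell V_\ell^\top b_{\ell+1}$ through layer $\ell$, so controlling them lets one propagate $\hat{\Theta}_{\ell+1}\approx\Theta_{\ell+1}$ to $\hat{\Theta}_\ell\approx\Theta_\ell$.) Following the decoupling device of \cite{arora2019exact}, let $P$ be the orthogonal projection onto $\mathrm{span}\{x_{\ell-1}^{(1)},x_{\ell-1}^{(2)}\}$ (dimension $\le 2$; just $\mathrm{span}\{x_{\ell-1}^{(1)}\}$ in part (2)), and write $W_\ell=W_\ell^{\parallel}+W_\ell^{\perp}$ with $W_\ell^{\parallel}=W_\ell P$, $W_\ell^{\perp}=W_\ell(I_m-P)$. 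Since the rows of $W_\ell$ are i.i.d. Gaussian, $W_\ell^{\parallel}\perp W_\ell^{\perp}$; each $D_\ell^{(i)}=\diag(\sigma_0'(W_\ell x_{\ell-1}^{(i)}))$ is a function of $W_\ell^{\parallel}$ only; and $b_{\ell+1}^{(i)}$ depends on $W_\ell,V_\ell$ only through the forward activations at layers $\ge\ell$, whose sensitivity to layer $\ell$ is of lower order in $\alpha$ and propagates with $\cO(1)$ amplification. Hence $W_\ell^{\perp}$ is independent of the $D_\ell^{(i)}$'s, of $V_\ell$, and --- up to a lower-order error absorbed by replacing $b_{\ell+1}^{(i)}$ with the back-propagated signal of a forward pass that skips layer $\ell$ --- of $b_{\ell+1}^{(i)}$. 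Throughout we invoke the assumed bounds $\|W_\ell/\sqrt m\|,\|V_\ell/\sqrt m\|\le C$ and $\|\hat{\Theta}_{\ell+1}-\Theta_{\ell+1}\|_{\max}\le1$, which give $\|b_{\ell+1}^{(i)}\|=\cO(\sqrt m)$ and that $V_\ell^\top b_{\ell+1}^{(i)}/\sqrt m$ behaves, over the randomness of $V_\ell$, like a Gaussian vector with $\cO(1)$-variance coordinates.

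For part (1), write $u^{(i)}=D_\ell^{(i)}V_\ell^\top b_{\ell+1}^{(i)}/\sqrt m$, so $\|u^{(i)}\|=\cO(\sqrt m)$; after cancelling the common $\tfrac2m$ prefactor the claim amounts to $\tfrac1m\inner{W_\ell^\top u^{(1)}}{W_\ell^\top u^{(2)}}\approx\inner{u^{(1)}}{u^{(2)}}$ within the stated tolerance. Split $W_\ell^\top u^{(i)}=(W_\ell^{\perp})^\top u^{(i)}+(W_\ell^{\parallel})^\top u^{(i)}$. The leading piece $\tfrac1m(u^{(1)})^\top W_\ell^{\perp}(W_\ell^{\perp})^\top u^{(2)}$, conditionally on $W_\ell^{\parallel},V_\ell$ and the later layers (with respect to which $u^{(1)},u^{(2)}$ are frozen), has mean $\tfrac1m\Tr(I_m-P)\inner{u^{(1)}}{u^{(2)}}=(1-\cO(1/m))\inner{u^{(1)}}{u^{(2)}}$ and concentrates around it by a Hanson--Wright bound for the Gaussian quadratic form in $W_\ell^{\perp}$, with deviation within tolerance once $m\gtrsim\epsilon^{-2}\log(1/\delta)$. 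The remaining three terms involve $(W_\ell^{\parallel})^\top u^{(i)}$, which lies in the $\le2$-dimensional $\mathrm{range}(P)$; its coordinates $\inner{W_\ell e_j}{u^{(i)}}=\inner{D_\ell^{(i)}W_\ell^{\parallel}e_j}{V_\ell^\top b_{\ell+1}^{(i)}/\sqrt m}$ are, conditionally on $W_\ell^{\parallel}$, centered Gaussians in $V_\ell$ of variance $\cO(\|W_\ell^{\parallel}e_j\|^2)=\cO(m)$, so $\|(W_\ell^{\parallel})^\top u^{(i)}\|=\cO(\sqrt{m\log(1/\delta)})$ rather than the $\cO(m)$ a blind Cauchy--Schwarz would give; combined with $\|(W_\ell^{\perp})^\top u^{(i)}\|=\cO(m)$, these three terms are $\cO(\sqrt{\log(1/\delta)/m})$ after the overall $\cO(1/m^2)$ scaling, hence negligible. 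Summing yields part (1); Lemma~\ref{lem:key} then turns the right-hand side into $\inner{b_{\ell+1}^{(1)}/\sqrt m}{b_{\ell+1}^{(2)}/\sqrt m}\cdot\tfrac2m\Tr(D_\ell^{(1)}D_\ell^{(2)})\approx B_{\ell+1}\Gamma_{\sigma'}(K_{\ell-1})$.

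For part (2), rewrite the inner product as $\inner{D_\ell^{(1)}V_\ell^\top b_{\ell+1}^{(1)}}{W_\ell b_{\ell+1}^{(2)}}$ and split $W_\ell b_{\ell+1}^{(2)}=W_\ell^{\perp}b_{\ell+1}^{(2)}+W_\ell^{\parallel}b_{\ell+1}^{(2)}$. Conditionally on $D_\ell^{(1)},V_\ell,b_{\ell+1}^{(i)}$, the $W_\ell^{\perp}$ term is a centered Gaussian in $W_\ell^{\perp}$ of variance $\|D_\ell^{(1)}V_\ell^\top b_{\ell+1}^{(1)}\|^2\,\|(I_m-P)b_{\ell+1}^{(2)}\|^2=\cO(m^3)$, hence $\cO(m^{3/2}\sqrt{\log(1/\tdelta)})$ w.h.p., i.e. $\cO(\sqrt{\log(1/\tdelta)/m})$ after the $\cO(1/m^2)$ scaling. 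The $W_\ell^{\parallel}$ term equals $\sum_j\inner{e_j}{b_{\ell+1}^{(2)}}\inner{D_\ell^{(1)}W_\ell^{\parallel}e_j}{V_\ell^\top b_{\ell+1}^{(1)}}$ over the $\le2$ basis vectors $e_j$ of $\mathrm{range}(P)$; bounding $|\inner{e_j}{b_{\ell+1}^{(2)}}|\le\|b_{\ell+1}^{(2)}\|=\cO(\sqrt m)$ crudely and treating $\inner{D_\ell^{(1)}W_\ell^{\parallel}e_j}{V_\ell^\top b_{\ell+1}^{(1)}}$ as a centered Gaussian in $V_\ell$ of variance $\cO(m^2)$ makes each summand $\cO(m^{3/2}\sqrt{\log(1/\tdelta)})$, again $\cO(\sqrt{\log(1/\tdelta)/m})$ after scaling. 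Both pieces are $\le\tepsilon$ once $m\gtrsim\tepsilon^{-2}\log(1/\tdelta)$, which together with the $L^{2-2\gamma}$ factor inherited from the forward/backward concentration (Theorem~\ref{thm:gp} and the hypotheses of Lemma~\ref{lem:key}) matches the stated width requirement. The hard part is the decoupling step itself: justifying that $W_\ell^{\perp}$ may be treated as independent of the activation patterns \emph{and} of the back-propagated vectors, and then obtaining the \emph{tight} bounds on the low-dimensional ``parallel'' corrections, where a naive Cauchy--Schwarz loses a factor $\sqrt m$ and one must instead exploit the Gaussianity of the independent $V_\ell$ sitting inside those corrections.
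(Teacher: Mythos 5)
Your overall decoupling strategy matches the paper's: decompose $W_\ell = W_\ell\Pi_H + W_\ell\Pi_H^\perp$ with $H=[x_{\ell-1},\tx_{\ell-1}]$ (your $P$), use a Gaussian chaos / Hanson--Wright bound on the orthogonal piece, and control the rank-$\le 2$ parallel corrections separately. But two of the central claims in your argument are wrong in a way that matters.

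\emph{First}, the conditional independence of $W_\ell^\perp = W_\ell\Pi_H^\perp$ from $b_{\ell+1}^{(i)}$ is \emph{exact}, not approximate. Since $x_\ell = x_{\ell-1} + \alpha m^{-1/2}V_\ell\,\sigma_0(\sqrt{2/m}\,W_\ell x_{\ell-1})$, the only dependence of $x_\ell$ --- and hence of all downstream activations and of $b_{\ell+1}$ --- on $W_\ell$ is through $W_\ell x_{\ell-1} = W_\ell^\parallel x_{\ell-1}$; likewise for $\tx$. Conditioning on $W_\ell^\parallel$ and $V_\ell$ therefore fixes $b_{\ell+1}^{(i)}$ and $D_\ell^{(i)}$ exactly while $W_\ell^\perp$ remains a fresh Gaussian on the complementary subspace. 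Your ``replace $b_{\ell+1}^{(i)}$ by a forward pass that skips layer $\ell$, up to a lower-order error'' both introduces an error you never bound and is unnecessary: the exact conditioning already does the job (this is what the paper inherits from Lemma~E.3 of \cite{arora2019exact}).

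\emph{Second, and more seriously}, your treatment of the low-dimensional parallel corrections fails. You write $\inner{D_\ell^{(i)}W_\ell^\parallel e_j}{V_\ell^\top b_{\ell+1}^{(i)}/\sqrt m}$ as ``conditionally on $W_\ell^\parallel$, a centered Gaussian in $V_\ell$.'' This is false: $b_{\ell+1}^{(i)}$ is itself a function of $V_\ell$ (again through $x_\ell^{(i)}$), so the map $V_\ell\mapsto V_\ell^\top b_{\ell+1}^{(i)}$ is not linear in $V_\ell$, and the quantity is not Gaussian in $V_\ell$. Exactly this difficulty is what forces the paper to take a different route: rewrite $w_i^\top W_\ell^\top d_{\ell+1}^{(i)} = \inner{b_{\ell+1}^{(i)}}{\tfrac{1}{\sqrt m}V_\ell D_\ell^{(i)}\tfrac{1}{\sqrt m}W_\ell w_i}$, note that the second argument depends only on $(A,W_1,V_1,\dots,W_\ell,V_\ell)$ and is $\cO(1)$ in norm, and then invoke Lemma~\ref{lem:nc_b}, which propagates an arbitrary fixed vector through the backward pass and gets the $\cO(\sqrt{\log(L/\delta')})$ bound from the Gaussianity of $v$ and the operator-norm controls on $W_i/\sqrt m, V_i/\sqrt m$ (this is where the $L^{2-2\gamma}\log(L/\delta')$ width requirement enters). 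In part (2) the same lemma is used to bound $|\inner{w_i}{b_{\ell+1}^{(2)}}|$, while the complementary factor $w_i^\top \tfrac{1}{\sqrt m}W_\ell^\top d_{\ell+1}^{(1)}$ is bounded deterministically via the assumed norm controls --- not via Gaussianity of $V_\ell$ as you propose. Without Lemma~\ref{lem:nc_b} (or an equivalent downstream-randomness argument), the tight bound on the parallel corrections does not follow, and the naive $\|b_{\ell+1}\|=\cO(\sqrt m)$ bound you fall back on would cost a factor $\sqrt m$ that the stated conclusion cannot absorb.
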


\subsection{Proof of Theorem~\ref{thm:ntk}}
\begin{proof}
In this proof we are going to prove that when $m$ satisfies the assumption, with probability at least $1-\delta_0$, the following hold for $\ell = 1,\cdots,L$.
\begin{align*}
  &\bigg \vert \frac{1}{\alpha^2}\langle \dvl f, \dvl \tf \rangle -  B_{\ell +1}(x,\tx) \Gamma_{\sigma}(K_{\ell -1})(x,\tx)\bigg \vert \leq \epsilon_0,\\
    &\bigg \vert \frac{1}{\alpha^2}\langle \dwl f, \dwl \tf \rangle -  K_{\ell -1}(x,\tx)B_{\ell +1}(x,\tx)\Gamma_{\sigma'}(K_{\ell -1})(x,\tx)\bigg \vert \leq \epsilon_0.
\end{align*}

We break the proof into several steps. Each step is based on the result of the previous steps. Note that the absolute constants $c$ and $C$ may vary throughout the proof.

\textbf{Step 1. Norm Control of the Gaussian Matrices}

With probability at least $1-\delta_1$, when $m>c \log \frac{4L}{\delta_1}$, one can show that the following hold simultaneously for all $\ell=1,2,\cdots,L$ \citep{vershynin2010introduction}
\[
\bigg\|\frac{1}{\sqrt{m}}W_\ell\bigg \| \leq C,\quad \bigg\|\frac{1}{\sqrt{m}}V_\ell \bigg\| \leq C.
\]

\textbf{Step 2. Concentration of the GP kernels}

By Theorem~\ref{thm:gp}, with probability at least $1-\delta_2$, when
\[
    m \geq \frac{C}{\epsilon_2^4}L^{2-2\gamma} \log \frac{36(L+1)}{\delta_2},
\]
we have 
\begin{enumerate}
    \item For $\ell=0,\cdots,L$, $ \left\| \Sigma_{\ell}(x,\tx) - \hat{\Sigma}_\ell(x,\tx) \right\|_{\max} \leq c\epsilon_2^2$; 
    \item For $\ell=0,1,\cdots,L$,  $\|x_\ell\|$ and $\|\tx_\ell\|$ are bounded by an absolute constant $C_1$ ($C_1 = 4$);
    \item For $\ell=1,\cdots,L$, $\|\phi_{W_\ell}(x_{\ell-1})\|$ and $\|\phi_{W_\ell}(\tx_{\ell-1})\|$ are bounded by an absolute constant $C_2$ ($C_2=8$);
    \item $\Big| \inner{\phi_{W_\ell}(x^{(1)}_{\ell-1})}{\phi_{W_\ell}(x^{(2)}_{\ell-1})} - \Gamma_\sigma(K_{\ell-1})(x^{(1)},x^{(2)}) \Big| \leq 2c\epsilon_2^2$ for all $\ell=1,\cdots,L$ and $(x^{(1)},x^{(2)}) \in \{(x,x),(x,\tx),(\tx,\tx)\}$.
\end{enumerate}

\textbf{Step 3. Concentration of $\sigma'$}

By Lemma~\ref{lem:prime_relu}, when $m\geq \frac{C}{\epsilon_2^2}\log\frac{6L}{\delta_3}$, with probability at least $1-\delta_3$, for all $\ell=1,2,\cdots,L$ and $(x^{(1)}, x^{(2)}) \in \{(x,x),(x,\tx),(\tx,\tx)\}$, we have
\begin{align*}
    & \quad \Big|\frac{2}{m}\Tr(D_\ell^{(1)}D_\ell^{(2)}) - \Gamma_{\sigma'}(K_{\ell-1})(x^{(1)},x^{(2)}) \Big|\leq \sqrt{\frac{c\log(6L/\delta_3)}{m}} + \sqrt{2\left\|\hat{\Sigma}_{\ell-1}(x,\tx) - \Sigma_{\ell-1}(x,\tx)\right\|_{\max}} \leq \epsilon_2.
\end{align*}

\textbf{Step 4. Concentration of $B_\ell$}

Recall that 
\[
    b_{\ell+1}  = \left(v^\top \frac{\partial x_L}{\partial x_{L-1}} \frac{\partial x_{L-1}}{\partial x_{L-2}} \cdots \frac{\partial x_{\ell+1}}{\partial x_\ell}\right)^\top.
\]
We have
\[
 b_{L+1}= v,
\]
and for $\ell=1,2,\cdots,L-1$,
\begin{align*}
    b_{\ell+1} &= \frac{\partial x_{\ell+1}}{\partial x_\ell}^\top b_{\ell+2} = \alpha \sqrt{\frac{1}{m}}\sqrt{\frac{2}{m}} W_{\ell+1}^\top D_{\ell+1}V_{\ell+1}^\top   b_{\ell+2} + b_{\ell+2}.
\end{align*}
Following the same idea in Thm~\ref{thm:gp}, we prove by induction. First of all, for $b_{L+1}$, we have $\Theta_{L+1}(x,\tx) = \mattwo{1}{1}{1}{1}, \hat{\Theta}_{L+1}(x,\tx) = \frac{\|v\|^2}{m}\mattwo{1}{1}{1}{1}$. Then by Bernstein inequality \citep{mohri2018foundations}, with probability at least $1-\frac{\delta_4}{L}$, when $m\geq \frac{C}{\epsilon_4^2}\log\frac{2L}{\delta_4}$, we have
\[
\left\vert \frac{\|v\|^2}{m} - 1 \right\vert \leq \epsilon_4.
\]
Fix $\ell \in \{2,3,\cdots,L\}$. 
Assume that
\[
    \left\| \hat{\Theta}_{\ell+1}(x,\tx) - \Theta_{\ell+1}(x,\tx)\right\|_{\max} \leq \tau \leq 1,
\]
we hope to prove with high probability, 
\[
    \left\| \hat{\Theta}_{\ell}(x,\tx) - \Theta_{\ell}(x,\tx)\right\|_{\max} \leq (1+\cO(1/L))\tau.
\]
First write
\begin{align*}
    \frac{1}{m}\inner{b_{\ell}^{(1)}}{b_{\ell}^{(2)}} &= \frac{1}{m}\inner{b_{\ell+1}^{(1)}}{b_{\ell+1}^{(2)}} + \alpha^2 P + \alpha(Q+R),
\end{align*}
where
\begin{align*}
    P &= \frac{1}{m}\frac{2}{m} \inner{ W_{\ell}^\top D_{\ell}^{(1)}V_{\ell}^\top   \frac{b_{\ell+1}^{(1)}}{\sqrt{m}} }{ W_{\ell}^\top D_{\ell}^{(2)}V_{\ell}^\top   \frac{b_{\ell+1}^{(2)}}{\sqrt{m}} }, \\
    Q & = \frac{1}{m}\sqrt{\frac{1}{m}}\sqrt{\frac{2}{m}} \inner{W_{\ell}^\top D_{\ell}^{(1)}V_{\ell}^\top   b_{\ell+1}^{(1)}}{b_{\ell+1}^{(2)}},\\
    R & = \frac{1}{m}\sqrt{\frac{1}{m}}\sqrt{\frac{2}{m}} \inner{W_{\ell}^\top D_{\ell}^{(2)}V_{\ell}^\top   b_{\ell+1}^{(2)}}{b_{\ell+1}^{(1)}}.
\end{align*}
Then 
\begin{align*}
   &\quad \Big|\frac{1}{m}\inner{b_{\ell}^{(1)}}{b_{\ell}^{(2)}} - (B_{\ell+1}(x^{(1)},x^{(2)}) + \alpha^2 B_{\ell+1}(x^{(1)},x^{(2)})\Gamma_{\sigma'}(K_{\ell-1})(x^{(1)},x^{(2)})\Big|\\
    &\leq \Big|\frac{1}{m}\inner{b_{\ell+1}^{(1)}}{b_{\ell+1}^{(2)}} - B_{\ell+1}(x^{(1)},x^{(2)})\Big| + \alpha^2 \Big|P - B_{\ell+1}(x^{(1)},x^{(2)})\Gamma_{\sigma'}(K_{\ell-1})(x^{(1)},x^{(2)})\Big| + \alpha |Q|+\alpha|R| \\
    &\leq \tau + \alpha^2 \Big|P - \frac{2}{m} \inner{ D_{\ell}^{(1)}V_{\ell}^\top   \frac{b_{\ell+1}^{(1)}}{\sqrt{m}} }{ D_{\ell}^{(2)}V_{\ell}^\top   \frac{b_{\ell+1}^{(2)}}{\sqrt{m}} }\Big| \\
    & \qquad+ \alpha^2\Big|\frac{2}{m} \inner{ D_{\ell}^{(1)}V_{\ell}^\top   \frac{b_{\ell+1}^{(1)}}{\sqrt{m}} }{ D_{\ell}^{(2)}V_{\ell}^\top   \frac{b_{\ell+1}^{(2)}}{\sqrt{m}} } - \inner{\frac{b_{\ell+1}^{(1)}}{\sqrt{m}}}{\frac{b_{\ell+1}^{(2)}}{\sqrt{m}}}\frac{2}{m}\Tr(D_\ell^{(1)}D_\ell^{(2)})\Big| \\
    &\qquad+\alpha^2\Big|\inner{\frac{b_{\ell+1}^{(1)}}{\sqrt{m}}}{\frac{b_{\ell+1}^{(2)}}{\sqrt{m}}} - B_{\ell+1}(x^{(1)},x^{(2)}) \Big|\Big| \frac{2}{m}\Tr(D_\ell^{(1)}D_\ell^{(2)})\Big|\\
     &\qquad+ \alpha^2\Big|B_{\ell+1}(x^{(1)},x^{(2)})\Big|\Big|\frac{2}{m}\Tr(D_\ell^{(1)}D_\ell^{(2)}) - \Gamma_{\sigma'}(K_{\ell-1})(x^{(1)},x^{(2)})\Big|  \\
     &\qquad+ \alpha|Q| + \alpha|R|.
\end{align*}
In Lemma~\ref{lem:key} and Lemma~\ref{lem:W}, set $\tepsilon=cL^{\gamma-1}\tau $, $\epsilon = c\tau$, $\delta = \tdelta = \delta' = \delta_4/5L$. When $m\geq \max\{\frac{C}{\tau^2}(1+\log\frac{30L}{\delta_4}),\frac{C}{\tau^2}\log\frac{40L^2}{\delta_4}, \frac{C}{\tau^2}L^{2-2\gamma}\log\frac{80L^2}{\delta_4}, cL^{2-2\gamma}\log \frac{80L^2}{\delta_4}\}$, with probability at least $1-\frac{\delta_4}{L}$, the results of Lemma~\ref{lem:key} and Lemma~\ref{lem:W} hold.
Then for all $(x^{(1)},x^{(2)}) \in \{(x,x),(x,\tx),(\tx,\tx)\}$,
\begin{align*}
    \Big|\frac{1}{m}\inner{b_\ell^{(1)}}{b_\ell^{(2)}} - B_\ell(x^{(1)},x^{(2)})\Big| &\leq \tau + \alpha^2 c\tau + \alpha^2 c \tau + \alpha^2 2\tau + \alpha^2 e \epsilon_2 + 2 \alpha cL^{a-1}\tau \\
    &\leq \tau (1+\cO(1/L)). \quad  (\text{Set } \epsilon_2 \leq c \tau.)
\end{align*}

By taking union bound, with probability at least $1-\delta_4$, we have for all $\ell=1,2,\cdots, L$,
\[
\|\hat{\Theta}_{\ell+1}(x,\tx) - \Theta_{\ell+1}(x,\tx)\|_{\max} \leq (1+\cO(1/L))^L \epsilon_4 \leq C \epsilon_4.
\]
Meanwhile, we have for all $(x^{(1)},x^{(2)}) \in \{(x,x),(x,\tx),(\tx,\tx)\}$ and $\ell=1,\cdots,L$,
\[
\bigg| \frac{2}{m} \inner{ D_{\ell}^{(1)}V_{\ell}^\top   \frac{b_{\ell+1}^{(1)}}{\sqrt{m}} }{ D_{\ell}^{(2)}V_{\ell}^\top   \frac{b_{\ell+1}^{(2)}}{\sqrt{m}} } - B_{\ell+1}(x^{(1)},x^{(2)})\Gamma_{\sigma'}(K_{\ell-1})(x^{(1)},x^{(2)}) \bigg| \leq (2+c)\tau +e\epsilon_2 \leq C \epsilon_4.
\]

\textbf{Step 5. Summary}

Using previous results, for all $\ell$, we have
\begin{align*}
    &\quad \Big|\frac{1}{\alpha^2}\langle \dvl f, \dvl \tf \rangle -  B_{\ell+1} \Gamma_{\sigma}(K_{\ell-1})\Big| \\
    &\leq \Big|\frac{1}{m} \inner{b_{\ell+1}}{\tb_{\ell+1}} - B_{\ell+1} \Big|\cdot | \inner{\phi_{W_\ell}(x_{\ell-1})}{\phi_{W_\ell}(\tx_{\ell-1})}| + | B_{\ell+1} |\cdot|\inner{\phi_{W_\ell}(x_{\ell-1})}{\phi_{W_\ell}(\tx_{\ell-1})}- \Gamma_{\sigma}(K_{\ell-1})|\\
    & \leq C\epsilon_4 +C \epsilon_2^2,
\end{align*}
and
\begin{align*}
    &\quad \Big|\frac{1}{\alpha^2}\langle \dwl f, \dwl \tf \rangle -  K_{\ell-1}B_{\ell+1}\Gamma_{\sigma'}(K_{\ell-1})\Big|\\
    &\leq \Big|\frac{1}{m}  \inner{x_{\ell-1}}{\tx_{\ell-1}}- K_{\ell-1}\Big| \cdot \Big|\frac{2}{m} \tb_{\ell+1} ^\top V_\ell  \tD_\ell D_\ell V_\ell^\top b_{\ell+1}\Big| +|  K_{\ell-1}|\cdot\Big| \frac{2}{m} \tb_{\ell+1} ^\top V_\ell  \tD_\ell D_\ell V_\ell^\top b_{\ell+1} - B_{\ell+1}\Gamma_{\sigma'}(K_{\ell-1})\Big| \\
    & \leq C\epsilon_2^2 + C\epsilon_4.
\end{align*}
To sum up, by choosing $\epsilon_4 = c\epsilon_0$, $\epsilon_2 = c\epsilon_4$, and $\delta_1=\delta_2=\delta_3=\delta_4=\delta_0/4$, then with probability at least $1-\delta_0$, when
\begin{align*}
    m&\geq \frac{C}{\epsilon_0^4}L^{2-2\gamma}\bigg(\log\frac{320(L^2+1)}{\delta_0}+1\bigg)\\
    &\geq \max \bigg\{c\log\frac{16L}{\delta_0}, \frac{C}{\epsilon_0^4}L^{2-2\gamma}\log\frac{144(L+1)}{\delta_0}, \frac{C}{\epsilon_0^2}\log\frac{24L}{\delta_0},\\
    &\qquad \frac{C}{\epsilon_0^2}\log\frac{8L}{\delta_0}, \frac{C}{\epsilon_0^2}(1+\log\frac{120L}{\delta_0}),\frac{C}{\epsilon_0^2}\log\frac{160L^2}{\delta_0}, \frac{C}{\epsilon_0^2}L^{2-2\gamma}\log\frac{320L^2}{\delta_0},cL^{2-2\gamma}\log\frac{320L^2}{\delta_0^4} \bigg\},\\
\end{align*}
the desired results hold.

\end{proof}

\section{Proofs of the Lemmas}

\subsection{Supporting lemmas}
\begin{lem}\label{lem:projGperp}
Define $G = [\phi_{W_\ell}(x_{\ell-1}), \phi_{W_\ell}(\tx_{\ell-1})]$, and $\Pi_G^\perp$ as the orthogonal projection onto the orthogonal complement of the column space of $G$. when $m\geq 1+\log\frac{6}{\delta}$, the following holds with probability at least $1-\delta$ for all $(x^{(1)}, x^{(2)}) \in \{(x,x),(x,\tx),(\tx,\tx)\}$,
\begin{align*}
    &\bigg|\frac{2}{m} \frac{b_{\ell+1}^{(1)}}{\sqrt{m}}^\top V_\ell \Pi_G^\perp D_\ell^{(1)} D_\ell^{(2)} \Pi_G^\perp V_\ell^\top \frac{b_{\ell+1}^{(2)}}{\sqrt{m}} - \inner{\frac{b_{\ell+1}^{(1)}}{\sqrt{m}}}{\frac{b_{\ell+1}^{(1)}}{\sqrt{m}}}\frac{2}{m}\Tr(D_\ell^{(1)}D_\ell^{(2)}) \bigg| \leq (4+ 4\sqrt{2})M \sqrt{\frac{1+ \log\frac{6}{\delta}}{m}},
\end{align*}
where 
\[
M = \max\left\{ \frac{\|b_{\ell+1}\|^2}{m}, \frac{\|\tb_{\ell+1}\|^2}{m}\right \}.
\]
\end{lem}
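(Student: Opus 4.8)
The plan is to exploit the reason the projector $\Pi_G^\perp$ appears at all: conditionally on everything except $V_\ell$, the matrix $V_\ell$ influences $b_{\ell+1}^{(1)}$, $b_{\ell+1}^{(2)}$, $D_\ell^{(1)}$, $D_\ell^{(2)}$ and $G$ only through its action on the column space of $G$, so its component along $\Pi_G^\perp$ survives as a ``fresh'' Gaussian. Indeed, since $x_\ell = x_{\ell-1} + \tfrac{\alpha}{\sqrt m}V_\ell\phi_{W_\ell}(x_{\ell-1})$ and $\phi_{W_\ell}(x_{\ell-1})$, $\phi_{W_\ell}(\tx_{\ell-1})$ are the columns of $G$, both $x_\ell$ and $\tx_\ell$ — hence all quantities produced by layers $\ge \ell$, including $b_{\ell+1}$, $\tb_{\ell+1}$ and all $D_{\ell'}$, $\tD_{\ell'}$ with $\ell'\ge\ell$ — are measurable functions of $V_\ell G$ together with $\{W_{\ell'}\}$, $\{V_{\ell'}\}_{\ell'\ne\ell}$, $A$, $v$; whereas $G$, $D_\ell^{(1)}$, $D_\ell^{(2)}$ and $\Pi_G^\perp$ depend only on $W_\ell$ and the inputs to layer $\ell$, not on $V_\ell$. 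I would therefore condition on $\{W_{\ell'}\}$, $\{V_{\ell'}\}_{\ell'\ne\ell}$, $A$, $v$, and $V_\ell\Pi_G$; under this conditioning $b_{\ell+1}^{(1)}$, $b_{\ell+1}^{(2)}$, $D_\ell^{(1)}$, $D_\ell^{(2)}$, $G$, $\Pi_G^\perp$ are all fixed, while $V_\ell\Pi_G^\perp$ remains distributed as the $\Pi_G^\perp$-projection of a matrix with i.i.d.\ $\cN(0,1)$ entries and is independent of $V_\ell\Pi_G$, because $\mathrm{Cov}\big((V_\ell\Pi_G^\perp)_{ij},(V_\ell\Pi_G)_{i'j'}\big) = \delta_{ii'}(\Pi_G^\perp\Pi_G)_{jj'} = 0$.

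Next I would write $\Pi_G^\perp = UU^\top$ with $U\in\RR^{m\times r}$ having orthonormal columns and $r = m-\mathrm{rank}(G)\ge m-2$, so that $\Pi_G^\perp V_\ell^\top b_{\ell+1}^{(i)} = U\tilde V^\top b_{\ell+1}^{(i)} =: U g_i$, where $\tilde V := V_\ell U$ is (conditionally) an $m\times r$ matrix with i.i.d.\ $\cN(0,1)$ entries. Then the left-hand side of the claim becomes $\tfrac{2}{m^2}g_1^\top A g_2$ with $A := U^\top D_\ell^{(1)}D_\ell^{(2)}U \succeq 0$, where $(g_1,g_2)\in\RR^r\times\RR^r$ is centered Gaussian whose $k$-th coordinate pairs $((g_1)_k,(g_2)_k)$ are i.i.d.\ across $k$ with covariance $\mattwo{\|b_{\ell+1}^{(1)}\|^2}{\inner{b_{\ell+1}^{(1)}}{b_{\ell+1}^{(2)}}}{\inner{b_{\ell+1}^{(1)}}{b_{\ell+1}^{(2)}}}{\|b_{\ell+1}^{(2)}\|^2}$. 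A one-line computation gives the conditional mean $\EE\big[\tfrac{2}{m^2}g_1^\top A g_2\big] = \tfrac{2}{m^2}\inner{b_{\ell+1}^{(1)}}{b_{\ell+1}^{(2)}}\Tr(A)$; since $\Tr(A) = \Tr(D_\ell^{(1)}D_\ell^{(2)}\Pi_G^\perp)$ and $|\Tr(D_\ell^{(1)}D_\ell^{(2)}\Pi_G)|\le\mathrm{rank}(\Pi_G)\,\|D_\ell^{(1)}D_\ell^{(2)}\|\le2$, and $|\inner{b_{\ell+1}^{(1)}}{b_{\ell+1}^{(2)}}|\le mM$, this mean lies within $\tfrac{4M}{m}$ of the target $\inner{\tfrac{b_{\ell+1}^{(1)}}{\sqrt m}}{\tfrac{b_{\ell+1}^{(2)}}{\sqrt m}}\tfrac{2}{m}\Tr(D_\ell^{(1)}D_\ell^{(2)})$, which is at most $4M\sqrt{(1+\log(6/\delta))/m}$ once $m\ge1+\log(6/\delta)$.

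Finally I would bound the fluctuation $\tfrac{2}{m^2}\big|g_1^\top A g_2 - \EE g_1^\top A g_2\big|$. Writing $h=(g_1;g_2)\sim\cN(0,\Sigma)$ with $\|\Sigma\| \le \|b_{\ell+1}^{(1)}\|^2+\|b_{\ell+1}^{(2)}\|^2\le 2mM$, and $g_1^\top A g_2 = h^\top\mathcal A h$ with $\mathcal A = \tfrac12\mattwo{0}{A}{A^\top}{0}$ satisfying $\|\mathcal A\|\le\tfrac12$ and $\|\mathcal A\|_F \le \tfrac1{\sqrt2}\|A\|_F \le \tfrac1{\sqrt2}\sqrt{\|A\|\,\Tr(A)} \le \sqrt{m/2}$ (using $\|A\|\le1$, $\Tr(A)\le m$), a Gaussian-chaos bound yields, with probability $\ge 1-2e^{-t}$, $\big|g_1^\top A g_2 - \EE g_1^\top A g_2\big| \lesssim \sqrt2\,m^{3/2}M\sqrt t + mMt$; taking $t = 1+\log(6/\delta)$ and using $m\ge t$ to absorb the second term, multiplication by $\tfrac2{m^2}$ produces $4\sqrt2\,M\sqrt{(1+\log(6/\delta))/m}$ — the precise constant being obtained by splitting $g_2$ into its projection onto $g_1$ (a generalized $\chi^2$ term) plus an independent Gaussian rather than invoking a black-box Hanson--Wright inequality. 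Adding the mean-discrepancy bound and taking a union bound over the three pairs $(x,x)$, $(x,\tx)$, $(\tx,\tx)$ (the source of the factor $6 = 2\cdot 3$ in the logarithm) gives the stated inequality. The hard part is the first paragraph: correctly identifying the conditioning under which $V_\ell\Pi_G^\perp$ decouples into a fresh Gaussian — which is exactly what inserting $\Pi_G^\perp$ buys — after which everything reduces to a standard, if slightly tedious, Gaussian-concentration calculation.
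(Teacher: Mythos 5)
Your proposal follows essentially the same route as the paper's own proof: condition on $V_\ell$ restricted to the column space of $G$ (equivalently, on $V_\ell G$) so that $V_\ell\Pi_G^\perp$ decouples into a fresh Gaussian independent of $b_{\ell+1}^{(1)}$, $b_{\ell+1}^{(2)}$, $D_\ell^{(1)}$, $D_\ell^{(2)}$; reduce the target to a centered Gaussian chaos controlled by a Hanson--Wright bound; bound the bias $|\Tr(\Pi_G D_\ell^{(1)}D_\ell^{(2)})|\le 2$; and union-bound over the three index pairs. The only cosmetic difference is that the paper whitens the stacked vector $S\overset{d}{=}P\xi$ with $\xi\sim\cN(0,I_{2m})$ and invokes Hanson--Wright for a standard Gaussian chaos, whereas you keep the correlation structure of $(g_1,g_2)$ and apply the general-covariance form directly; you also implicitly correct a typo in the stated lemma (the inner product should be $\inner{b_{\ell+1}^{(1)}/\sqrt m}{b_{\ell+1}^{(2)}/\sqrt m}$, not $\inner{b_{\ell+1}^{(1)}/\sqrt m}{b_{\ell+1}^{(1)}/\sqrt m}$), which the paper's own proof carries forward.
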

\begin{proof}[proof of Lemma~\ref{lem:projGperp}]
We prove the lemma on any realization of $(A,W_1,V_1,\cdots,W_{\ell-1},V_{\ell-1},W_{\ell},W_{\ell+1},V_{\ell+1},\cdots,W_{L},V_{L},v)$, $V_\ell\phi_{W_\ell}(x_{\ell-1})$ and $V_\ell\phi_{W_\ell}(\tx_{\ell-1})$, and consider the remaining randomness of $V_\ell$. In this case, $D_\ell$, $\tD_\ell$, $b_{\ell+1}$ and $\tb_{\ell+1}$ are fixed.

One can show that conditioned on the realization of $V_\ell G$ (whose ``degree of freedom'' is $2m$), $V_\ell \Pi_G^\perp$ is identically distributed as $\widetilde{V}_\ell \Pi_G^\perp$, where $\widetilde{V}_\ell$ is an i.i.d. copy of $V_\ell$. The remaining $m^2-2m$ ``degree of freedom'' is enough for a good concentration. For the proof of this result, we refer the readers to {Lemma~E.3} in \cite{arora2019exact}. 

Denote $ T = \Pi_G^\perp D_\ell^{(1)} D_\ell^{(2)} \Pi_G^\perp $, 
\[
S = \left[\begin{array}{c}
\widetilde{V}_\ell^\top \frac{b_{\ell+1}^{(1)}}{\sqrt{m}} \\
\widetilde{V}_\ell^\top \frac{b_{\ell+1}^{(2)}}{\sqrt{m}}
\end{array}\right].
\]
We know that $S$ is a $2m$-dimensional Gaussian random vector, and 
\[
S\sim\cN\left(0,\mattwo{ \inner{\frac{b_{\ell+1}^{(1)}}{\sqrt{m}}}{\frac{b_{\ell+1}^{(1)}}{\sqrt{m}}} I_m}{ \inner{\frac{b_{\ell+1}^{(1)}}{\sqrt{m}}}{\frac{b_{\ell+1}^{(2)}}{\sqrt{m}}}I_m }{ \inner{\frac{b_{\ell+1}^{(2)}}{\sqrt{m}}}{\frac{b_{\ell+1}^{(1)}}{\sqrt{m}}} I_m}{ \inner{\frac{b_{\ell+1}^{(2)}}{\sqrt{m}}}{\frac{b_{\ell+1}^{(2)}}{\sqrt{m}}} I_m}\right).
\]
Then there exists a matrix $P \in \RR^{2m\times 2m}$, such that 
\[
PP^\top = \mattwo{ \inner{\frac{b_{\ell+1}^{(1)}}{\sqrt{m}}}{\frac{b_{\ell+1}^{(1)}}{\sqrt{m}}} I_m}{ \inner{\frac{b_{\ell+1}^{(1)}}{\sqrt{m}}}{\frac{b_{\ell+1}^{(2)}}{\sqrt{m}}}I_m }{ \inner{\frac{b_{\ell+1}^{(2)}}{\sqrt{m}}}{\frac{b_{\ell+1}^{(1)}}{\sqrt{m}}} I_m}{ \inner{\frac{b_{\ell+1}^{(2)}}{\sqrt{m}}}{\frac{b_{\ell+1}^{(2)}}{\sqrt{m}}} I_m},
\]
and $S \overset{d}{=} P\xi$, $\xi \sim \cN(0,I_{2m})$.

Thus
\begin{align*}
& \quad \frac{b_{\ell+1}^{(1)}}{\sqrt{m}}^\top \widetilde{V}_\ell \Pi_G^\perp D_\ell^{(1)} D_\ell^{(2)} \Pi_G^\perp \widetilde{V}_\ell^\top \frac{b_{\ell+1}^{(2)}}{\sqrt{m}} \overset{d}{=} \xi^\top P^\top \left[\begin{array}{c}I_m \\ 0\end{array}\right]^\top T \left[\begin{array}{c}0 \\ I_m \end{array}\right] P \xi  = \frac{1}{2} \xi^\top P^\top \left[\begin{array}{cc}0 & T \\T& 0\end{array}\right] P \xi. \\
\end{align*}

We have
\begin{align*}
    \left\| \frac{1}{2} P^\top \left[\begin{array}{cc}0 & T \\T& 0\end{array}\right] P \right\|   &\leq \frac{1}{2} \left\|P^\top\right\| \cdot \left\|P\right\|  \cdot \left\| \left[\begin{array}{cc}0 & T \\T& 0\end{array}\right] \right\| \\
    & = \frac{1}{2} \left\|PP^\top\right\| \cdot \left\| T  \right\| \\
    & \leq \frac{1}{2} \left\| \mattwo{ \inner{\frac{b_{\ell+1}^{(1)}}{\sqrt{m}}}{\frac{b_{\ell+1}^{(1)}}{\sqrt{m}}} I_m}{ \inner{\frac{b_{\ell+1}^{(1)}}{\sqrt{m}}}{\frac{b_{\ell+1}^{(2)}}{\sqrt{m}}}I_m }{ \inner{\frac{b_{\ell+1}^{(2)}}{\sqrt{m}}}{\frac{b_{\ell+1}^{(1)}}{\sqrt{m}}} I_m}{ \inner{\frac{b_{\ell+1}^{(2)}}{\sqrt{m}}}{\frac{b_{\ell+1}^{(2)}}{\sqrt{m}}} I_m}\right\| \left\| \Pi_G^\perp \right\|\left\|D_\ell^{(1)} \right\|\left\|D_\ell^{(2)} \right\|\left\|\Pi_G^\perp \right\|\\
    & \leq \frac{ \inner{\frac{b_{\ell+1}^{(1)}}{\sqrt{m}}}{\frac{b_{\ell+1}^{(1)}}{\sqrt{m}}}+\inner{\frac{b_{\ell+1}^{(2)}}{\sqrt{m}}}{\frac{b_{\ell+1}^{(2)}}{\sqrt{m}}} }{2}  \leq M.
\end{align*}
And $\left\| \frac{1}{2} P^\top \left[\begin{array}{cc}0 & T \\T& 0\end{array}\right] P \right\|_F \leq \sqrt{2m}M$. 

Then by the Hanson-Wright Inequality for Gaussian chaos \citep{boucheron2013concentration}, we have with probability at least $1-\delta/3$,
\begin{align*}
    &\quad \frac{2}{m} \left | \frac{b_{\ell+1}^{(1)}}{\sqrt{m}}^\top \widetilde{V}_\ell \Pi_G^\perp D_\ell^{(1)} D_\ell^{(2)} \Pi_G^\perp \widetilde{V}_\ell^\top \frac{b_{\ell+1}^{(2)}}{\sqrt{m}} - \EE_{\widetilde{V}_{\ell}}\left[ \frac{b_{\ell+1}^{(1)}}{\sqrt{m}}^\top \widetilde{V}_\ell \Pi_G^\perp D_\ell^{(1)} D_\ell^{(2)} \Pi_G^\perp \widetilde{V}_\ell^\top \frac{b_{\ell+1}^{(2)}}{\sqrt{m}} \right] \right |  \\
    &\leq \frac{4}{m} \left( \sqrt{2m}M \sqrt{\log\frac{6}{\delta}} +  M\log\frac{6}{\delta} \right) ,
\end{align*}
Furthermore, we have
\[
    \EE_{\widetilde{V}_{\ell}}\left[ \frac{b_{\ell+1}^{(1)}}{\sqrt{m}}^\top \widetilde{V}_\ell \Pi_G^\perp D_\ell^{(1)} D_\ell^{(2)} \Pi_G^\perp \widetilde{V}_\ell^\top \frac{b_{\ell+1}^{(2)}}{\sqrt{m}} \right] = \inner{\frac{b_{\ell+1}^{(1)}}{\sqrt{m}}}{\frac{b_{\ell+1}^{(1)}}{\sqrt{m}}}\Tr(\Pi_G^\perp D_\ell^{(1)}D_\ell^{(2)}).
\] 
Thus 
\begin{align*}
    & \quad   \bigg| \frac{2}{m}\EE_{\widetilde{V}_{\ell}}\left[ \frac{b_{\ell+1}^{(1)}}{\sqrt{m}}^\top \widetilde{V}_\ell \Pi_G^\perp D_\ell^{(1)} D_\ell^{(2)} \Pi_G^\perp \widetilde{V}_\ell^\top \frac{b_{\ell+1}^{(2)}}{\sqrt{m}} \right] - \inner{\frac{b_{\ell+1}^{(1)}}{\sqrt{m}}}{\frac{b_{\ell+1}^{(1)}}{\sqrt{m}}}\frac{2}{m}\Tr(D_\ell^{(1)}D_\ell^{(2)}) \bigg| \\
    & = \frac{2}{m} \bigg| \inner{\frac{b_{\ell+1}^{(1)}}{\sqrt{m}}}{\frac{b_{\ell+1}^{(1)}}{\sqrt{m}}}\Tr(\Pi_G D_\ell^{(1)}D_\ell^{(2)}) \bigg| \\
    & \leq \frac{2}{m} M \Tr(\Pi_G D_\ell^{(1)}D_\ell^{(2)} \Pi_G) \\
    & \leq \frac{4}{m} M.
\end{align*}
By taking union bound, we have with probability at least $1-\delta$, for all $(x^{(1)}, x^{(2)}) \in \{(x,x),(x,\tx),(\tx,\tx)\}$,
\begin{align*}
    &\quad \bigg|\frac{2}{m} \frac{b_{\ell+1}^{(1)}}{\sqrt{m}}^\top V_\ell \Pi_G^\perp D_\ell^{(1)} D_\ell^{(2)} \Pi_G^\perp V_\ell^\top \frac{b_{\ell+1}^{(2)}}{\sqrt{m}} - \inner{\frac{b_{\ell+1}^{(1)}}{\sqrt{m}}}{\frac{b_{\ell+1}^{(1)}}{\sqrt{m}}}\frac{2}{m}\Tr(D_\ell^{(1)}D_\ell^{(2)}) \bigg| \\
    & \leq  \frac{4}{m} \left( \sqrt{2m}M \sqrt{\log\frac{6}{\delta}} +  M\log\frac{6}{\delta} \right) + \frac{4}{m} M\\
    &\leq (4+ 4\sqrt{2})M \sqrt{\frac{1+ \log\frac{6}{\delta}}{m}} ,
\end{align*}
where the last inequality holds when $m\geq 1+ \log\frac{6}{\delta}$.
\end{proof}

\begin{lem}[Norm controls of $b_{\ell+1}$]\label{lem:nc_b}
Assume the following inequalities hold simultaneously for all $\ell=1,2,\cdots,L$
\[
\Big\|\frac{1}{\sqrt{m}}W_\ell \Big\| \leq C,\quad \Big\|\frac{1}{\sqrt{m}}V_\ell \Big\| \leq C.
\]
Then for any fixed input $x$, $1\leq \ell \leq L$ and $u \in \RR^{m}$, when
\[
    m \geq cL^{2-2\gamma} \log \frac{2L}{\delta'},
\]
with probability at least $1-\delta'$ over the randomness of $W_{\ell+1},V_{\ell+1},\cdots,W_L,V_L, v$, we have
\[
    |\inner{u}{b_{\ell+1}}| \leq C' \|u\| \sqrt{\log \frac{2L}{\delta'}}.
\]
\end{lem}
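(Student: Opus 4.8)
The plan is to unfold the backward recursion for $b_{\ell+1}$ and reduce the claim to a single Gaussian tail bound. Setting $A_j := \sqrt{\tfrac1m}\sqrt{\tfrac2m}\,W_j^\top D_j V_j^\top$, the recursion $b_j = (I_m + \alpha A_j)b_{j+1}$ with $b_{L+1} = v$ gives $b_{\ell+1} = (I_m + \alpha A_{\ell+1})\cdots(I_m+\alpha A_L)v$, and hence $\inner{u}{b_{\ell+1}} = \inner{\mu_L}{v}$ with $\mu_\ell := u$ and $\mu_j := (I_m + \alpha A_j^\top)\mu_{j-1}$ for $j = \ell+1,\dots,L$. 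Since the input $x$ (hence $x_0,\dots,x_\ell$) is fixed, I would condition on $A, W_1, V_1, \dots, W_\ell, V_\ell$ and then reveal $W_{\ell+1}, V_{\ell+1}, \dots, W_L, V_L$ layer by layer: $A_j$ and $D_j = D(W_j x_{j-1})$ become deterministic after revealing layer $j$, while $\mu_{j-1}$ is already deterministic after layer $j-1$, and $v$ is independent of everything. The assumed spectral bounds give $\|A_j\| \le \sqrt{\tfrac1m}\sqrt{\tfrac2m}\|W_j\|\,\|D_j\|\,\|V_j\| \le \sqrt2\,C^2$ for every $j$.

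The core step is to show $\|\mu_L\| \le C''\|u\|$ for an absolute constant $C''$. Expanding the recursion, $\|\mu_j\|^2 = \|\mu_{j-1}\|^2 + 2\alpha Y_j + \alpha^2\|A_j^\top\mu_{j-1}\|^2$ with $Y_j := \inner{\mu_{j-1}}{A_j^\top\mu_{j-1}}$; the quadratic term is at most $2C^4\alpha^2\|\mu_{j-1}\|^2$ by the spectral bound. For the cross term, conditioning additionally on $W_j$ freezes $g_j := D_j W_j\mu_{j-1}$, and $A_j^\top\mu_{j-1} = \sqrt{\tfrac1m}\sqrt{\tfrac2m}\,V_j g_j$ is linear in the fresh Gaussian $V_j$, so $Y_j$ is a centered Gaussian with variance $\tfrac{2}{m^2}\|\mu_{j-1}\|^2\|g_j\|^2 \le \tfrac{2C^2}{m}\|\mu_{j-1}\|^4$, using $\|g_j\| \le \|W_j\|\,\|\mu_{j-1}\| \le C\sqrt m\,\|\mu_{j-1}\|$. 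A Gaussian tail bound then gives $|Y_j| \le 2C\|\mu_{j-1}\|^2\sqrt{\log(4L/\delta')/m}$ with probability at least $1 - \delta'/(2L)$. On the intersection of these $L-\ell$ events,
\[
\|\mu_j\|^2 \;\le\; \Big(1 + 4\alpha C\sqrt{\tfrac{\log(4L/\delta')}{m}} + 2C^4\alpha^2\Big)\|\mu_{j-1}\|^2 ,
\]
and iterating from $\mu_\ell = u$ over the at most $L$ layers — using $\alpha = L^{-\gamma}$ with $\gamma \ge \tfrac12$, so that $L\alpha^2 = L^{1-2\gamma} \le 1$, together with $m \ge cL^{2-2\gamma}\log(2L/\delta')$, so that $4CL\alpha\sqrt{\log(4L/\delta')/m} = 4CL^{1-\gamma}\sqrt{\log(4L/\delta')/m} \le 1$ for $c$ large enough — yields $\|\mu_L\|^2 \le e^{1+2C^4}\|u\|^2 =: (C'')^2\|u\|^2$.

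It remains to bound $\inner{\mu_L}{v}$. Conditionally on $W_{\ell+1}, V_{\ell+1}, \dots, W_L, V_L$ (and the fixed earlier layers), $\mu_L$ is deterministic and $v \sim \cN(0, I_m)$ is independent of it, so $\inner{u}{b_{\ell+1}} = \inner{\mu_L}{v} \sim \cN(0, \|\mu_L\|^2)$, and with probability at least $1 - \delta'/2$ it is at most $\|\mu_L\|\sqrt{2\log(4/\delta')} \le C''\sqrt2\,\|u\|\sqrt{\log(2L/\delta')}$ (when $L = 1$ there are no $A_j$'s, $\mu_L = u$, and this holds directly). A union bound over the at most $L$ events for the $Y_j$'s and this last event gives the claim with total failure probability at most $\delta'$ and $C' := C''\sqrt2$.

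The hard part is the norm-growth estimate $\|\mu_L\| \le C''\|u\|$. The crude deterministic bound $\|\mu_j\| \le (1 + \sqrt2\,C^2\alpha)\|\mu_{j-1}\|$ would only give $\|\mu_L\| \le e^{\sqrt2\,C^2 L^{1-\gamma}}\|u\|$, which blows up when $\gamma < 1$. What rescues the argument is that the per-layer perturbation $\alpha A_j^\top\mu_{j-1}$ is, conditionally, a fresh isotropic Gaussian vector and thus nearly orthogonal to $\mu_{j-1}$; this turns the worst-case per-step amplification $1 + \cO(\alpha)$ of $\|\mu_j\|^2$ into $1 + \cO(\alpha/\sqrt m + \alpha^2)$, which compounds to $\cO(1)$ over $L$ layers precisely under the stated width condition and $\gamma \ge \tfrac12$. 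The other thing to handle carefully is the conditioning bookkeeping — that $\mu_{j-1}$ is frozen before layer $j$ is revealed, and that $v$ stays independent of all the $A_j$.
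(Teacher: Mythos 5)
Your proof is correct and follows essentially the same route as the paper: both unfold the backward recursion into $\inner{u}{b_{\ell+1}} = \inner{\mu_L}{v}$ (the paper calls $\mu_j$ by $u_j$), control $\|\mu_j\|^2$ per layer by combining a deterministic spectral bound on the quadratic term with a conditional Gaussian tail bound on the cross term (fresh $V_j$, frozen $W_j$ and $\mu_{j-1}$), and finish with a Gaussian tail bound on $\inner{\mu_L}{v}$ plus a union bound. The bookkeeping of the width condition, the use of $\gamma \ge 1/2$ to make the $L$-fold product of $(1+\cO(1/L))$ factors stay $\cO(1)$, and the conditioning order all match the paper's argument.
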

\begin{proof}[proof of Lemma~\ref{lem:nc_b}]
Denote $u_\ell = u$, and 
\[
    u_{i+1} = \alpha \sqrt{\frac{1}{m}}\sqrt{\frac{2}{m}} V_{i+1}D_{i+1}W_{i+1} u_i + u_i, \quad i=\ell, \ell+1,\cdots, L-1.
\]
One can show that $\inner{u}{b_{\ell+1}} = \inner{v}{u_L}$. Next we show that $\|u_{i+1}\| = (1+\cO(\frac{1}{L}))\|u_i\|$ with high probability. First write
\begin{align*}
    \|u_{i+1}\|^2 &= \|u_i\|^2 + \alpha^2 \bigg \| \sqrt{\frac{1}{m}}\sqrt{\frac{2}{m}} V_{i+1}D_{i+1}W_{i+1} u_i \bigg \|^2+ 2\alpha \left\langle{u_i},{ \sqrt{\frac{1}{m}}\sqrt{\frac{2}{m}} V_{i+1}D_{i+1}W_{i+1} u_i }  \right\rangle.
\end{align*}
By the assumption we have 
\begin{align*}
& \bigg \| \sqrt{\frac{2}{m}}D_{i+1}W_{i+1} u_i \bigg\| \leq \sqrt{2}C \|u_i\|,   \\
& \bigg \| \sqrt{\frac{1}{m}}\sqrt{\frac{2}{m}} V_{i+1}D_{i+1}W_{i+1} u_i \bigg \| \leq  \sqrt{2}C^2 \|u_i\|.   
\end{align*}
With probability at least $1-\delta'/L$ over the randomness of $V_{i+1}$, we have
\[
    \left\|\inner{u_i}{ \sqrt{\frac{1}{m}}\sqrt{\frac{2}{m}} V_{i+1}D_{i+1}W_{i+1} u_i }\right\| \leq  \| u_i\|\cdot \bigg \| \sqrt{\frac{2}{m}}D_{i+1}W_{i+1} u_i \bigg\| \sqrt{\frac{c\log\frac{2L}{\delta'}}{m}}.
\]
Then when
\[
    m \geq cL^{2-2\gamma} \log \frac{2L}{\delta'},
\]
we have 
\begin{align*}
    \|u_{i+1}\|^2 &= \|u_i\|^2 + \alpha^2 \bigg \| \sqrt{\frac{1}{m}}\sqrt{\frac{2}{m}} V_{i+1}D_{i+1}W_{i+1} u_i \bigg \|^2+ 2\alpha \inner{u_i}{ \sqrt{\frac{1}{m}}\sqrt{\frac{2}{m}} V_{i+1}D_{i+1}W_{i+1} u_i } \\
    & \leq (1+2C^4/L)\|u_i\|^2 + 2\alpha \sqrt{2}C\| u_i\|^2 \sqrt{\frac{c\log\frac{2L}{\delta'}}{m}}\\
    & \leq (1 +2C^4/L + 2\sqrt{2}C /L)\|u_i\|^2 = (1+ \cO(1/L))\|u_i\|^2.
\end{align*}
Then with probability at least $1-\delta'(L-1)/L$ we have $\|u_L\| \leq C\|u\|$. Finally the result holds from the standard concentration bound for Gaussian random variables \citep{mohri2018foundations}.
\end{proof}

\subsection{Proofs of Lemma~\ref{lem:key}}
\begin{proof}[proof of Lemma~\ref{lem:key}]
By the assumption, we have
\[
\frac{1}{m}\|b_{\ell+1}\|^2 \leq B_{\ell+1}(x,x) + 1 \leq 4.
\]
Similarly, $\frac{1}{m}\|\tb_{\ell+1}\|^2 \leq 4$. Then by Lemma~\ref{lem:projGperp}, when $m\geq \frac{C}{\epsilon^2}(1+\log\frac{6}{\delta})$, we have for all $(x^{(1)}, x^{(2)}) \in \{(x,x),(x,\tx),(\tx,\tx)\}$,
\[
\bigg|\frac{2}{m} \frac{b_{\ell+1}^{(1)}}{\sqrt{m}}^\top V_\ell \Pi_G^\perp D_\ell^{(1)} D_\ell^{(2)} \Pi_G^\perp V_\ell^\top \frac{b_{\ell+1}^{(2)}}{\sqrt{m}} - \inner{\frac{b_{\ell+1}^{(1)}}{\sqrt{m}}}{\frac{b_{\ell+1}^{(1)}}{\sqrt{m}}}\frac{2}{m}\Tr(D_\ell^{(1)}D_\ell^{(2)}) \bigg| \leq c\epsilon.
\]

Specifically, we have
\[
\bigg \|\sqrt{\frac{2}{m}} \frac{b_{\ell+1}}{\sqrt{m}}^\top V_\ell \Pi_G^\perp D_\ell \bigg\| \leq \sqrt{c\epsilon + \frac{2}{m}\Tr(D_\ell)\frac{1}{m}\|b_{\ell+1}\|^2 } \leq \cO(1),
\]
and similarly
\[
\bigg \|\sqrt{\frac{2}{m}} \frac{\tb_{\ell+1}}{\sqrt{m}}^\top V_\ell \Pi_G^\perp \tD_\ell \bigg\|  \leq \cO(1).
\]
Next we bound 
\[
\bigg \|\frac{b_{\ell+1}}{\sqrt{m}}^\top  V_\ell \Pi_{G}  \bigg\| .
\]
Notice that $\Pi_G$ is a orthogonal projection onto the column space of $G$, which is at most 2-dimension. One can write $\Pi_{G} = u_1u_1^\top +u_2u_2^\top$, where $\|u_i\| = 1$ or $0$. By Lemma~\ref{lem:nc_b}, fixing $u_1, u_2$ and $V_\ell$, w.p greater than $1-\delta'$ over the randomness of $W_{\ell+1},V_{\ell+1},\cdots,W_L,V_L, v$, we have
\[
\bigg|b_{\ell+1}^\top \frac{1}{\sqrt{m}}V_\ell u_i \bigg| \leq C''\sqrt{\log \frac{8L}{\delta'}},
\]
and
\[
\bigg|\tb_{\ell+1}^\top \frac{1}{\sqrt{m}}V_\ell u_i \bigg| \leq C''\sqrt{\log \frac{8L}{\delta'}},
\]
for both $i=1,2$  when
\[
    m \geq cL^{2-2\gamma} \log \frac{8L}{\delta'}.
\]

Therefore
\[
\bigg \|\frac{b_{\ell+1}}{\sqrt{m}}^\top  V_\ell \Pi_{G}  \bigg\|, \bigg \|\frac{\tb_{\ell+1}}{\sqrt{m}}^\top  V_\ell \Pi_{G}  \bigg\| \leq \cO\bigg(\sqrt{\log \frac{8L}{\delta'}}\bigg).
\]
Finally, using $I_m = \Pi_G + \Pi_G^\perp$, we have
\begin{align*}
    &\quad \bigg|\frac{2}{m} \frac{b_{\ell+1}^{(1)}}{\sqrt{m}}^\top V_\ell  D_\ell^{(1)} D_\ell^{(2)} V_\ell^\top \frac{b_{\ell+1}^{(2)}}{\sqrt{m}} - \inner{\frac{b_{\ell+1}^{(1)}}{\sqrt{m}}}{\frac{b_{\ell+1}^{(2)}}{\sqrt{m}}}\frac{2}{m}\Tr(D_\ell^{(1)}D_\ell^{(2)}) \bigg| \\
    &\leq \bigg|\frac{2}{m} \frac{b_{\ell+1}^{(1)}}{\sqrt{m}}^\top V_\ell \Pi_G^\perp D_\ell^{(1)} D_\ell^{(2)} \Pi_G^\perp V_\ell^\top \frac{b_{\ell+1}^{(2)}}{\sqrt{m}} - \inner{\frac{b_{\ell+1}^{(1)}}{\sqrt{m}}}{\frac{b_{\ell+1}^{(1)}}{\sqrt{m}}}\frac{2}{m}\Tr(D_\ell^{(1)}D_\ell^{(2)}) \bigg| \\
    &\qquad+ \sqrt{\frac{2}{m}} \bigg| \frac{b_{\ell+1}^{(1)}}{\sqrt{m}}^\top V_\ell \Pi_G D_\ell^{(1)} D_\ell^{(2)} \Pi_G^\perp V_\ell^\top \frac{b_{\ell+1}^{(2)}}{\sqrt{m}} \sqrt{\frac{2}{m}}\bigg| \\
    &\qquad+\sqrt{\frac{2}{m}} \bigg|\sqrt{\frac{2}{m}} \frac{b_{\ell+1}^{(1)}}{\sqrt{m}}^\top V_\ell \Pi_G^\perp D_\ell^{(1)} D_\ell^{(2)} \Pi_G V_\ell^\top \frac{b_{\ell+1}^{(2)}}{\sqrt{m}} \bigg| \\
    &\qquad + \frac{2}{m} \bigg| \frac{b_{\ell+1}^{(1)}}{\sqrt{m}}^\top V_\ell \Pi_G D_\ell^{(1)} D_\ell^{(2)} \Pi_G V_\ell^\top \frac{b_{\ell+1}^{(2)}}{\sqrt{m}} \bigg| \\
    &\leq c\epsilon + \sqrt{\frac{2}{m}}\cO\bigg(\sqrt{\log \frac{8L}{\delta'}}\bigg) + \frac{2}{m}\cO\bigg({\log \frac{8L}{\delta'}}\bigg) \leq \epsilon.
\end{align*}
The last inequality holds when $m\geq \frac{C}{\epsilon^2}\log\frac{8L}{\delta'}$.
\end{proof}

\subsection{Proof of Lemma~\ref{lem:W}}
\begin{proof}[proof of Lemma~\ref{lem:W}]
The first part of the proof is essentially the same as Lemma~\ref{lem:key}.
Define
\[
    d_{\ell+1} = D_{\ell}\frac{1}{\sqrt{m}}V_{\ell}^\top \frac{b_{\ell+1}}{\sqrt{m}}, \quad \td_{\ell+1} = \tD_{\ell}\frac{1}{\sqrt{m}}V_{\ell}^\top \frac{\tb_{\ell+1}}{\sqrt{m}}.
\]
We know that $d_{\ell+1}$ and $\td_{\ell+1}$ depend on $W_\ell$ only through $W_\ell x_{\ell-1}$ and $W_\ell \tx_{\ell-1}$. Let $H=[ x_{\ell-1},\tx_{\ell-1}]$. 
Then
\begin{align*}
&\quad\Big|\frac{2}{m} \inner{ W_{\ell}^\top d_{\ell+1}^{(1)} }{ W_{\ell}^\top d_{\ell+1}^{(2)}} - 2 \inner{d_{\ell+1}^{(1)}}{ d_{\ell+1}^{(2)}} \Big| \\
&\leq \Big|\frac{2}{m} \inner{ \Pi_H^\perp W_{\ell}^\top d_{\ell+1}^{(1)} }{\Pi_H^\perp W_{\ell}^\top d_{\ell+1}^{(2)}} - 2 \inner{d_{\ell+1}^{(1)}}{ d_{\ell+1}^{(2)}} \Big| +\Big|\frac{2}{m} \inner{ \Pi_H W_{\ell}^\top d_{\ell+1}^{(1)} }{\Pi_H^\perp W_{\ell}^\top d_{\ell+1}^{(2)}}\Big|\\
&\qquad+\Big|\frac{2}{m} \inner{ \Pi_H^\perp W_{\ell}^\top d_{\ell+1}^{(1)} }{\Pi_H W_{\ell}^\top d_{\ell+1}^{(2)}}\Big|+\Big|\frac{2}{m} \inner{ \Pi_H W_{\ell}^\top d_{\ell+1}^{(1)} }{\Pi_H W_{\ell}^\top d_{\ell+1}^{(2)}}\Big|.
\end{align*}
Since $\|d_{\ell+1}\|,\|\td_{\ell+1}\| = \cO(1)$, similar to Lemma~\ref{lem:projGperp}, when $m \geq 1+\log\frac{6}{\delta}$, w.p at least $1-\delta$ we have
\[
\Big|\frac{2}{m} \inner{ \Pi_H^\perp W_{\ell}^\top d_{\ell+1}^{(1)} }{\Pi_H^\perp W_{\ell}^\top d_{\ell+1}^{(2)}} - 2 \inner{d_{\ell+1}^{(1)}}{ d_{\ell+1}^{(2)}} \Big| \leq \cO\bigg(\sqrt{\frac{1+\log\frac{6}{\delta}}{m}}\bigg),
\]
and
\[
\Big\|\sqrt{\frac{2}{m}} \Pi_H^\perp W_{\ell}^\top d_{\ell+1}^{(i)} \Big\| = \cO(1),\quad i=1,2,
\]
Using the same argument as in the proof of Lemma~\ref{lem:key}, we decompose $\Pi_H$ into two vectors $w_1$ and $w_2$, whose randomness comes from $W_1,V_1,\cdots,W_{\ell-1},V_{\ell-1}$. By writing
\[
    w_i^\top W_{\ell}^\top d_{\ell+1}^{(i)} = \inner{b_{\ell+1}^{(i)}}{\frac{1}{\sqrt{m}}V_\ell D_\ell^{(i)}\frac{1}{\sqrt{m}}W_\ell w_i},
\]
we can also apply Lemma~\ref{lem:nc_b}. Then we conclude that w.p. greater than $1-\delta'$ over the randomness of $v$, we have
\[
\| \Pi_H W_{\ell}^\top d_{\ell+1} \|,\| \Pi_H W_{\ell}^\top \td_{\ell+1} \| = \cO\bigg(\sqrt{\log \frac{8L}{\delta'}}\bigg),
\]
when
\[
    m \geq cL^{2-2\gamma} \log \frac{8L}{\delta'}.
\]
Then exactly the same result of Lemma~\ref{lem:key} holds.

For the second part, notice that
\begin{align*}
   \frac{1}{m}\sqrt{\frac{1}{m}}\sqrt{\frac{2}{m}} \inner{W_{\ell}^\top D_{\ell}^{(1)}V_{\ell}^\top   b_{\ell+1}^{(1)}}{b_{\ell+1}^{(2)}}&= \sqrt{\frac{2}{m}} \inner{W_{\ell}^\top D_{\ell}^{(1)}\sqrt{\frac{1}{m}}V_{\ell}^\top   \frac{b_{\ell+1}^{(1)}}{\sqrt{m}}}{\frac{b_{\ell+1}^{(2)}}{\sqrt{m}}}\\
    & = \sqrt{\frac{2}{m}}  \inner{W_\ell^\top d_{\ell+1}^{(1)}}{\frac{b_{\ell+1}^{(2)}}{\sqrt{m}}} \\
    & = \sqrt{\frac{2}{m}}  \inner{\Pi_H^\perp W_\ell^\top d_{\ell+1}^{(1)}}{\frac{b_{\ell+1}^{(2)}}{\sqrt{m}}} + \sqrt{\frac{2}{m}}  \inner{\Pi_H \frac{1}{\sqrt{m}}W_\ell^\top d_{\ell+1}^{(1)}}{{b_{\ell+1}^{(2)}}}.
\end{align*}
Conditioned on $x_{\ell-1}$, $\tx_{\ell-1}$, $W_\ell x_{\ell-1}$, and $W_\ell\tx_{\ell-1}$, $W_\ell$ is independent of $b_{\ell+1}, \tb_{\ell+1}, d_{\ell+1}$, and $\td_{\ell+1}$. Furthermore, we have $\Pi_H^\perp W_\ell^\top =_d \Pi_H^\perp \widehat{W}_\ell^\top$, where $\widehat{W}_\ell$ is an i.i.d. copy of $W_\ell$. Then for the first term, with probability at least $1-\tdelta/2$, we have for all $(x^{(1)},x^{(2)}) \in \{(x,x),(x,\tx),(\tx,x),(\tx,\tx)\}$,
\[
\bigg|\sqrt{\frac{2}{m}}  \inner{\Pi_H^\perp W_\ell^\top d_{\ell+1}^{(1)}}{\frac{b_{\ell+1}^{(2)}}{\sqrt{m}}}\bigg| \leq \left\|\Pi_H^\perp\frac{b_{\ell+1}^{(2)}}{\sqrt{m}}\right\|\| d_{\ell+1}^{(1)}\|\sqrt{\frac{2c\log\frac{16}{\tdelta}}{m}} \leq \cO\bigg(\sqrt{\frac{\log\frac{16}{\tdelta}}{m}}\bigg).
\]

For the second term, write $\Pi_H = w_1w_1^\top +w_2w_2^\top $, where $\|w_i\|=1$ or $0$. Then by Lemma~\ref{lem:nc_b}, with probability at least $1-{\tdelta}/2$, for all $(x^{(1)},x^{(2)}) \in \{(x,x),(x,\tx),(\tx,x),(\tx,\tx)\}$, when $m \geq cL^{2-2\gamma} \log \frac{16L}{\tdelta}$, we have
\begin{align*}
\bigg|\sqrt{\frac{2}{m}}  \inner{w_iw_i^\top \frac{1}{\sqrt{m}}W_\ell^\top d_{\ell+1}^{(1)}}{{b_{\ell+1}^{(2)}}}    \bigg| &= \bigg|\sqrt{\frac{2}{m}}  w_i^\top \frac{1}{\sqrt{m}}W_\ell^\top d_{\ell+1}^{(1)} \inner{w_i}{{b_{\ell+1}^{(2)}}}    \bigg| \\
&\leq \sqrt{\frac{2}{m}} \|w_i\| \Big\|\frac{1}{\sqrt{m}}W_\ell^\top\Big\| \|d_{\ell+1}^{(1)}\| \Big|\inner{w_i}{{b_{\ell+1}^{(2)}}} \Big|\\
&\leq \cO\left(\sqrt{\frac{\log \frac{16L}{\tdelta}}{m}} \right).
\end{align*}

\end{proof}


\section{Proof of Theorem \ref{thm:ffntkinf}}
\label{proof:limitingffntk}

\begin{proof}
For $x, \tx\in \SSS^{D-1}$, we have $K_\ell (x,x) = K_\ell (\tx,\tx) = 1$ for all $\ell$. Hence we only need to study when $x\ne\tx$. Note we have
\begin{align*}
    K_\ell (x,\tx)=\Gamma_\sigma (K_{\ell -1})(x,\tx) = \hat{\sigma}(K_{\ell -1}(x,\tx)),\  \mathrm{and} \ \Gamma_{\sigma'} (K_\ell )(x,\tx) = \widehat{\sigma'}(K_\ell (x,\tx)).
\end{align*}
For simplicity, we use $K_\ell $ to denote $K_\ell (x,\tx)$, where $x \ne \tx $ and $x, \tx \in \SSS^{D-1}$.

Recall that
\begin{align*}
    \hat{\sigma}(\rho) = \frac{\sqrt{1-\rho^{2}}+\left(\pi-\cos ^{-1}(\rho)\right) \rho}{\pi},\ \mathrm{and}\ \widehat{\sigma'}(\rho) = \frac{\pi-\cos ^{-1}(\rho)}{\pi}.
\end{align*}
Hence we have $\hat{\sigma}(1)=1$, $K_{\ell -1}\leq\hat{\sigma} (K_{\ell -1})=K_\ell $,  ${(\hat{\sigma})}'(\rho)=\widehat{\sigma'}(\rho)\in [0,1]$, and ${(\widehat{\sigma'})}'(\rho)\geq0$. Then $\hat{\sigma}$ is a convex function.

Since $\{K_\ell \}$ is an increasing sequence and $|K_\ell |\leq1$, we have $K_\ell $ converges as $\ell\to \infty$. Taking the limit of both sides of $\hat{\sigma} (K_{\ell -1})=K_\ell $, we have $K_\ell \to1$ as $\ell\to\infty$.\\
For $K_\ell $, we also have
\begin{align*}
K_\ell =\hat{\sigma} (K_{\ell -1})& =\frac{\sqrt{1-K_{\ell -1}^2}+(\pi-\cos^{-1}(K_{\ell -1}))K_{\ell -1}}{\pi} =K_{\ell -1}+\frac{\sqrt{1-K_{\ell -1}^2}-\cos^{-1}(K_{\ell -1})K_{\ell -1}}{\pi}.
\end{align*}

Let $e_\ell =1-K_\ell $, we can easily check that 
\begin{align}
\label{xl}
e_{\ell -1}-\frac{e_{\ell -1}^{3/2}}{\pi}\leq e_\ell  \leq e_{\ell -1}-\frac{2\sqrt{2}e_{\ell -1}^{3/2}}{3\pi}.
\end{align}
Hence as $e_\ell  \to 0$, we have $\frac{e_\ell }{e_{\ell -1}} \to 1$, which implies $\{K_\ell \}$ converges sublinearly.

Assume $e_\ell =\frac{C}{\ell ^p}+\cO(\ell^{-(p+1)})$. By taking the assumption into (\ref{xl}) and comparing the highest order of both sides, we have $p=2$. 

Thus $\exists C$, s.t. $|1-K_\ell |\leq \frac{C}{\ell ^2}$, i.e. the convergence rate of $K_\ell $ is $\mathcal{O}\left(\frac{1}{\ell ^2}\right)$.

\begin{lem}
For each $K_0<1$, there exists $p>0$ and $n_0=n_0(\delta)>0$, such that $K_n\leq 1-\frac{9\pi^2}{2 (n+n_0)^{2+\frac{\log(L)^p}{L}}}$, $\forall n=0,\dots,L$, when $L$ is large.
\end{lem}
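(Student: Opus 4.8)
The plan is to study the scalar recursion for the error $e_n:=1-K_n$, which by the excerpt satisfies $e_n=e_{n-1}-h(e_{n-1})$ with $h(a):=\hat{\sigma}(1-a)-(1-a)$. Since $\hat{\sigma}$ is increasing with $\hat{\sigma}(t)<1$ for $t<1$, the sequence $(e_n)$ is positive and strictly decreasing to $0$. The key preliminary step I would carry out is to sharpen the Taylor information already recorded in the excerpt: writing $1-(1-a)^2=a(2-a)$ and using $\arccos(1-a)=\sqrt{2a}\,(1+a/12+O(a^2))$, one obtains
\[
h(a)=\frac{2\sqrt2}{3\pi}\,a^{3/2}+\frac{\sqrt2}{30\pi}\,a^{5/2}+O(a^{7/2}),
\]
so there are absolute constants $\theta,C>0$ with $\frac{2\sqrt2}{3\pi}a^{3/2}\le h(a)\le \frac{2\sqrt2}{3\pi}a^{3/2}+Ca^{5/2}$ for $a\in[0,\theta]$. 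What matters is that $\frac{2\sqrt2}{3\pi}$ is the \emph{sharp} cubic coefficient — the crude upper bound $h(a)\le a^{3/2}/\pi$ from the excerpt is too lossy — and that it satisfies the exact relation $\frac{2\sqrt2}{3\pi}\sqrt{9\pi^2/2}=2$.

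Next I would change variables to $u_n:=e_n^{-1/2}$. From the upper bound $e_n\le e_{n-1}-\frac{2\sqrt2}{3\pi}e_{n-1}^{3/2}$ (equivalently the already-established $|1-K_n|\le C/n^2$) and $(1-x)^{-1/2}\ge 1+x/2$ one gets $u_n\ge u_{n-1}+\frac{\sqrt2}{3\pi}$ for $n\ge 2$, hence $u_n\ge \frac{\sqrt2}{3\pi}n$ and $e_n\le \frac{9\pi^2}{2n^2}$ for all $n\ge1$. Fix an absolute constant $N_0$ with $e_{n-1}\le\theta$ for all $n\ge N_0+1$. For such $n$, the lower bound $e_n\ge e_{n-1}\bigl(1-\frac{2\sqrt2}{3\pi}\sqrt{e_{n-1}}-Ce_{n-1}\bigr)$ together with $(1-x)^{-1/2}\le 1+\frac x2+x^2$ yields the near-additive recursion $u_n\le u_{n-1}+\frac{\sqrt2}{3\pi}+\frac{C_*}{u_{n-1}}$ for an absolute constant $C_*$. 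Telescoping from $N_0$ to $n$ and bounding $\sum_{\ell=N_0+1}^{n}u_{\ell-1}^{-1}\le\frac{3\pi}{\sqrt2}\sum_{k=N_0}^{n-1}k^{-1}\le \frac{3\pi}{\sqrt2}(1+\ln n)$ via $u_{\ell-1}\ge\frac{\sqrt2}{3\pi}(\ell-1)$ gives
\[
u_n\le \frac{\sqrt2}{3\pi}\,n+c_1(\delta)+c_2\ln n,\qquad n\ge N_0,
\]
with $c_2$ absolute and $c_1(\delta)$ absorbing $u_{N_0}$ (which is controlled because $e_{N_0}\ge 1-\hat{\sigma}^{\circ N_0}(1-\delta)=:\eta(\delta)>0$). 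Squaring and using $(\frac{\sqrt2}{3\pi})^{-2}=\frac{9\pi^2}{2}$ gives $e_n\ge \frac{9\pi^2}{2}\bigl(n+\frac{3\pi}{\sqrt2}c_1(\delta)+\frac{3\pi}{\sqrt2}c_2\ln n\bigr)^{-2}$.

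It then remains to pick the parameters. Take $\epsilon=\log(L)^p/L$ with any fixed $p>0$, and $n_0:=\big\lceil \frac{3\pi}{\sqrt2}c_1(\delta)+\frac{3\pi}{\sqrt2}c_2\ln L+\sqrt{9\pi^2/(2\eta(\delta))}\big\rceil$, which for large $L$ is $\Theta_\delta(\log L)$ — a mild $L$-dependence that can, if one insists on $n_0\le\log(L)^p$, be hidden by enlarging $p$. For $N_0\le n\le L$ one then has $n_0\ge \frac{3\pi}{\sqrt2}c_1(\delta)+\frac{3\pi}{\sqrt2}c_2\ln n$, hence $(n+n_0)^{2+\epsilon}\ge(n+n_0)^2\ge\bigl(n+\frac{3\pi}{\sqrt2}c_1(\delta)+\frac{3\pi}{\sqrt2}c_2\ln n\bigr)^2$, so $K_n=1-e_n\le 1-\frac{9\pi^2}{2(n+n_0)^{2+\epsilon}}$; for the finitely many $n<N_0$, the term $\sqrt{9\pi^2/(2\eta(\delta))}$ in $n_0$ ensures $\frac{9\pi^2}{2(n+n_0)^{2+\epsilon}}\le\eta(\delta)\le e_n$, so the bound holds there as well.

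I expect the main obstacle to be the first step: extracting the exact cubic coefficient $\frac{2\sqrt2}{3\pi}$ of $h$ with a controlled $O(a^{5/2})$ remainder, since everything downstream — in particular the cancellation $\frac{2\sqrt2}{3\pi}\sqrt{9\pi^2/2}=2$ that makes $u_n$ grow with slope exactly $\frac{\sqrt2}{3\pi}$ and hence produces the constant $9\pi^2/2$ — hinges on it. A secondary nuisance is that the $O(1/u_{n-1})$ corrections accumulate to a $\Theta(\ln n)$ term; this is why $n_0$ must be allowed to grow logarithmically in $L$ and why the extra $\epsilon$ in the exponent is simply harmless slack rather than something the argument has to work for.
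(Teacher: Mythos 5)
Your change of variables $u_n=e_n^{-1/2}$ and the telescoping of the near-additive recursion $u_n\le u_{n-1}+\frac{\sqrt2}{3\pi}+\frac{C_*}{u_{n-1}}$ is cleaner and substantially more rigorous than the paper's own argument, which only compares $\mathcal{O}(n^{-3-\varepsilon})$ with $\mathcal{O}(n^{-3-3\varepsilon/2})$ and does not track the constants. Tracking the constants is essential here precisely because of the cancellation you identify, $\frac{2\sqrt2}{3\pi}\sqrt{9\pi^2/2}=2$: at leading order the candidate curve $1-\frac{9\pi^2}{2(n+n_0)^2}$ is \emph{exactly} tangent to the dynamics, so the whole question is decided by subleading terms, and your computation shows the relevant subleading term is the $\tfrac{1}{\sqrt2\pi}\ln n$ drift in $u_n$.

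However, there is a real gap in the final parameter-matching, and it is not one you can wave away. Your lower bound is $e_n\ge\frac{9\pi^2}{2}\bigl(n+c_1'(\delta)+c_2'\ln n\bigr)^{-2}$ with $c_2'>0$, so to conclude $e_n\ge\frac{9\pi^2}{2(n+n_0)^{2+\varepsilon}}$ you need $(n+n_0)^{2+\varepsilon}\ge\bigl(n+c_1'+c_2'\ln n\bigr)^{2}$ uniformly for $0\le n\le L$. You achieve this by setting $n_0\ge c_1'+c_2'\ln L$, i.e.\ $n_0=\Theta_\delta(\log L)$, and then using $(n+n_0)^\varepsilon\ge1$ to drop the $\varepsilon$ altogether. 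That proves a \emph{modified} lemma with $n_0=n_0(\delta,L)$. It does not prove the lemma as stated, which requires $n_0=n_0(\delta)$ to be independent of $L$. Your remark that the $L$-dependence ``can be hidden by enlarging $p$'' is not correct: if $n_0$ is a constant, then for $n\asymp\sqrt{L}$ the slack from the exponent is $\varepsilon\ln(n+n_0)\asymp\frac{(\log L)^{p+1}}{2L}$, while the deficit you must absorb is $\frac{2c_2'\ln n}{n}\asymp\frac{c_2'\ln L}{\sqrt{L}}$, and the former is $o$ of the latter for every fixed $p$. So neither raising $p$ nor any fixed-$\varepsilon$ tweak closes the gap; the $\log L$ shift in $n_0$ is genuinely necessary given your (tight) asymptotics.

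To be clear about where this leaves things: the issue you have surfaced is in the lemma's statement, not only in your proof. The paper's own inductive sketch does not establish $L$-independence of $n_0$ either, and the very next lemma in the paper (Lemma~\ref{lem:skt:bounds}) already uses additive shifts of size $\log(L)^p$, so the ``$n_0=n_0(\delta)$'' in the present lemma appears to be an overstatement that the downstream argument does not actually need. The constructive fix, which your proposal essentially supplies, is to restate the lemma with $n_0\lesssim\log L$ (or to fold that shift into the $\log(L)^p$ term) and to note explicitly that the $\ln n$ correction to $u_n$ is why no $L$-independent $n_0$ can work.
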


\begin{proof}

First, solve $K_0\leq1-\frac{9\pi^2}{2 n^{2+\frac{\log(L)^p}{L}}}$. Then we can choose $n_0\geq\sqrt{\frac{9\pi^2}{2\delta}}\geq \sqrt{\frac{9\pi^2}{2(1-K_0)}}$, which is independent of $L$ and $n$. For the rest of the proof, without loss of generality, we just use $n$ instead of $n+n_0$. Also for small $\delta$( when $\delta$ is not small enough we can pick a small $\delta_0<\delta$ and let $n_0\geq\sqrt{\frac{9\pi^2}{2\delta_0}}$), we have $\frac{9\pi^2}{2 (n+n_0)^{2+\frac{\log(L)^p}{L}}}\leq\delta(\mathrm{or}\ \delta_0)$ which is also small.

Let $K_n=1-\epsilon$. Then, when $\epsilon$ is small, we have
\begin{align*}
K_{n+1}-K_n=\hat{\sigma}(K_n)-K_n= \mathcal{O}(\epsilon^{3/2}).
\end{align*}
Also, we have
\begin{align*}
\left(1-\frac{9\pi^2}{2 (n+1)^{2+\frac{\log(L)^p}{L}}}\right)-\left(1-\frac{9\pi^2}{2 n^{2+\frac{\log(L)^p}{L}}}\right)=\mathcal{O}\left(\frac{1}{n^{3+\frac{\log(L)^p}{L}}}\right)\\
\geq \mathcal{O}\left(\left(\frac{1}{n^{2+\frac{\log(L)^p}{L}}}\right)^{3/2}\right)=\mathcal{O}\left(\frac{1}{n^{3+\frac{3\log(L)^p}{2L}}}\right).
\end{align*} 
Overall, we want an upper bound for $K_n$ and from the above we only know that $K_n$ is of order $1-\cO(n^{-2})$ but this order may hide some terms of logarithmic order. Hence we use the order $1-\cO(n^{-(2+\epsilon)})$ to provide an upper bound of $K_n$. Here $\frac{\log(L)^p}{L}$ is constructed for the convenience of the rest of the proof.
\end{proof}

Let $N_0=N_0(L)$ be the solution of 
\begin{align*}
\cos \left(\pi  \left(1-\left(\frac{n+1}{n+2}\right)^{3-\frac{\log(L)^2}{L}}\right)\right)=\hat{\sigma}\left(\cos \left(\pi  \left(1-\left(\frac{n}{n+1}\right)^{3-\frac{\log(L)^2}{L}}\right)\right)\right), 
\end{align*}
where for $N_0<n<N_L$ with some $N_L,$ we have $$\cos \left(\pi  \left(1-\left(\frac{n+1}{n+2}\right)^{3-\frac{\log(L)^2}{L}}\right)\right)\geq\hat{\sigma}\left(\cos \left(\pi  \left(1-\left(\frac{n}{n+1}\right)^{3-\frac{\log(L)^2}{L}}\right)\right)\right).$$  One can check by series expansion that $N_0=N_0(L)\leq 5\frac{L}{\log(L)^2}$. 

Next we would like to find $n$ such that 
\begin{align*}
K_n=\cos \left(\pi  \left(1-\left(\frac{5\frac{L}{\log(L)^2}}{5\frac{L}{\log(L)^2}+1}\right)^{3-\frac{\log(L)^2}{L}}\right)\right). 
\end{align*}
By series expansion, we know 
\begin{align*}
\cos \left(\pi  \left(1-\left(\frac{5\frac{L}{\log(L)^2}}{5\frac{L}{\log(L)^2}+1}\right)^{3-\frac{\log(L)^2}{L}}\right)\right)\geq 1-\frac{9\pi^2}{2 \left(\frac{5L}{\log(L)^2}\right)^2}. 
\end{align*}
Then it suffices to solve 
\begin{align}
\label{findn}
1-\frac{9\pi^2}{2 (\frac{5L}{\log(L)^2})^2}\geq 1-\frac{9\pi^2}{2 n^{2+\frac{\log(L)^p}{L}}} \geq K_n,\ i.e.,\ n^{2+\frac{\log(L)^p}{L}}\leq \left(\frac{5L}{\log(L)^2}\right)^2. 
\end{align}

\begin{lem}
When $q>p-1$, we have $n\lesssim \frac{5L}{\log(L)^2}-\log(L)^q$ satisfies (\ref{findn}). 
\end{lem}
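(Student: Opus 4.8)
The plan is to take logarithms in \eqref{findn} and expand the resulting threshold to leading order in the vanishing quantity $\eta := \log(L)^p/L$. Writing $M := \frac{5L}{\log(L)^2}$, inequality \eqref{findn} is exactly $n^{2+\eta}\le M^2$, i.e. $(2+\eta)\log n\le 2\log M$, i.e.
\[
 \log n \;\le\; \log M - \frac{\eta}{2+\eta}\,\log M .
\]
Since $n\mapsto n^{2+\eta}$ is strictly increasing, this characterization is an exact equivalence, not merely a sufficient condition. First I would record the elementary facts that $\log M = \log L - 2\log\log L + \log 5$, so $\tfrac12\log L \le \log M \le \log L$ for $L$ large, and that $\eta\log M \le \log(L)^{p+1}/L \to 0$. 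Hence $\frac{\eta}{2+\eta}\log M = \frac{\eta\log M}{2}\,(1+o(1)) = \frac{\log(L)^{p+1}}{2L}\,(1+o(1))$, and exponentiating (using $e^{-t}=1-t+O(t^2)$ for the exponent $t := \frac{\eta}{2+\eta}\log M\to 0$) shows that \eqref{findn} is equivalent to
\[
 n \;\le\; M\Bigl(1-\tfrac{\log(L)^{p+1}}{2L}(1+o(1))\Bigr) \;=\; M - \tfrac{5}{2}\,\log(L)^{p-1}\,(1+o(1)),
\]
where in the last step I use $M\cdot\frac{\log(L)^{p+1}}{2L} = \frac{5}{2}\log(L)^{p-1}$.

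Thus the admissible range is precisely $n \le M - \Theta(\log(L)^{p-1})$. To conclude, I would observe that if $q>p-1$ then $\log(L)^{q-(p-1)}\to\infty$, so for all sufficiently large $L$ we have $\log(L)^q \ge \tfrac{5}{2}\log(L)^{p-1}(1+o(1))$; consequently any $n$ with $n\le M - \log(L)^q = \frac{5L}{\log(L)^2}-\log(L)^q$ also satisfies the displayed threshold, hence \eqref{findn}. This is the assertion of the lemma, the implicit multiplicative slack being exactly what the symbol $\lesssim$ absorbs.

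The main obstacle is the bookkeeping of the nested logarithmic corrections: one must check that the Taylor remainder $O(t^2)=O(\log(L)^{2(p+1)}/L^2)$ is negligible next to the first-order term $\Theta(\log(L)^{p+1}/L)$ — which it is, since their ratio is $\log(L)^{p+1}/L\to 0$ — and that replacing $\log M$ by $\log L$ in the exponent only perturbs the correction by a factor $1+o(1)$ rather than by a term that could swamp $\log(L)^{p-1}$. The strict inequality $q>p-1$ (as opposed to $q\ge p-1$) is precisely what provides the room to absorb all of these $1+o(1)$ factors, so the hypothesis is essential and not merely cosmetic.
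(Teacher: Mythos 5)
Your proof is correct and follows essentially the same route as the paper's: you both expand the power $n^{2+\log(L)^p/L}$ (you via logarithms and $e^{-t}=1-t+O(t^2)$, the paper by directly bounding $(\cdot)^{\log(L)^p/(2L)}\le 1+\tfrac{\log(L)^p}{2L}\log(\tfrac{5L}{\log(L)^2})$) and identify the leading correction to the threshold as $\tfrac{5}{2}\log(L)^{p-1}(1+o(1))$, so that $q>p-1$ is precisely what is needed for $\log(L)^q$ to dominate it. Your log/exp bookkeeping is a clean reorganization of the same computation and adds the useful remark that the condition is, up to $1+o(1)$ factors, sharp rather than merely sufficient.
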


\begin{proof}
If the condition above holds, we have 
\begin{align*}
n^{2+\frac{\log(L)^p}{L}}\leq \left(\frac{5L}{\log(L)^2}-\log(L)^q\right)^{2+\frac{\log(L)^p}{L}}, 
\end{align*}
which is
\begin{align*}
n^{1+\frac{\log(L)^p}{2L}}\leq & \left(\frac{5L}{\log(L)^2}-\log(L)^q\right) \left(\frac{5L}{\log(L)^2}-\log(L)^q\right)^{\frac{\log(L)^p}{2L}} \\ 
\leq &\left(\frac{5L}{\log(L)^2}-\log(L)^q\right)\left(1+\frac{\log(L)^p\log(\frac{5L}{\log(L)^2})}{2L}\right)\\ 
= &\frac{5L}{\log(L)^2}-\log(L)^q+\frac{5}{2}\log(L)^{p-2}\log\left(\frac{5L}{\log(L)^2}\right)-\frac{1}{2L}\log(L)^{p+q}\log\left(\frac{5L}{\log(L)^2}\right),
\end{align*}
where $\left(\frac{5L}{\log(L)^2}-\log(L)^q\right)^{\frac{\log(L)^p}{2L}}\to1$ as $L\to\infty$.

Thus we have $q>p-1$. 
\end{proof}
Just pick $q=p$. Then we have $n^{1+\frac{\log(L)^p}{2L}}\lesssim \frac{5L}{\log(L)^2}$ and $n\lesssim \frac{5L}{\log(L)^2}-\log(L)^p$.

\begin{lem}
\label{boundsfork}
When $L$ is large enough, we have
\begin{align*}
\cos \left(\pi  \left(1-\left(\frac{n}{n+1}\right)^{3+\frac{\log(L)^2}{L}}\right)\right)\leq K_n \leq \cos \left(\pi  \left(1-\left(\frac{n+\log(L)^p}{n+\log(L)^p+1}\right)^{3-\frac{\log(L)^2}{L}}\right)\right).
\end{align*}
\end{lem}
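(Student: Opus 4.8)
The plan is to prove both inequalities by induction on $n$, using that $\hat\sigma$ is increasing (its derivative $\widehat{\sigma'}$ is nonnegative, so a bound on $K_{n-1}$ propagates through $K_n=\hat\sigma(K_{n-1})$), and to work with the defect $e_n:=1-K_n$, for which \eqref{xl} gives $e_{n-1}-\tfrac1\pi e_{n-1}^{3/2}\le e_n\le e_{n-1}-\tfrac{2\sqrt2}{3\pi}e_{n-1}^{3/2}$ and, once $e_{n-1}$ is small, the sharper one-step expansion $e_n=e_{n-1}-\tfrac{2\sqrt2}{3\pi}e_{n-1}^{3/2}+\cO(e_{n-1}^{5/2})$. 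Writing $r_{+}=3+\tfrac{\log(L)^2}{L}$ and $r_{-}=3-\tfrac{\log(L)^2}{L}$, the lower bound on $K_n$ is the same as $e_n\le E_n^{+}:=1-\cos(\pi(1-(\tfrac{n}{n+1})^{r_{+}}))$ and the upper bound is the same as $e_n\ge E_n^{-}:=1-\cos(\pi(1-(\tfrac{n+\log(L)^p}{n+\log(L)^p+1})^{r_{-}}))$.

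For the lower bound, since $t\mapsto t-\tfrac{2\sqrt2}{3\pi}t^{3/2}$ is increasing on $[0,2]$ (there $\tfrac{\sqrt2}{\pi}\sqrt t\le\tfrac2\pi<1$), the right half of \eqref{xl} and the inductive hypothesis reduce everything to the purely deterministic recursion $E_{n-1}^{+}-\tfrac{2\sqrt2}{3\pi}(E_{n-1}^{+})^{3/2}\le E_n^{+}$, with the trivial base case $e_0=1-K_0\le 2=E_0^{+}$. A Taylor expansion gives $E_n^{+}=\tfrac{\pi^2 r_{+}^2}{2n^2}-\tfrac{\pi^2 r_{+}^2(r_{+}+1)}{2n^3}+\cO(n^{-4})$ while $E_{n-1}^{+}-\tfrac{2\sqrt2}{3\pi}(E_{n-1}^{+})^{3/2}=\tfrac{\pi^2 r_{+}^2}{2n^2}-\tfrac{\pi^2 r_{+}^2}{2n^3}\big((r_{+}-1)+\tfrac{2r_{+}}{3}\big)+\cO(n^{-4})$, so the inequality at order $n^{-3}$ is exactly $\tfrac{2r_{+}}{3}\ge 2$, i.e.\ $r_{+}\ge 3$ --- the bare exponent $3$ is borderline, which is the reason for the $+\tfrac{\log(L)^2}{L}$. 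The surplus it buys at that order is $\Theta\!\big(\tfrac{\log(L)^2}{L\,n^3}\big)$, which beats the $\cO(n^{-4})$ remainder uniformly over $1\le n\le L$ because $\tfrac{\log(L)^2}{L}\gg\tfrac1n$; the finitely many small $n$ (where $e_n$ is not small and the expansion is inapplicable) are checked by hand, the flow cutting $e$ by a bounded amount while $E_n^{+}$ decreases only polynomially.

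The upper bound is handled the same way but needs two extra ingredients. First, the coefficient $\tfrac1\pi$ in the left half of \eqref{xl} is not sharp enough to close the matching $n^{-3}$ inequality for an exponent near $3$, so one instead uses $e_n\ge e_{n-1}-\tfrac{2\sqrt2}{3\pi}e_{n-1}^{3/2}-\cO(e_{n-1}^{5/2})$; the symmetric computation then yields the order-$n^{-3}$ condition $\tfrac{2r_{-}}{3}\le 2$, i.e.\ $r_{-}\le 3$, and the $-\tfrac{\log(L)^2}{L}$ provides the slack. Second, this expansion is valid only once $e_n$ is small, so the induction must be seeded at some index $n^*$ with $e_{n^*}\ge E_{n^*}^{-}$, after which monotonicity propagates $e_n\ge E_n^{-}$ up to $n=L$. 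For this I would use the bound $N_0(L)\le 5L/\log(L)^2$ and the subsequent computation, which locate the seed index as $\lesssim \tfrac{5L}{\log(L)^2}-\log(L)^p$; the shift $\log(L)^p$ in $E_n^{-}$, together with the already-proven polynomial bound $K_n\le 1-\tfrac{9\pi^2}{2(n+n_0)^{2+\log(L)^p/L}}$ used for $0\le n\le n^*$ (a direct comparison of two explicit functions, exploiting $r_{-}^2<9$ to absorb the slowly growing factor $(n+n_0)^{\log(L)^p/L}$ and $\log(L)^p>n_0$ for $L$ large), is calibrated exactly so that $E_{n^*}^{-}$ is small enough to lie below $e_{n^*}$. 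The main obstacle throughout is the borderline exponent $3$: each one-step estimate must be carried to the $n^{-3}$ term and the vanishing slack $\tfrac{\log(L)^2}{L}$ must be tracked against all remainders uniformly in $n\le L$, and the seam between the far-from-$1$ small-$n$ regime and the asymptotic regime must be matched, which is precisely the purpose of $N_0(L)$, the index $n^*$, and the free parameter $p$.
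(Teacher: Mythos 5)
Your overall strategy---comparison/forward induction for both inequalities, with the sharp $\tfrac{2\sqrt2}{3\pi}e^{3/2}$ one-step expansion, the a priori bound $K_n\le 1-\tfrac{9\pi^2}{2(n+n_0)^{2+\log(L)^p/L}}$ to seed the upper-bound induction, and the threshold $N_0\lesssim 5L/\log(L)^2$ with the shift $\log(L)^p$ to calibrate the seed index---is exactly the paper's. Rewriting the recursion in terms of the defect $e_n=1-K_n$ is cosmetic, and your observation that the crude coefficient $\tfrac1\pi$ in \eqref{xl} is not sharp enough to close the matching $n^{-3}$ comparison for the upper bound (so that one must pass to the finer $\hat\sigma(1-e)=1-e+\tfrac{2\sqrt2}{3\pi}e^{3/2}+\cO(e^{5/2})$) is a genuine clarification that the paper leaves implicit.

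There is, however, one flawed step in your lower-bound argument. You claim the surplus $\Theta\!\bigl(\tfrac{\log(L)^2}{L\,n^3}\bigr)$ at order $n^{-3}$ ``beats the $\cO(n^{-4})$ remainder uniformly over $1\le n\le L$ because $\tfrac{\log(L)^2}{L}\gg\tfrac1n$.'' This comparison is reversed for $n\lesssim L/\log(L)^2$, which is not ``finitely many small $n$'' but a range of $\Theta(L/\log(L)^2)$ indices that grows with $L$; the fallback you propose (absorbing a bounded number of hand-checked steps near $n=0$) does not cover it. The correct resolution, which the paper states explicitly, is that the $n^{-4}$ remainder is itself \emph{negative} in the favorable direction for the lower bound: the paper computes $\cos\bigl(\pi(1-(\tfrac{n+1}{n+2})^{r_+})\bigr)-\hat\sigma\bigl(\cos(\pi(1-(\tfrac nn_{+1})^{r_+}))\bigr)\sim -\tfrac{27\pi^2}{2n^4}-\tfrac{3\pi^2\log(L)^2}{n^3L}$, so both leading terms have the right sign and no ``surplus-beats-remainder'' comparison is needed on that side. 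You do not need that observation for the upper bound, where the surplus argument only has to work for $n\gtrsim N_0\approx 5L/\log(L)^2$ (precisely where $\tfrac{\log(L)^2}{L}\gtrsim\tfrac1n$ does hold), and the small-$n$ regime is already covered by the a priori polynomial bound; but for the lower bound your stated justification does not close, and the missing ingredient is the sign of the quartic term.
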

\begin{proof}
Let $F(n)=\cos \left(\pi  \left(1-\left(\frac{n+\log(L)^p}{n+\log(L)^p+1}\right)^{3-\frac{\log(L)^p}{L}}\right)\right)$. 

For the right hand side, when $n\gtrsim \frac{5L}{\log(L)^2}-\log(L)^p$, we have, by series expansion, $F(n+1) \geq \hat{\sigma}\left( F(n)\right)$. Also, when $n\sim a L$, where $0<a\leq1$, we have $$F(n+1)-\hat{\sigma}(F(n))= \mathcal{O}\left(\frac{3 \left(2 \pi ^2 \text{a} \log ^{10}(L)+\pi ^2 \log ^8(L)\right)}{2 L^4 \left(\text{a} \log ^2(L)+5\right)^4}\right)>0.$$ Then for $\frac{5L}{\log(L)^2}-\log(L)^p\lesssim n\lesssim L$, we have $F(n+1)\geq\hat{\sigma}\left( F(n)\right)$ and thus $K_n\leq F(n)$. 

When $n\lesssim \frac{5L}{\log(L)^2}-\log(L)^p$, we have $F(n+1) \leq \hat{\sigma}\left( F(n)\right)$. Hence $K_n \leq F(n)$.

For the left hand side, 
\begin{align*}
\cos \left(\pi  \left(1-\left(\frac{n+1}{n+2}\right)^{3+\frac{\log(L)^2}{L}}\right)\right)-\hat{\sigma}\left(\cos \left(\pi  \left(1-\left(\frac{n}{n+1}\right)^{3+\frac{\log(L)^2}{L}}\right)\right)\right)\\ 
\sim -\frac{27\pi^2}{2n^4}-\frac{3\pi^2\log(L)^2}{n^3L},\ \forall n=1,...,L. 
\end{align*}
Hence we have the left hand side.
\end{proof}

From Lemma~\ref{boundsfork}, by series expansion, we have 
\begin{align*}
|1-K_n|\leq \frac{\left(3\pi+\frac{\pi\log(L)^2}{L}\right)^2}{2n^2}\sim \frac{9\pi^2}{2n^2},
\end{align*}
when $L$ is large.

Moreover, we can get 
\begin{align*}
\left(\frac{n}{n+1}\right)^{3+\frac{\log(L)^2}{L}}\leq \Gamma_{\sigma'}(K_n)\leq \left(\frac{n+\log(L)^p}{n+\log(L)^p+1}\right)^{3-\frac{\log(L)^2}{L}}.
\end{align*}
Then 
\begin{align*}
\left(\frac{\ell -1}{L}\right)^{3+\frac{\log(L)^2}{L}}\leq\prod_{i=\ell }^{L}\Gamma_{\sigma'}(K_{i-1})\leq\left(\frac{\ell +\log(L)^p-1}{L+\log(L)^p}\right)^{3-\frac{\log(L)^2}{L}}.
\end{align*}

Let $N=\log(L)^p$. For the right hand side, if we sum over $\ell$, we have
\begin{align*}
\frac{1}{L}\sum_{\ell =1}^{L} \left(\frac{\ell +N-1}{L+N}\right)^{3-\frac{\log(L)^2}{L}}  &\leq\frac{1}{L}\int_1^{L+1} \left(\frac{x+N-1}{L+N}\right)^{3-\frac{\log(L)^2}{L}}dx\\
& =\frac{\left(\left(L+N\right)^{4-\frac{\log(L)^2}{L}}-\left(N\right)^{4-\frac{\log(L)^2}{L}}\right)}{L(L+N)^{3-\frac{\log(L)^2}{L}}\left(4-\frac{\log(L)^2}{L}\right)}.
\end{align*}

Taking the limit of both sides, we have
\begin{align*}
\lim_{L\to\infty}\frac{1}{L}\sum_{\ell =1}^L \left(\frac{\ell +N-1}{L+N}\right)^{3-\frac{\log(L)^2}{L}}\leq \frac{1}{4}.
\end{align*} 

Similarly, by 
\begin{align*}
\frac{1}{L}\sum_{i=1}^L \left(\frac{\ell -1}{L}\right)^{3+\frac{\log(L)^2}{L}}\geq\frac{1}{L}\int_1^{L} \left(\frac{x-1}{L}\right)^{3+\frac{\log(L)^2}{L}}dx=\frac{\left(L-1\right)^{4+\frac{\log(L)^2}{L}}}{\left(4+\frac{\log(L)^2}{L}\right)L^{4+\frac{\log(L)^2}{L}}},
\end{align*}
we have 
\begin{align*}
\lim_{L\to\infty}\frac{1}{L}\sum_{i=1}^L \left(\frac{\ell -1}{L}\right)^{3 + \frac{\log(L)^2}{L}}\geq \frac{1}{4}.
\end{align*} 

Hence, 
\begin{align*}
\lim_{L\to\infty}\frac{1}{L}\sum_{\ell =1}^{L}& \left(\frac{\ell +N-1}{L+N}\right)^{3-\frac{\log(L)^2}{L}}=\lim_{L\to\infty}\frac{1}{L}\sum_{\ell=1}^L \left(\frac{\ell -1}{L}\right)^{3+\frac{\log(L)^2}{L}}\\
 &= \lim_{L\to\infty}\frac{1}{L}\sum_{\ell=1}^L\prod_{i=\ell }^{L}\Gamma_{\sigma'}(K_{i-1})=\frac{1}{4}.
\end{align*}

Recall from previous discussion, $K_\ell =1-\mathcal{O}(\frac{1}{\ell ^2})$. Therefore,
\begin{align*}
\lim_{L\to\infty}\frac{1}{L}\sum_{\ell=1}^LK_{\ell -1}\prod_{i=\ell }^{L}\Gamma_{\sigma'}(K_{i-1})=\frac{1}{4}.
\end{align*}

Also, when $L$ is large, we have 
\begin{align*}
\frac{\left(\left(L+N\right)^{4-\frac{\log(L)^2}{L}}-\left(N\right)^{4-\frac{\log(L)^2}{L}}\right)}{L(L+N)^{3-\frac{\log(L)^2}{L}}\left(4-\frac{\log(L)^2}{L}\right)}>\frac{1}{4}>\frac{(L-1)^{4+\frac{\log(L)^2}{L}}}{\left(4+\frac{\log(L)^2}{L}\right)L^{4+\frac{\log(L)^2}{L}}}.
\end{align*}

Hence we can estimate the convergence rate of the normalized kernel
\begingroup
\allowdisplaybreaks
\begin{align*}
\bigg|\frac{1}{L}\sum_{\ell =1}^LK_{\ell -1}\prod_{i=\ell }^{L}\Gamma_{\sigma'}(K_{i-1})-\frac{1}{4}\bigg|&=\bigg|\frac{1}{L}\sum_{\ell =1}^L\left(K_{\ell -1}\left(\prod_{i=\ell }^{L}\Gamma_{\sigma'}(K_{i-1})-\frac{1}{4}\right)+\frac{1}{4}(K_{\ell -1}-1)\right)\bigg|\\
&\leq \bigg|\frac{1}{L}\sum_{\ell =1}^L\prod_{i=\ell }^{L}\Gamma_{\sigma'}(K_{i-1})-\frac{1}{4}\bigg|+\frac{1}{4}\bigg|\frac{1}{L}\sum_{\ell =1}^{L}(K_{\ell -1}-1)\bigg|\\
&\leq \left|\frac{\left(\left(L+N\right)^{4-\frac{\log(L)^2}{L}}-\left(N\right)^{4-\frac{\log(L)^2}{L}}\right)}{L(L+N)^{3-\frac{\log(L)^2}{L}}\left(4-\frac{\log(L)^2}{L}\right)}-\frac{(L-1)^{4+\frac{\log(L)^2}{L}}}{\left(4+\frac{\log(L)^2}{L}\right)L^{4+\frac{\log(L)^2}{L}}}\right|\\
& \qquad\qquad +\frac{1}{4}\bigg|\frac{1}{L}\sum_{i=1}^{L}(K_{\ell -1}-1)\bigg|\\
& \lesssim \frac{4\log(L)^p+\log(L)^2}{16L}=\mathcal{O}\left(\frac{\mathrm{poly} \log(L))}{L}\right)
\end{align*}
\endgroup
\end{proof}


\section{Proof of Theorem \ref{thm:resntkinf}}
\label{proof:limitingrnntk}

\begin{proof}
We denote $K_{\ell ,L}$ to be the $\ell$-th layer of $K$ when the depth is $L$, which is originally denoted by $K_\ell $.

Let $S_{\ell ,L}=\frac{K_{\ell ,L}}{(1+\alpha^2)^\ell}=\frac{K_{\ell ,L}}{(1+1/L^2)^\ell}$ and $S_0=K_0$, then $\Gamma_\sigma(K_{\ell ,L})=(1+\alpha^2)^\ell\hat{\sigma}(S_{\ell ,L})$ and $\Gamma_{\sigma'}(K_{\ell ,L})=\widehat{\sigma'}(S_{\ell ,L})$. Hence we can rewrite the recursion to be
\begin{equation}
S_{\ell ,L}=\frac{S_{\ell -1,L}+\alpha^2\hat{\sigma}(S_{\ell -1,L})}{(1+\alpha^2)}\geq S_{\ell -1,L}.
\end{equation}

Moreover, since $S_{\ell,L}-S_{\ell-1,L}=\frac{\alpha^2}{1+\alpha^2}(\hat{\sigma}(S_{\ell -1,L})-S_{\ell-1,L})$ and $(\hat{\sigma}(S_{\ell -1,L})-S_{\ell-1,L})$ is decreasing, we can have $$S_{\ell ,L}\leq S_0+\frac{(\hat{\sigma}(S_0)-S_0)\ell}{L^2}.$$

Denote $P_{\ell +1,L}=B_{\ell +1,L}(1+\alpha^2)^{-(L-\ell )}=\prod_{i=\ell }^{L-1}\frac{1+\alpha^2\widehat{\sigma'}(S_{i,L})}{1+\alpha^2}$. 
Since 
\begin{align*}
1-\frac{1+\alpha^2\widehat{\sigma'}(S_{i,L})}{1+\alpha^2}=\frac{\alpha^2(1-\widehat{\sigma'}(S_{i,L}))}{1+\alpha^2}=\frac{1-\widehat{\sigma'}(S_{i,L})}{L^2+1},
\end{align*}
we have 
\begin{align*}
1-P_{\ell +1,L}=1-\prod_{i=\ell }^{L-1} \bigg(1-\frac{1-\widehat{\sigma'}(S_{i,L})}{L^2+1}\bigg)\leq \sum_{i=\ell }^{L-1}\frac{1-\widehat{\sigma'}(S_{i,L})}{L^2+1}=\frac{L-\ell -\sum_{i=\ell }^{L-1}\widehat{\sigma'}(S_{i,L})}{L^2+1},
\end{align*}
where $ \ell=1,\dots,L-1.$
For $P_{L+1,L}$, we have $1-P_{L+1,L}=0$.

Then we can rewrite the normalized kernel to be 
\begin{align*}
\overline{\Omega}_L=\frac{1}{2L}\sum_{\ell=1}^L P_{\ell +1,L}(\hat{\sigma}(S_{\ell -1,L})+S_{\ell -1,L}\widehat{\sigma'}(S_{\ell -1,L})).
\end{align*}

Hence we have the bound for each layer 
\begingroup
\allowdisplaybreaks
\begin{align*}
\Big| P_{\ell +1,L}&(\hat{\sigma}(S_{\ell -1,L})+S_{\ell -1,L}\widehat{\sigma'}(S_{\ell -1,L}))-(\hat{\sigma}(S_{0})+S_{0}\widehat{\sigma'}(S_{0}))\Big|\\
&\leq \Big|P_{\ell +1,L}\Big|\cdot \Big|(\hat{\sigma}(S_{\ell -1,L})+S_{\ell -1,L}\widehat{\sigma'}(S_{\ell -1,L}))-(\hat{\sigma}(S_{0})+S_{0}\widehat{\sigma'}(S_{0}))\Big|+\Big|\hat{\sigma}(S_{0})+S_{0}\widehat{\sigma'}(S_{0})\Big|\cdot\Big|1-P_{\ell +1,L}\Big|\\
& \leq \Big|\widehat{\sigma'}(S_{\ell -1,L})(S_{\ell -1,L}-S_0)\Big|+\Big|\widehat{\sigma'}(S_{\ell -1,L})S_{\ell -1,L}-\widehat{\sigma'}(S_{0})S_0\Big|+\Big|\hat{\sigma}(S_{0})+S_{0}\widehat{\sigma'}(S_{0})\Big|\cdot\Big|1-P_{\ell +1,L}\Big|\\
& = 2\Big|\widehat{\sigma'}(S_{\ell -1,L})(S_{\ell -1,L}-S_0)\Big|+\Big|S_0(\widehat{\sigma'}(S_{\ell -1,L})-\widehat{\sigma'}(S_{0}))\Big|+\Big|\hat{\sigma}(S_{0})+S_{0}\widehat{\sigma'}(S_{0})\Big|\cdot\Big|1-P_{\ell +1,L}\Big|\\
& \leq \frac{2\widehat{\sigma'}(S_{\ell -1,L})(\hat{\sigma}(S_0)-S_0)\ell}{L^2}+\frac{|S_0|(\hat{\sigma}(S_0)-S_0)(\ell-1)}{\pi L^2\sqrt{1-S_{\ell -1,L}^2}}+\Big|\hat{\sigma}(S_{0})+S_{0}\widehat{\sigma'}(S_{0})\Big|\frac{L-\ell -\sum_{i=\ell }^{L-1}\widehat{\sigma'}(S_{i,L})}{L^2+1}\\
& \leq \frac{2\widehat{\sigma'}(S_{\ell -1,L})(\hat{\sigma}(S_0)-S_0)\ell}{L^2}+\frac{|S_0|(\hat{\sigma}(S_0)-S_0)(\ell-1)}{\pi L^2\sqrt{1-S_{\ell -1,L}^2}}+\Big|\hat{\sigma}(S_{0})+S_{0}\widehat{\sigma'}(S_{0})\Big|\frac{L-\ell -(L-\ell )\widehat{\sigma'}(S_0)}{L^2+1}.
\end{align*}
\endgroup

Therefore we have the bound for the normalized kernel
\begingroup
\allowdisplaybreaks
\begin{align*}
\bigg|\overline{\Omega}_L&-\frac{1}{2}(\hat{\sigma}(S_{0})+S_{0}\widehat{\sigma'}(S_{0}))\bigg|\\
&=\bigg|\frac{1}{2L}\sum_{\ell=1}^L\left( P_{\ell +1,L}(\hat{\sigma}(S_{\ell -1,L})+S_{\ell -1,L}\widehat{\sigma'}(S_{\ell -1,L}))\right)-\frac{1}{2}(\hat{\sigma}(S_{0})+S_{0}\widehat{\sigma'}(S_{0}))\bigg|\\
& \leq \frac{1}{2L}\sum_{\ell=1}^L\left(\frac{2\widehat{\sigma'}(S_{\ell -1,L})(\hat{\sigma}(S_0)-S_0)\ell}{L^2}+\frac{|S_0|(\hat{\sigma}(S_0)-S_0)(\ell-1)}{\pi L^2\sqrt{1-S_{\ell -1,L}^2}}\right)\\
& \qquad\qquad+ \frac{1}{2L}\sum_{\ell=1}^{L-1}\left(\Big|\hat{\sigma}(S_{0})+S_{0}\widehat{\sigma'}(S_{0})\Big|\frac{L-\ell -(L-\ell )\widehat{\sigma'}(S_0)}{L^2+1}\right)\\
&\leq \frac{1}{2L}\left(\frac{L+1}{L}(\hat{\sigma}(S_0)-S_0)+\frac{|S_0|(\hat{\sigma}(S_0)-S_0)L(L-1)}{2\pi L^2 C}+\Big|\hat{\sigma}(S_{0})+S_{0}\widehat{\sigma'}(S_{0})\Big|\frac{\frac{L(L-1)}{2}(1-\widehat{\sigma'}(S_0))}{L^2+1}\right)\\
&\sim\left(\frac{(\hat{\sigma}(S_0)-S_0)}{2}\left(1+\frac{|S_0|}{2\pi C}\right)+\frac{1}{2}\Big|\hat{\sigma}(S_{0})+S_{0}\widehat{\sigma'}(S_{0})\Big|(1-\widehat{\sigma'}(S_0)) \right)\frac{1}{L}
\end{align*}
\endgroup
where $C=C(\delta)=\sqrt{1-(1-\delta)^2}$ and $S_0=K_0$.
\end{proof}

\end{document}